\theoremstyle{plain}
\newtheorem{mydef}{Definition}
\newtheorem{myassm}{Assumption}
\newtheorem{mythm}{Theorem}
\newtheorem{mylem}{Lemma}
\newtheorem{mycor}{Corollary}
\newtheorem{myprop}{Proposition}
\def\tcr{\textcolor{red}}
\def\tcb{\textcolor{blue}}
\def\mcal{\mathcal}
\def\KL{\text{KL}}
\def\Pr{\text{Pr}}
\def\cvar{\text{CVaR}}
\def\Ebb{\mathbb{E}}
\def\Rbb{\mathbb{R}}
\def\Scal{\mathcal{S}}
\def\Acal{\mathcal{A}}
\def\qtargt{\frac{q^\pi_u(s_t) - c(s_t, a_t)}{\gamma}}
\def\pqtargt{p_{X^\pi \left(s_{t+1}\right)} \left( \frac{q^\pi_u(s_t) - c(s_t, a_t)}{\gamma}\right)}
\def\Fqtarg{F_{X^\pi \left( s' \right)} \left( \frac{q^\pi_u(s) - c(s, a)}{\gamma}\right)}
\def\Fqtargt{F_{X^\pi \left(s_{t+1}\right)} \left( \frac{q^\pi_u(s_t) - c(s_t, a_t)}{\gamma}\right)}
\title{Quantile Constrained Reinforcement Learning: \\A Reinforcement Learning Framework Constraining Outage Probability}
\author{%
  Whiyoung Jung, Myungsik Cho, Jongeui Park, Youngchul Sung\thanks{Corresponding author} \\
  School of Electrical Engineering, KAIST\\
  Daejeon 34141, Republic of Korea \\
  \texttt{\{wy.jung, ms.cho, jongeui.park, ycsung\}@kaist.ac.kr} \\
}
\begin{document}

\maketitle


\begin{abstract}
Constrained reinforcement learning (RL) is an area of RL whose objective is to find an optimal policy that maximizes expected cumulative return while satisfying a given constraint. Most of the previous constrained RL works consider expected cumulative sum cost as the constraint. However, optimization with this constraint cannot guarantee a target probability of outage event that the cumulative sum cost exceeds a given threshold. This paper proposes a framework, named Quantile Constrained RL (QCRL), to constrain the quantile of the distribution of the cumulative sum cost that is a necessary and sufficient condition to satisfy the outage constraint. This is the first work that tackles the issue of applying the policy gradient theorem to the quantile and provides theoretical results for approximating the gradient of the quantile. Based on the derived theoretical results and the technique of the Lagrange multiplier, we construct a constrained RL algorithm named Quantile Constrained Policy Optimization (QCPO). We use distributional RL with the Large Deviation Principle (LDP) to estimate quantiles and tail probability of the cumulative sum cost for the implementation of QCPO. The implemented algorithm satisfies the outage probability constraint after the training period.
\end{abstract}

\section{Introduction} \label{sec:introduction}

Reinforcement learning (RL) has been developed in the direction of finding an optimal policy that maximizes expected cumulative return for a given environment. Thus, most of the works in RL  consider only rewards given by the environment to optimize the policy. However,  many real-world control problems impose constraints on the behavior of a policy. Constrained RL is an area of RL whose objective is to find an optimal policy that maximizes expected cumulative return while satisfying a certain constraint on the cumulative cost.  A conventional constrained RL problem \eqref{problem:expectation} can be written as
\begin{equation}\tag{ExpCP}
    \begin{array}{ll} \text{Maximize} \quad &V^\pi(s_0) := \Ebb_{\pi}\left[ \sum^\infty_{t=0} \gamma^t r(s_t, a_t) \right] \\ \text{subject to} \quad &C^\pi(s_0) := \Ebb_{\pi}\left[ \sum^\infty_{t=0} \gamma^t c(s_t, a_t) \right] \leq d_{th}, \end{array} \label{problem:expectation}
\end{equation}
where the cost constraint is  that the expectation of the sum of costs is less than or equal to a threshold parameter  $d_{th}$. Note that the threshold $d_{th}$ is set on the average (i.e., expectation) of the cumulative sum cost to avoid undesired high-cost events  in this formulation.   If we do not  want any event causing a positive cost, $d_{th}$ should be set as a sufficiently small value, i.e., $d_{th} \approx 0$. 
On the other hand, if we can afford events with low costs, we can set $d_{th}$ properly as we desire.  
Most of the previous constrained RL works solved the problem \eqref{problem:expectation} \cite{achiam2017constrained, ding2021provably, liu2020ipo, turchetta2020safe, xu2021crpo, yang2020projection, yu2019convergent} partly because the constraint on the expectation of the cumulative sum cost in  \eqref{problem:expectation} is well fit with the objective given by the expectation of the sum reward, and this makes the problem amenable.   However, solving the problem \eqref{problem:expectation} 
may have an undesirable outcome for real environments that typically need a constrained behavior on the event that the cost exceeds the threshold $d_{th}$. For example, in the case of an autonomous driving car, what we want for our sure safety is to control and limit the probability of accident itself. In the case of a telecommunication system, what we want to control is the probability of packet loss through the communication system. These probabilities are called `outage probability' in general. Thus, in many real-world systems, the system requires a constraint on the outage probability, i.e., the probability of critical or unsafe events. In this case, a constraint on the expectation of critical events, as in \eqref{problem:expectation}, cannot guarantee the desired target probability of critical events. To illustrate this, let us consider the following example.

\begin{wrapfigure}{r}{0.5\textwidth}
    \vspace{-15pt}
    \centering
    \begin{subfigure}[b]{0.24\columnwidth}
        \centering
        \includegraphics[width=\textwidth]{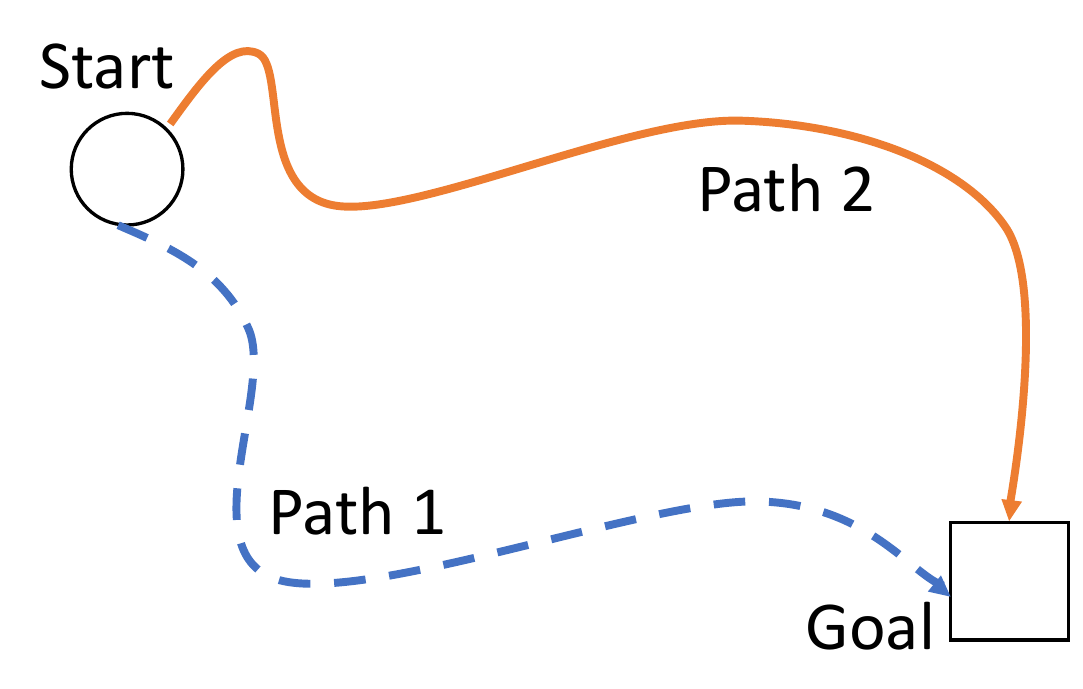}
        \caption{Two-path environment}
        \label{fig:two_path_env}
    \end{subfigure}
    \begin{subfigure}[b]{0.24\columnwidth}
        \centering
        \includegraphics[width=\textwidth]{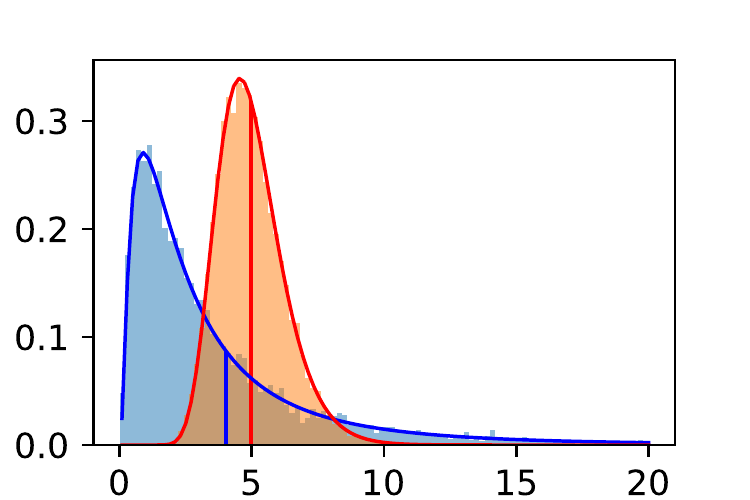}
        \caption{Distribution of cost}
        \label{fig:simple_example_cost_distribution}
    \end{subfigure}
    \caption{A simple example of environment.}
    \label{fig:simple_example}
    \vspace{-10pt}
\end{wrapfigure}

Consider a simple two-path environment, as shown in Fig. \ref{fig:two_path_env}. The objective of the environment is for an agent to reach the goal by driving a car. The environment gives a reward when the agent reaches the goal, and gives costs until the agent reaches the goal. There exist two paths to reach the goal, and each path has a different cost distribution. Fig. \ref{fig:simple_example_cost_distribution} shows the distribution of the cumulative sum cost (curve) and its mean (vertical line) for each path: blue for path 1 and red for path 2. As we can see in Fig. \ref{fig:simple_example_cost_distribution}, the distribution for path 1 has a lower average than that for path 2, but has a longer right tail than that for path 2. If we use the expectation of cost as a constraint, then following path 1 is a better choice, but in this case, the probability of a high cumulative sum cost (e.g. > 10.0 in Fig. \ref{fig:simple_example_cost_distribution}) is higher than following path 2. If the threshold of 10 represents a catastrophic event, we should constrain the probability of events exceeding 10 to a small value.   Thus, solving the problem with the expectation constraint \eqref{problem:expectation} does not necessarily have precise control over the target outage probability. When the event that the cumulative sum cost exceeds $d_{th}$ is a critical unsafe event, this means that such critical event can occur in high probability even if we solve the constrained problem \eqref{problem:expectation}. 
Therefore, in this paper, we aim to solve the following constrained RL problem with an outage probability constraint:
\begin{equation}\tag{ProbCP}
    \begin{array}{ll} \text{Maximize} \quad &V^\pi(s_0) = \Ebb_{\pi}\left[ \sum^\infty_{t=0} \gamma^t r(s_t, a_t) \right] \\ \text{Subject to} \quad &\Pr\left[ \sum^\infty_{t=0} \gamma^t c(S_t, A_t) > d_{th} \right] \leq \epsilon_0 \\
    &~~\text{for } S_0 = s_0, A_t \sim \pi(\cdot | S_t), S_{t+1} \sim M(\cdot | S_t, A_t). \end{array} \label{problem:probabilistic}
\end{equation}

Our approach to this problem is  first to convert the outage probability constraint in \eqref{problem:probabilistic} into a  quantile constraint $q^\pi_{1-\epsilon_0}(s_0) := \inf \left\{ x ~|~ \Pr\left(\sum^\infty_{t=0} \gamma^t c(S_t, A_t) \leq x \right) \geq 1-\epsilon_0 \right\} \leq d_{th}$ which is equivalent to the outage probability constraint
(See Fig. \ref{fig:relation_between_probcp_quantcp}), and then to solve the optimization: 
\begin{equation}
     \min_{\lambda \geq 0} \max_{\pi} ~ \{V^\pi(s_0) - \lambda \left( q^\pi_{1-\epsilon_0}(s_0) - d_{th} \right)\}, \label{eq:lagrangian_quantile}
\end{equation}

\begin{wrapfigure}{r}{0.5\textwidth}
    \vspace{-15pt}
    \centering
    \includegraphics[width=0.45\textwidth]{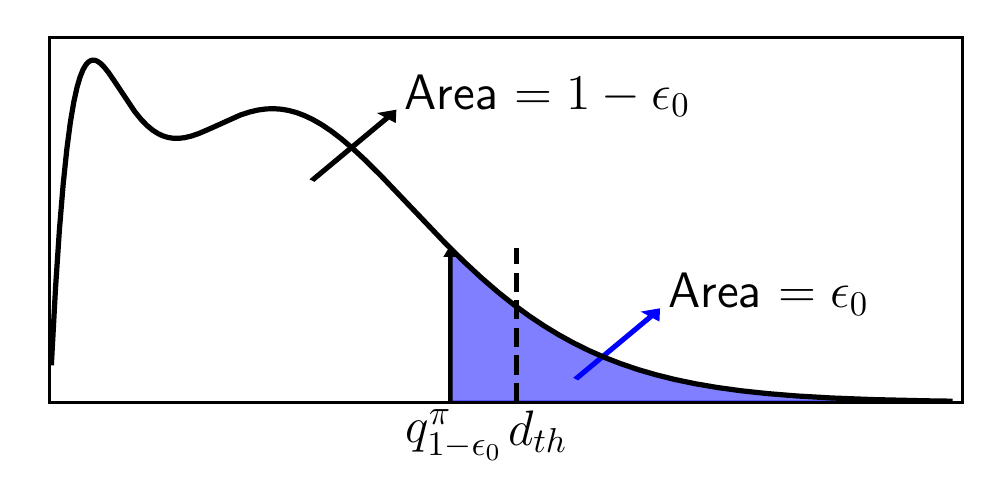}
    \caption{Equivalence between the outage probability constraint and the quantile constraint}
    \label{fig:relation_between_probcp_quantcp}
    \vspace{-10pt}
\end{wrapfigure}
where $\lambda$ is the Lagrange multiplier, based on policy gradient with the parameterized policy.  However, we note that the policy gradient theorem, which is the most basic theorem for on-policy RL, cannot be applied directly to compute the gradient of \eqref{eq:lagrangian_quantile} with respect to (w.r.t.) the policy parameter due to the quantile term $q^\pi_{1-\epsilon_0}(s_0)$. Therefore,  we  derive theoretical results for approximating the gradient of the quantile (in Section \ref{sec:QCRL}). Then, based on the derived theoretical results and the technique of the Lagrange multiplier, we construct our algorithm named Quantile Constrained Policy Optimization (QCPO) to solve the outage probability constrained RL problem (in Section \ref{sec:QCPO}). Here, we use distributional RL with the Large Deviations Principle (LDP) to estimate quantiles and tail probability of the cumulative sum cost for implementation of QCPO. The implemented algorithm satisfies the outage probability constraint after the training period. To the best of our knowledge, this is the first work  that tackles the issue of applying the policy gradient theorem to the quantile and obtains an (approximate) policy gradient for the quantile, and this is one of the main contributions of this paper, together with the QCPO algorithm.


\section{Background and Related Works} \label{sec:background_relative_works}

\textbf{Constrained RL} 
A constrained Markov decision process (CMDP) is defined as a tuple $\langle \Scal, \Acal, r, c, M, \gamma \rangle$, where $\Scal$ is the state space, $\Acal$ is the action space, $r: \Scal \times \Acal \rightarrow \Rbb$ is the reward function, $c: \Scal \times \Acal \rightarrow \Rbb_{\geq 0}$ is the cost function, $M: \Scal \times \Acal \times \Scal \rightarrow [0, 1]$ is the state transition probability, and $\gamma$ is the discount factor.

Constrained RL or Safe RL is an area of RL whose objective is to find an optimal policy for a given CMDP that maximizes the expected return $\Ebb_{\pi}\left[ \sum^{\infty}_{t=0} \gamma^t r(s_t, a_t) \right]$ while satisfying a constraint on the cumulative sum cost $\sum^{\infty}_{t=0} \gamma^t c(s_t, a_t)$. If one wants to constrain the average cumulative sum cost of the policy $\pi$, then constraint 
$C^\pi(s_0) := \Ebb_{\pi}\left[ \sum^\infty_{t=0} \gamma^t c(s_t, a_t) \right] \leq d_{th}$
can be considered. On the other hand, if one wants to constrain the outage behavior of the policy $\pi$, then constraint $\Pr\left( X^\pi(s_0) > d_{th} \right) \leq \epsilon_0$ should be considered, where $X^\pi(s_0)$ is a random variable defined as $X^\pi(s_0) := \sum^\infty_{t=0} \gamma^t c(S_t, A_t)$ with $S_0 = s_0$, $A_t \sim \pi(\cdot | S_t)$, $S_{t+1} \sim M(\cdot | S_t, A_t)$ for $t = 0, 1, \cdots$.

Most of the previous constrained RL works considered a constraint on the expectation of the cumulative sum cost: $C^\pi(s_0) \leq d_{th}$ \cite{achiam2017constrained, ding2021provably, liu2020ipo, turchetta2020safe, xu2021crpo, yang2020projection, yu2019convergent}. In order to solve the constrained optimization problem with this expectation-based constraint, researchers considered the Lagrangian multiplier method \cite{achiam2017constrained, stooke2020responsive}, Lyapunov-based methods \cite{chow2018lyapunov, chow2019lyapunov}, projection-based methods \cite{yang2020projection}, safety-layer methods \cite{dalal2018safe}. 
\vspace{-0.7em}
\begin{table}[ht]
    \caption{Comparison of constrained RL with probabilistic constraints}
    \label{table:comparison_other_algorithms}
    \centering
    \begin{tabular}{|c|c|c|c|c|}
        \hline
        Papers & Algorithm Type & Distribution Modeling & Theory & Deep RL       \\
        \hline
        \makecell{Risk-Const. RL \cite{chow2017risk}} &  \makecell{On-policy \\ (Trajectory-based) } & \makecell{No \\ (need only trajectory samples)} & Yes & No\\
        \hline
        \makecell{WCSAC \cite{yang2021wcsac}} & Off-policy & \makecell{Gaussian \\ (on all range of distribution)} & No & Yes\\
        \hline
        \makecell{QCPO \\ (This paper)} & \makecell{On-policy \\ (State-based)}  & \makecell{LDP with Weibull  \\ (only on tail) } & Yes & Yes\\
        \hline
    \end{tabular}
\end{table}

{\em Quantile (i.e., Value at Risk, VaR)} and {\em Conditional Value at Risk (CVaR)} are two well-known techniques to manage undesirable events in the domain of finance\cite{rockafellar2002conditional}. In the context of RL, the definitions of the quantile and the CVaR for the distribution of the cumulative sum cost for a given $\pi$ are given by $q^\pi_u(s_0) := \inf \{ x ~|~ \Pr(X^\pi(s_0) \leq x) \geq u \}$ and $\cvar^\pi_u(s_0) := \Ebb_{\pi}\left[ X^\pi(s_0) ~\vert~ X^\pi(s_0) \geq q^\pi_u(s_0) \right]$, respectively. 

The CVaR was previously used in RL to constrain undesirable events and the problem with a CVaR constraint is explicitly formulated as
\begin{equation}\tag{CVaR-CP}
    \begin{array}{ll} \text{Maximize} \quad &\Ebb_{\pi}\left[ \sum^\infty_{t=0} \gamma^t r(s_t, a_t) \right] \\ \text{Subject to} \quad &\cvar^\pi_{1-\epsilon_0}(s_0) \leq d_{th}.  \end{array} \label{problem:cvar}
\end{equation}
Note that the $(1-\epsilon_0)$-CVaR denoted as $\cvar^\pi_{1-\epsilon_0}(s)$ is always greater than or equal to  the $(1-\epsilon_0)$-quantile denoted as $q^\pi_{1-\epsilon_0}(s)$ for all $s \in \mcal{S}$ because of the definition of the CVaR. Therefore, satisfying the CVaR constraint $\cvar^\pi_{1-\epsilon_0}(s_0) \leq d_{th}$ in \eqref{problem:cvar} is a sufficient condition for satisfying the probabilistic constraint $\Pr\left( X^\pi(s_0) > d_{th} \right) \leq \epsilon_0$ in \eqref{problem:probabilistic}, and hence  \eqref{problem:cvar} is a stricter problem than \eqref{problem:probabilistic}. Therefore, algorithms proposed to solve \eqref{problem:cvar} can be used for solving \eqref{problem:probabilistic}, and this should satisfy the probabilistic constraint in theory.

\citet{chow2017risk} proposed a trajectory-based CVaR method and provided convergence for their method.  They used trajectory-based policy gradient to their Lagrangian, and it is simple to compute. Like most trajectory-based RL algorithms, however, it suffers from sample inefficiency since it collects a number of trajectories and updates its parameter once. Recently, \citet{yang2021wcsac} proposed an off-policy algorithm to solve \eqref{problem:cvar}. They only estimated the mean and variance of the cost distribution using a technique in distributional RL and computed $\cvar^\pi_u(s)$\footnote{They actually consider the CVaR $\cvar^\pi_u(s, a)$ of the cumulative sum cost $X^\pi(s,a)$ for a given $(s,a)$ pair.} as the CVaR of the Gaussian distribution of the estimated mean and variance. 
However, the distribution of $X^\pi(s)$  is not Gaussian in general, and the Gaussian approximation has limited capability to capture the decay rate of the tail probability because a Gaussian probability density function (PDF) has the form of $\exp(-\beta x^2)$ with fixed rate function $x^2$. 
Therefore, this algorithm can yield a poor estimation of the CVaR of the tail, especially for small tail probability, and cannot guarantee to satisfy the CVaR constraint (see Section \ref{sec:experiments}). Furthermore, note that the CVaR and the quantile are two  different measures for undesirable events, 
and the choice between the two depends on what  we desire. 
For example, an insurance company prefers  the CVaR of undesirable events to determine an insurance premium. On the other hand, a company developing an autonomous driving car system needs the quantile of undesirable events to guarantee the accident probability for safety. Thus, in the context of safe learning, our work  focuses directly on {\em the constraint on the quantile}, which is an equivalent (i.e., necessary and sufficient) constraint to the outage probability constraint in \eqref{problem:probabilistic}. To the best of our knowledge, this is the first work that provides a state-based policy gradient for the quantile and required theoretical results  regarding the quantile-constrained RL problem. Moreover, our implementation approximates  the tail distribution of $X^\pi(s)$ with a Weibull distribution (a particular case of  generalized Gamma distribution), which is general enough to capture various rates of decay of the tail probability. Table \ref{table:comparison_other_algorithms} summarizes the previous constrained RL methods and this paper.

\textbf{Large Deviation Principle (LDP)} 
Large deviation principle (LDP) \cite{dembo1998large} is a technique for estimating the limiting behavior of a sequence of distributions, especially on the tail. A simple example is the empirical mean $\bar{X}_n = \frac{1}{n} \sum^n_{k=1} X_k$ of i.i.d. random variables $X_i$. We say that a sequence $\{\bar{X}_n\}$ satisfies LDP if the sequence of its log probability distribution $\frac{1}{n} \log{\Pr\left( \bar{X}_n \in \Gamma \right)}$ satisfies the following condition $\frac{1}{n} \log{\Pr\left( \bar{X}_n \in \Gamma \right)} \overset{n \rightarrow \infty}{\longrightarrow} - \inf_{x \in \Gamma } I(x)$ for some function $I(x)$. The function $I(x)$ satisfying such limiting behavior is called the rate function of $\bar{X}_n$. The rate function $I(x)$ is also related to the cumulative distribution function (CDF) $F_{\bar{X}_n}(x)$ since $1 - F_{\bar{X}_n}(x_0) = \Pr\left(\bar{X}_n \in [x_0, \infty) \right) \approx \exp{\left(-n \inf_{x \in [x_0, \infty)} I(x) \right)}$ for some $x_0 > \Ebb[X]$ and sufficiently large $n$. LDP can be applied to finite-state Markov chains, and there exists a rate function for a given Markov chain \cite{dembo1998large}. In this paper, we consider the tail probability of the distribution of the cumulative sum cost $X^\pi(s_0) = \sum^\infty_{t=0} \gamma^t c(s_t, a_t)$. Finding its analytic rate function is hard. Therefore, we instead approximate the rate function directly as $I_{X^\pi(s)}(x) \approx (x / \beta(s))^{\alpha(s)}$  with learnable parameters $\alpha(s)$ and $\beta(s)$, which results in a Weibull distribution: $1 - F_{X^\pi(s)}(x) = \exp{\left\{ -(x / \beta(s))^{\alpha(s)} \right\}}$. We use this distribution to approximate the tail probability of $p_{X^\pi(s)}(x)$ of $X^\pi(s)$.


\section{Quantile Constrained RL} \label{sec:QCRL}

In this section, we explain an equivalent form of \eqref{problem:probabilistic} that we use to learn an optimal constrained policy under the outage probability constraint and then explain  the difficulty of applying the policy gradient theorem to optimize the Lagrangian of the equivalent problem. Finally, we provide theoretical results that circumvent this difficulty in Section \ref{subsec:theoretical_results}.

\subsection{Motivation: Problem of Applying Policy Gradient Theorem to Quantile} \label{subsec:motivation}


Solving \eqref{problem:probabilistic} with a direct approach is too hard in making a loss function for $\pi$ based on the outage probability. Thus, we convert the probability constrained problem to an equivalent form of a quantile constrained problem:
\begin{equation}\tag{QuantCP}
    \begin{array}{ll} \text{Maximize} \quad &\Ebb_{\pi}\left[ \sum^\infty_{t=0} \gamma^t r(s_t, a_t) \right] \\ \text{Subject to} \quad &q^\pi_{1-\epsilon_0}(s_0) \leq d_{th},  \end{array} \label{problem:quantile}
\end{equation}
where $q^\pi_u(s) = \inf \{ x ~|~ \Pr(X^\pi(s) \leq x) \geq u \}$  is the $u$-quantile of the random variable $X^\pi(s)$ of the cumulative sum cost: $X^\pi(s) = \sum^\infty_{t=0} \gamma^t c(S_t, A_t)$ with $S_0 = s$, $A_t \sim \pi(\cdot | S_t)$, $S_{t+1} \sim M(\cdot | S_t, A_t)$, $t = 0, 1, 2, \cdots$. 
Note that $q^\pi_{1-\epsilon_0}(s_0) \leq d_{th}$ is equivalent to $\Pr\left[ X^\pi(s_0) > d_{th} \right] \leq \epsilon_0$ due to the definition of the quantile. 
We propose a direct approach to solve the equivalent problem \eqref{problem:quantile} instead of \eqref{problem:probabilistic}. 
Although we have an equivalent form of \eqref{problem:probabilistic}, it is still difficult to solve \eqref{problem:quantile}. 
We explain what makes solving the problem \eqref{problem:quantile} still hard below. 

In the case of \eqref{problem:expectation}, the Lagrange-based optimization  of \eqref{problem:expectation} is given by $\min_{\lambda \geq 0} \max_{\pi}  L_{exp}(\pi, \lambda) := V^\pi(s_0) - \lambda \left( C^\pi(s_0) - d_{th} \right)$, where $V^\pi(s_0) = \Ebb_{\pi}\left[ \sum^\infty_{t=0} \gamma^t r(s_t, a_t) \right]$ and $C^\pi(s_0) = \Ebb_{\pi}\left[ \sum^\infty_{t=0} \gamma^t c(s_t, a_t) \right]$. Then, due to the form of $C^\pi(s_0)$, the policy gradient theorem \cite{sutton2018reinforcement} can directly be applied, and the gradient of the Lagrangian w.r.t. $\pi$ is given by the expectation form:
\begin{align}
    \nabla_\pi L_{exp}(\pi, \lambda) &= \sum_s \rho^\pi(s) \sum_a \nabla \pi(a | s) \left\{ A^\pi_r(s,a) - \lambda A^\pi_c(s,a) \right\},  \label{eq:policy_gradient_theorem_expectation}
\end{align}
where $\rho^\pi(s) := \sum^{\infty}_{t=0} \gamma^t ~ \Pr(S_t = s | s_0, \pi)$, $A^\pi_r(s, a) := r(s, a) + \gamma \Ebb_{s' \sim M}\left[ V^\pi(s') \right] - V^\pi(s)$, and $A^\pi_c(s, a) := c(s, a) + \gamma \Ebb_{s' \sim M}\left[ C^\pi(s') \right] - C^\pi(s)$. However, the gradient of the Lagrangian of the problem \eqref{problem:quantile}
\begin{equation}
    \min_{\lambda \geq 0} \max_{\pi}  L_{quant}(\pi, \lambda) := V^\pi(s_0) - \lambda \left( q^\pi_{1 - \epsilon_0}(s_0) - d_{th} \right) \label{eq:lagrange_quantile}
\end{equation}
w.r.t. the policy $\pi$ cannot be expressed as an expectation form:
\begin{align}
    &\nabla_\pi L_{quant}(\pi, \lambda)  \neq \Ebb_{\pi}\left[ \nabla \log{\pi(a|s)} \left\{ A^\pi_r(s,a) - \lambda \bar{A}^\pi_{1-\epsilon_0}(s,a) \right\} \right] \label{eq:policy_gradient_theorem_quantile_simple}
\end{align}
where $\bar{A}^\pi_{u}(s, a) := c(s, a) + \gamma \Ebb_{s' \sim M}\left[ q^\pi_u(s') \right] - q^\pi_u(s)$. This is because the $u$-quantile $q^\pi_u(s)$ is not the expectation of the cumulative sum cost. However, if the $u$-quantile $q^\pi_u(s)$ can be written as 
\begin{equation}
    q^\pi_u(s_0) = \Ebb_{\pi}\left[ \sum^\infty_{t=0} \gamma^t \left\{ c(s_t, a_t) + \tilde{c}_u(s_t, a_t) \right\} \right] \label{eq:expectation_form_quantile}
\end{equation}
for some function $\tilde{c}_u(s,a)$ that is independent of the policy $\pi$, we can apply the policy gradient theorem by defining the advantage function for the quantile term:
\begin{equation}
    A^\pi_u(s,a) := c(s,a) + \tilde{c}_u(s,a) + \gamma \Ebb_{s' \sim M}\left[ q^\pi_u(s') \right] - q^\pi_u(s).
\end{equation}
This fact motivates us to search for such $\tilde{c}_u(s,a)$. For this, under mild assumptions, we first show the existence of a policy-dependent additional cost $\tilde{c}^\pi_u(s,a)$ and then show that the additional cost can be approximated by a cost $\tilde{c}^{\pi'}_u(s,a)$ for some fixed $\pi'$ independent of $\pi$ except the requirement $\max_s \KL(\pi'(\cdot |s) ~\Vert~ \pi(\cdot |s)) \leq \delta$.

\subsection{Theoretical Results} \label{subsec:theoretical_results}

We here provide theoretical results showing the existence of an additional cost $\tilde{c}^\pi_u(s, a)$ and showing that this can be approximated as another cost $\tilde{c}^{\pi'}_u(s, a)$ for a base policy $\pi'$ independent of $\pi$, only requiring  $\max_s \KL\left( \pi'(\cdot | s) ~\big\Vert~ \pi(\cdot | s) \right) \leq \delta$ for some $\delta > 0$. These theoretical results make the quantile constrained policy optimization tractable by enabling application of the policy gradient theorem. For the theoretical results, we assume that the CDF $F_{X^\pi(s)}(x)$ is strictly increasing on $[0, \infty)$, and it is continuously differentiable for all $s \in \Scal$. The proofs of the theoretical results are in Appendix \ref{appendix:proof}.

We begin with deriving the temporal-difference (TD) relation between the $u$-quantiles of $X^\pi(s)$ at $s_t$ and $s_{t+1}$. Theorem \ref{thm:td_relation_quantile_bound} states the TD relation for the $u$-quantile under the following assumptions of boundness of quantile difference and smoothness of CDF of $X^\pi(s)$.
\begin{myassm}[Boundness of quantile difference] \label{assm:boundness}
    For a given policy $\pi$, the following two quantities are bounded
    \begin{align}
        \left\vert c(s,a) + \gamma q^\pi_u(s') - q^\pi_u(s) \right\vert &\leq \gamma R \\
        \left\vert q^\pi_u(s) - F^{-1}_{X^\pi(s)}\left( \Fqtarg \right) \right\vert &\leq R
    \end{align}
    for all $(s, a, s') \in \Scal \times \Acal \times \Scal$ such that $\pi(a | s) \cdot M(s' | s, a) > 0$.
\end{myassm}
\begin{myassm}[Smoothness of CDF of $X^\pi(s)$] \label{assm:smoothness}
    For each state $s$, the average slope of $F_{X^\pi(s)}(x)$ between $q^\pi_u(s)$ and $y \in [q^\pi_u(s)-R, q^\pi_u(s)+R]$ is bounded by
    \begin{equation}
        \frac{1}{1 + \epsilon} \cdot p_{X^\pi(s)} \left( q^\pi_u(s) \right) \leq \frac{F_{X^\pi(s)}\left( q^\pi_u(s) \right) - F_{X^\pi(s)}\left( y \right) }{ q^\pi_u(s) - y } \leq \frac{1}{1 - \epsilon} \cdot p_{X^\pi(s)} \left( q^\pi_u(s) \right)
    \end{equation}
    for small $0 < \epsilon < \frac{1}{2}$.
\end{myassm}
\begin{mythm} \label{thm:td_relation_quantile_bound}
    Under Assumptions \ref{assm:boundness} and \ref{assm:smoothness}, the $u$-quantile of the random variable $X^\pi(s_t)$ satisfies the following temporal-difference (TD) relation. For some constant $R$ and small $\epsilon > 0$,
    \begin{equation}
        \biggl\vert \Ebb_{\pi} \biggl[ \mu^{\pi}_u \left( s_t, a_t, s_{t+1} \right) \bigl\{ c(s_t, a_t) + \gamma q^\pi_u(s_{t+1}) - q^\pi_u(s_t) \bigr\} \biggr] \biggr\vert \leq \frac{\epsilon}{1-\epsilon} R, \label{eq:td_relation_quantile_bound}
    \end{equation}
    where $\mu^{\pi}_u \left( s, a, s' \right) = p_{X^\pi(s')}\left( \frac{q^\pi_u(s) - c(s, a)}{\gamma} \right) \bigm/ \gamma p_{X^\pi(s)}\left( q^\pi_u(s) \right)$. Here, the expectation is for the action $a_t \sim \pi(\cdot | s_t)$ and the next state $s_{t+1} \sim M(\cdot | s_t, a_t)$. ($s_t$ is given.)
\end{mythm}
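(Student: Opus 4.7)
The plan is to exploit a Bellman-type identity at the level of the CDF of $X^\pi(s)$, turn it into a statement about expected CDF differences that vanishes, and then convert that vanishing identity into the quantile TD relation by linearizing the CDF through Assumption~\ref{assm:smoothness}, with Assumption~\ref{assm:boundness} guaranteeing that the linearization is applied in the regime where the smoothness assumption is valid.

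First, I would derive the CDF Bellman identity. Conditioning on $(a_t, s_{t+1})$ and using $X^\pi(s_t) = c(s_t,a_t) + \gamma X^\pi(s_{t+1})$ gives
\begin{equation*}
F_{X^\pi(s_t)}(y) = \Ebb_{\pi}\!\left[ \Fytargt \right] \quad \text{for all } y.
\end{equation*}
Evaluating at $y = q^\pi_u(s_t)$ yields $u = \Ebb_\pi[\Fqtargt]$, and since also $u = F_{X^\pi(s_{t+1})}(q^\pi_u(s_{t+1}))$ pointwise, subtracting gives the key identity
\begin{equation*}
\Ebb_\pi\!\left[ F_{X^\pi(s_{t+1})}(q^\pi_u(s_{t+1})) - \Fqtargt \right] = 0. \qquad (\star)
\end{equation*}

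Second, I would apply Assumption~\ref{assm:smoothness} with $s = s_{t+1}$ and $y = (q^\pi_u(s_t) - c(s_t,a_t))/\gamma$. The second inequality in Assumption~\ref{assm:boundness} ensures that this $y$ lies in the admissible range $[q^\pi_u(s_{t+1}) - R,\, q^\pi_u(s_{t+1}) + R]$, so the average-slope bound lets me write
\begin{equation*}
F_{X^\pi(s_{t+1})}(q^\pi_u(s_{t+1})) - \Fqtargt = \beta_\omega \cdot p_{X^\pi(s_{t+1})}(q^\pi_u(s_{t+1})) \cdot \bigl(q^\pi_u(s_{t+1}) - \tfrac{q^\pi_u(s_t)-c(s_t,a_t)}{\gamma}\bigr),
\end{equation*}
with a random coefficient $\beta_\omega \in \bigl[\tfrac{1}{1+\epsilon},\tfrac{1}{1-\epsilon}\bigr]$ indexed by $\omega = (s_t,a_t,s_{t+1})$. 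Recognizing $\gamma\bigl(q^\pi_u(s_{t+1}) - (q^\pi_u(s_t)-c(s_t,a_t))/\gamma\bigr) = c(s_t,a_t) + \gamma q^\pi_u(s_{t+1}) - q^\pi_u(s_t) =: D_t$, identity $(\star)$ becomes $\Ebb_\pi[\beta_\omega \cdot p_{X^\pi(s_{t+1})}(q^\pi_u(s_{t+1})) \cdot D_t] = 0$.

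Third, I would add and subtract $1$ inside the coefficient: writing
\begin{equation*}
\Ebb_\pi[\mu^\pi_u(s_t,a_t,s_{t+1}) \, D_t] = \Ebb_\pi[\mu^\pi_u \, D_t] - \tfrac{1}{\gamma p_{X^\pi(s_t)}(q^\pi_u(s_t))} \Ebb_\pi[\beta_\omega \cdot p_{X^\pi(s_{t+1})}(\cdot) \cdot D_t],
\end{equation*}
where the second term is zero by $(\star)$, and then collapsing the densities using Assumption~\ref{assm:smoothness} (which bounds $p_{X^\pi(s')}$ at the two nearby evaluation points $q^\pi_u(s_{t+1})$ and $(q^\pi_u(s_t)-c)/\gamma$ by the same two-sided factor). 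This reduces the problem to bounding $\Ebb_\pi[(1 - \beta_\omega)\, \mu^\pi_u \, D_t]$ where $|1 - \beta_\omega| \le \tfrac{\epsilon}{1-\epsilon}$ pointwise. Combining with $|D_t| \le \gamma R$ from Assumption~\ref{assm:boundness} and the fact (shown by differentiating the CDF Bellman identity at $y=q^\pi_u(s_t)$) that $\Ebb_\pi[\mu^\pi_u(s_t,a_t,s_{t+1})] = 1$, I obtain the claimed bound $\tfrac{\epsilon}{1-\epsilon} R$.

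\textbf{Main obstacle.} The cleanest step is the derivation of $(\star)$; the subtle step is the third one, where the normalization appearing inside $\mu^\pi_u$ (density of $X^\pi(s_{t+1})$ at $(q^\pi_u(s_t)-c)/\gamma$) does not literally match the density at $q^\pi_u(s_{t+1})$ produced by Assumption~\ref{assm:smoothness}. Handling this mismatch requires reading Assumption~\ref{assm:smoothness} as implying near-constancy of the density across the $R$-neighborhood of $q^\pi_u(s_{t+1})$, so the two evaluation points contribute only a lower-order correction that is absorbed into the same $\epsilon/(1-\epsilon)$ factor. Verifying that this absorption does not inflate the constant past $\epsilon/(1-\epsilon)$ is the delicate bookkeeping part of the argument.
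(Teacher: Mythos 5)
Your proposal is correct in outline and shares the paper's two key ingredients: the CDF Bellman identity evaluated at the quantile (your $(\star)$ is precisely the paper's starting identity $u = \Ebb_{\pi}[\Fqtargt]$ combined with $u = F_{X^\pi(s_{t+1})}(q^\pi_u(s_{t+1}))$ pointwise) and the linearization of CDF increments via Assumption~\ref{assm:smoothness}. Where you diverge is in the final bounding. The paper inserts the zero term, factors the integrand into a product of an ``$s_t$-side'' quantity and an ``$s_{t+1}$-side'' quantity, and applies Cauchy--Schwarz: the first factor is bounded by $R/(1-\epsilon)$ using the \emph{second} inequality of Assumption~\ref{assm:boundness} together with Assumption~\ref{assm:smoothness} applied at state $s_t$, and the second factor by $\epsilon$ using Assumption~\ref{assm:smoothness} at $s_{t+1}$. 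You instead bound pointwise, using only the first inequality of Assumption~\ref{assm:boundness} (note it is this first inequality, $\vert c(s_t,a_t)+\gamma q^\pi_u(s_{t+1})-q^\pi_u(s_t)\vert \le \gamma R$, not the second as you wrote, that places $y=(q^\pi_u(s_t)-c(s_t,a_t))/\gamma$ in the admissible window for Assumption~\ref{assm:smoothness} at $s_{t+1}$), the pointwise bound $\vert 1-\beta_\omega\vert \le \epsilon/(1-\epsilon)$, and the normalization $\Ebb_\pi[\mu^\pi_u]=1$. This is more elementary: it dispenses with Cauchy--Schwarz and with the second inequality of Assumption~\ref{assm:boundness} entirely, and yields the slightly sharper constant $\gamma R\,\epsilon/(1-\epsilon)$. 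The ``delicate'' step you flag---identifying $p_{X^\pi(s_{t+1})}$ at the target value with its value at $q^\pi_u(s_{t+1})$---is a genuine issue, since Assumption~\ref{assm:smoothness} controls secant slopes from the quantile rather than pointwise densities away from it; however, the paper's own bound on its factor (b) rests on exactly the same implicit identification of the density at the target point with the local average slope, so your argument is no less rigorous than the paper's, and under whatever reading of Assumption~\ref{assm:smoothness} validates the paper's step, yours goes through as well.
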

Note that for the expectation of the cumulative sum cost $C^\pi(s)$ considered in \eqref{problem:expectation}, the expectation of TD under the policy $\pi$ follows 
\begin{equation}
    \Ebb_{\pi} \left[ c(s_t, a_t) + \gamma C^\pi(s_{t+1}) - C^\pi(s_t) \right] = 0 \label{eq:td_relation_expectation}
\end{equation}
by the Bellman equation. The TD relation \eqref{eq:td_relation_expectation} for expectation  has a similar form to that for the $u$-quantile \eqref{eq:td_relation_quantile_bound}, but the  difference is that \eqref{eq:td_relation_quantile_bound} is the weighted expectation of the TD ($c(s_t, a_t) + \gamma q^\pi_u(s_{t+1}) - q^\pi_u(s_t)$). 
The numerator $p_{X^\pi(s_{t+1})}\left( \frac{q^\pi_u(s_t) - c(s_t, a_t)}{\gamma} \right)$ of the weight $\mu_u^\pi(s_t,a_t,s_{t+1})$ in \eqref{eq:td_relation_quantile_bound} involves two quantities: 1) a target quantile $\frac{q^\pi_u(s_t) - c(s_t, a_t)}{\gamma}$ and 2) the PDF of the sum of costs $X^{\pi}(s_{t+1})=\sum^\infty_{k=0} \gamma^{k} c(s_{t+k+1}, a_{t+k+1})$ starting from state $s_{t+1}$.  
Here, the  value $\frac{q^\pi_u(s_t) - c(s_t, a_t)}{\gamma}$ is the target value of the  sum of costs $\sum^\infty_{k=0} \gamma^{k} c(s_{t+k+1}, a_{t+k+1})$ from the next state $s_{t+1}$ such that the  sum of costs $\sum^\infty_{k=0} \gamma^{k} c(s_{t+k}, a_{t+k})$ for a given pair $(s_t, a_t)$ at $t$ is the $u$-quantile $q^\pi_u(s_t)$. 
Thus, the numerator $p_{X^\pi(s_{t+1})}\left( \frac{q^\pi_u(s_t) - c(s_t, a_t)}{\gamma} \right)$ of the weight $\mu_u^\pi(s_t,a_t,s_{t+1})$ in \eqref{eq:td_relation_quantile_bound} is the probability of the event that the cumulative sum cost starting from $(s_t, a_t)$ becomes the  $u$-quantile $q^\pi_u(s_t)$ at $s_t$ from the perspective of the next state $s_{t+1}$.  Based on Theorem \ref{thm:td_relation_quantile_bound}, we  obtain the following corollary: 
\begin{mycor} \label{cor:td_relation_quantile_bound}
    Under Assumptions \ref{assm:boundness} and \ref{assm:smoothness}, the $u$-quantile $q^\pi_u(s_t)$ of the random variable $X^\pi(s_t)$ is bounded as
    \begin{equation} \label{eq:lem1main}
        \biggl\vert q^\pi_u(s_t) - \Ebb_{\pi} \biggl[ \mu^{\pi}_u \left( s_t, a_t, s_{t+1} \right) \bigl\{ c(s_t, a_t) + \gamma q^\pi_u(s_{t+1}) \bigr\} \biggr] \biggr\vert \leq \frac{\epsilon}{1-\epsilon} R.
    \end{equation}
\end{mycor}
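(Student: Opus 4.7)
The plan is to derive the corollary as an almost immediate consequence of Theorem~\ref{thm:td_relation_quantile_bound} together with one auxiliary identity, namely that the weight $\mu^\pi_u$ normalizes to one in expectation:
\begin{equation}
    \Ebb_{\pi}\bigl[ \mu^{\pi}_u(s_t, a_t, s_{t+1}) \bigr] \;=\; 1, \label{eq:plan_weight_norm}
\end{equation}
where the expectation is taken over $a_t \sim \pi(\cdot\mid s_t)$ and $s_{t+1}\sim M(\cdot\mid s_t, a_t)$ with $s_t$ fixed. Once \eqref{eq:plan_weight_norm} is in hand, linearity of expectation pulls the deterministic term $q^\pi_u(s_t)$ out of the bracket in Theorem~\ref{thm:td_relation_quantile_bound}, turning
\[
\Ebb_\pi\!\bigl[\mu^\pi_u\{c(s_t,a_t)+\gamma q^\pi_u(s_{t+1})-q^\pi_u(s_t)\}\bigr]
= \Ebb_\pi\!\bigl[\mu^\pi_u\{c(s_t,a_t)+\gamma q^\pi_u(s_{t+1})\}\bigr]-q^\pi_u(s_t),
\]
which is exactly the quantity whose absolute value the corollary seeks to bound.

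The only substantive step is therefore verifying \eqref{eq:plan_weight_norm}, and I would do this using the distributional Bellman relation $X^\pi(s_t) \stackrel{d}{=} c(s_t,a_t)+\gamma X^\pi(s_{t+1})$. Conditioning on $(a_t,s_{t+1})$ and using the change-of-variables formula for the $\gamma$-scaling gives the PDF identity
\begin{equation}
    p_{X^\pi(s_t)}(x) \;=\; \sum_{a}\pi(a\mid s_t)\sum_{s'} M(s'\mid s_t,a)\,\frac{1}{\gamma}\,p_{X^\pi(s')}\!\left(\frac{x-c(s_t,a)}{\gamma}\right).
\end{equation}
Evaluating this at $x=q^\pi_u(s_t)$ and dividing both sides by $p_{X^\pi(s_t)}(q^\pi_u(s_t))$, which is strictly positive under the standing smoothness/strict-monotonicity assumptions on the CDF, yields precisely \eqref{eq:plan_weight_norm}. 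Note that the denominator of $\mu^\pi_u$ is deterministic given $s_t$, so it passes through the expectation unchanged.

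Combining the two pieces, we get
\[
\Bigl|\,q^\pi_u(s_t)-\Ebb_\pi\!\bigl[\mu^\pi_u\{c(s_t,a_t)+\gamma q^\pi_u(s_{t+1})\}\bigr]\,\Bigr|
=\Bigl|\,\Ebb_\pi\!\bigl[\mu^\pi_u\{c(s_t,a_t)+\gamma q^\pi_u(s_{t+1})-q^\pi_u(s_t)\}\bigr]\,\Bigr|
\le \tfrac{\epsilon}{1-\epsilon}R,
\]
where the last inequality is Theorem~\ref{thm:td_relation_quantile_bound}. The main (and essentially only) obstacle is making the density-level Bellman decomposition rigorous; this is a standard application of the total probability theorem for continuous random variables, and requires only that $p_{X^\pi(s)}$ exists and is positive at $q^\pi_u(s)$, both of which are granted by the assumption that $F_{X^\pi(s)}$ is strictly increasing and continuously differentiable on $[0,\infty)$. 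No new large-deviation or concentration argument is needed beyond what already underlies Theorem~\ref{thm:td_relation_quantile_bound}.
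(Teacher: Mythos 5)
Your argument is correct and is exactly the paper's proof: both establish $\Ebb_\pi[\mu^\pi_u(s_t,a_t,s_{t+1})]=1$ via the distributional-Bellman PDF identity \eqref{appendix:eq:relation_pdf} (i.e.\ $\Ebb_{\pi}\bigl[ p_{X^\pi(s_{t+1})}\bigl( \tfrac{q^\pi_u(s_t) - c(s_t, a_t)}{\gamma} \bigr) \bigr] = \gamma p_{X^\pi(s_t)}\bigl( q^\pi_u(s_t) \bigr)$), pull $q^\pi_u(s_t)$ out of the expectation by linearity, and invoke Theorem \ref{thm:td_relation_quantile_bound}. No gap.
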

{\em Proof:} Note that the term $q_u^\pi(s_t)$ can go outside  the expectation in \eqref{eq:td_relation_quantile_bound} since the expectation is over $(a_t,s_{t+1})$. From eq. \eqref{appendix:eq:relation_pdf} in Appendix \ref{appendix:distributional_rl}, the expectation of the numerator of $\mu^{\pi}_u \left( s_t, a_t, s_{t+1} \right)$ is the same as the the denominator of the weight, i.e.,  $\Ebb_{\pi}\left[ p_{X^\pi(s_{t+1})}\left( \frac{q^\pi_u(s_t) - c(s_t, a_t)}{\gamma} \right) \right] = \gamma p_{X^\pi(s_t)}\left( q^\pi_u(s_t) \right)$, and this leads to $\mathbb{E}_\pi [\mu_u^\pi(s_t,a_t,s_{t+1})] = 1$. So, we have the claim. \hfill{$\square$}

As seen in Corollary \ref{cor:td_relation_quantile_bound}, the $u$-quantile at $s_t$ can be approximated as a weighted expectation of $c(s_t, a_t) + \gamma q^\pi_u(s_{t+1})$, and the weight is proportional to $p_{X^\pi(s_{t+1})}\left( \frac{q^\pi_u(s_t) - c(s_t, a_t)}{\gamma} \right)$. This means that the more probable is the pair $(s_t, a_t, s_{t+1})$ to achieve $q^\pi_u(s_t)$, the higher weight is multiplied to $c(s_t, a_t) + \gamma q^\pi_u(s_{t+1})$ for approximating $q^\pi_u(s_t)$. Furthermore, if we assume that the transition dynamics of CMDP are deterministic, i.e., $s_{t+1} = h(s_t, a_t)$ as in many real-world control problem,  we can approximate the $u$-quantile $q^\pi_u(s_0)$ at $s_0$ as the expectation of the sum of costs under a distorted policy $\tilde{\pi}_u$, as stated in the following lemma:

\begin{mylem} \label{lem:expectation_form_quantile}
    Suppose that the state transition dynamics are deterministic, i.e., $s_{t+1} = h(s_t, a_t)$. Then, under Assumptions \ref{assm:boundness} and \ref{assm:smoothness}, the $u$-quantile $q^\pi_u(s_0)$ of the random variable $X^\pi(s_0)$ is expressed as
    \begin{equation} \label{eq:lemma2main}
         \left\vert q^\pi_u(s_0) - \Ebb_{\tilde{\pi}_u} \left[ \sum^\infty_{t=0} \gamma^t c(s_t, a_t) \right] \right\vert \leq \frac{\epsilon R}{(1-\epsilon)(1-\gamma)},
    \end{equation}
    where $\tilde{\pi}_u(a | s) = \pi(a | s) \cdot \mu^{\pi}_u \left( s, a, h(s,a) \right)  \propto \pi(a | s) \cdot p_{X^\pi(h(s, a))}\left( \frac{q^\pi_u(s) - c(s, a)}{\gamma} \right)$.
\end{mylem}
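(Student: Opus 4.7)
My plan is to turn Corollary \ref{cor:td_relation_quantile_bound} into a Bellman-like inequality under the distorted policy $\tilde{\pi}_u$ and then unroll it in time, collecting the per-step approximation errors into a geometric series.

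First, under the deterministic transition $s_{t+1}=h(s_t,a_t)$ the next state is no longer random, so the expectation in \eqref{eq:lem1main} reduces to one over $a_t\sim\pi(\cdot|s_t)$ alone, giving
\begin{equation*}
    \left|\, q^\pi_u(s)-\sum_a \pi(a|s)\,\mu^\pi_u(s,a,h(s,a))\bigl\{c(s,a)+\gamma q^\pi_u(h(s,a))\bigr\}\right|\leq\tfrac{\epsilon}{1-\epsilon}R.
\end{equation*}
The second step is to observe that $\tilde{\pi}_u(a|s):=\pi(a|s)\,\mu^\pi_u(s,a,h(s,a))$ is a bona fide probability distribution: the proof of Corollary \ref{cor:td_relation_quantile_bound} shows $\Ebb_{\pi}[\mu^\pi_u(s,a,s')]=1$, which under deterministic dynamics collapses to $\sum_a\pi(a|s)\,\mu^\pi_u(s,a,h(s,a))=1$. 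So the one-step bound becomes an approximate Bellman equation for $q^\pi_u$ under $\tilde{\pi}_u$, namely $q^\pi_u(s)=\Ebb_{a\sim\tilde{\pi}_u(\cdot|s)}[c(s,a)+\gamma q^\pi_u(h(s,a))]+\delta(s)$ with $|\delta(s)|\leq\tfrac{\epsilon R}{1-\epsilon}$.

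The third step is to unroll this recursion. Substituting it into itself $T$ times yields
\begin{equation*}
    q^\pi_u(s_0)=\Ebb_{\tilde{\pi}_u}\!\left[\sum_{t=0}^{T-1}\gamma^t c(s_t,a_t)+\gamma^T q^\pi_u(s_T)\right]+\sum_{t=0}^{T-1}\gamma^t\,\Ebb_{\tilde{\pi}_u}[\delta(s_t)],
\end{equation*}
where at each substitution the action distribution switches from $\pi$ to $\tilde{\pi}_u$ because of the reweighting by $\mu^\pi_u$. Sending $T\to\infty$ and using boundedness of $q^\pi_u$ (so that $\gamma^T\Ebb_{\tilde{\pi}_u}[q^\pi_u(s_T)]\to 0$), together with monotone convergence for the non-negative cost $c\ge 0$, replaces the bracketed term by $\Ebb_{\tilde{\pi}_u}[\sum_{t=0}^\infty \gamma^t c(s_t,a_t)]$ and bounds the residual by $\sum_{t=0}^\infty\gamma^t\cdot\tfrac{\epsilon R}{1-\epsilon}=\tfrac{\epsilon R}{(1-\epsilon)(1-\gamma)}$, which is exactly \eqref{eq:lemma2main}.

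The main obstacle I anticipate is the tail term $\gamma^T\Ebb_{\tilde{\pi}_u}[q^\pi_u(s_T)]$: I must argue that $q^\pi_u$ is uniformly bounded on $\Scal$. Assumption \ref{assm:boundness} already bounds one-step quantile differences by $\gamma R$, and this, combined with a uniform cost bound (implicit for $X^\pi(s)$ to have a finite quantile), should suffice. A secondary bookkeeping concern is that the iterated expectation $\Ebb_{\tilde{\pi}_u}[\,\cdot\,]$ is a trajectory expectation whose distribution at stage $t$ is generated by sequentially sampling $a_k\sim\tilde{\pi}_u(\cdot|s_k)$ and applying $h$; this must be spelled out carefully when the residual $\delta(s_t)$ is pulled out of the nested expectations, since $\delta$ depends on $s_t$ only (and $\tilde\pi_u$ is Markov), so the tower property produces the stated $\gamma^t\,\Ebb_{\tilde\pi_u}[\delta(s_t)]$ terms cleanly.
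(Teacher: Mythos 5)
Your proposal is correct and follows essentially the same route as the paper's proof: both use the normalization $\sum_a\pi(a|s)\,\mu^\pi_u(s,a,h(s,a))=1$ (from the distributional Bellman relation) to recognize $\tilde{\pi}_u$ as a valid policy, turn Corollary \ref{cor:td_relation_quantile_bound} into a one-step approximate Bellman inequality under $\tilde{\pi}_u$, and unroll it into a geometric series giving the $\frac{\epsilon R}{(1-\epsilon)(1-\gamma)}$ bound. Your explicit treatment of the vanishing tail term $\gamma^T\Ebb_{\tilde{\pi}_u}[q^\pi_u(s_T)]$ is a point the paper leaves implicit, but it is not a difference in approach.
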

Now, plugging \eqref{eq:lemma2main} into the quantile term in the Lagrangian \eqref{eq:lagrange_quantile} of the problem \eqref{problem:quantile}, we may apply the policy gradient theorem based on the chain rule since the $u$-quantile $q^\pi_u(s_0)$ is expressed as the expectation of the sum of costs. However, the gradient of $\tilde{\pi}_u$ w.r.t. $\pi$ for chain rule is too complicated due to the $\mu_u^\pi$ term in Lemma \ref{lem:expectation_form_quantile}.  Thus, we find another expectation form of $q^\pi_u(s_0)$ using an additional cost function $\tilde{c}^\pi_u(s, a)$, as stated in the following theorem: 
\begin{mythm} \label{thm:expectation_form_quantile_policy_dependent_cost}
Under deterministic dynamics $s_{t+1} = h(s_t, a_t)$ and  Assumptions \ref{assm:boundness} and \ref{assm:smoothness}, $q_u^\pi(s)$ can be expressed as 
    \begin{equation}
        \left\vert q^{\pi}_u (s_0) - \Ebb_{\pi} \left[ \sum^\infty_{t=0} \gamma^t \left\{ c(s_t, a_t) + \tilde{c}^\pi_u(s_t, a_t) \right\} \right] \right\vert \leq \frac{\epsilon R}{(1-\epsilon)(1-\gamma)}, \label{eq:quantile_approximation_policy_dependent_cost}
    \end{equation}
where $\tilde{c}^\pi_u(s,a) = \left( \mu^{\pi}_u \left( s, a, h(s,a) \right) - 1 \right) \cdot \left[ c(s, a) + \gamma q^\pi_u(h(s, a)) \right]$.
\end{mythm}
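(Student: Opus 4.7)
The plan is to start from Corollary \ref{cor:td_relation_quantile_bound}, rewrite the weight $\mu^\pi_u$ as $1 + (\mu^\pi_u - 1)$ in order to pull out the additional cost $\tilde{c}^\pi_u$, and then unroll the resulting one-step relation into an infinite-horizon expectation while tracking the accumulated per-step error through the geometric discounting.

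First, specializing Corollary \ref{cor:td_relation_quantile_bound} to deterministic dynamics $s_{t+1}=h(s_t,a_t)$, for any state $s$ I would define the one-step residual
\begin{equation*}
    \delta^\pi_u(s) := q^\pi_u(s) - \Ebb_{a\sim\pi(\cdot|s)}\Bigl[\mu^{\pi}_u(s,a,h(s,a))\bigl\{c(s,a) + \gamma q^\pi_u(h(s,a))\bigr\}\Bigr],
\end{equation*}
and read off $|\delta^\pi_u(s)| \leq \frac{\epsilon}{1-\epsilon}R$. Writing $\mu^\pi_u = 1 + (\mu^\pi_u - 1)$ inside the bracket and using the definition of $\tilde{c}^\pi_u(s,a)$, this rearranges to
\begin{equation*}
    q^\pi_u(s) = \Ebb_{a\sim\pi(\cdot|s)}\Bigl[c(s,a) + \tilde{c}^\pi_u(s,a) + \gamma q^\pi_u(h(s,a))\Bigr] + \delta^\pi_u(s).
\end{equation*}

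Second, I would apply this identity recursively along trajectories generated by $\pi$ starting at $s_0$. After $T$ unrollings, iterated expectations give
\begin{equation*}
    q^\pi_u(s_0) = \Ebb_{\pi}\!\left[\sum_{t=0}^{T-1}\gamma^t\bigl\{c(s_t,a_t) + \tilde{c}^\pi_u(s_t,a_t)\bigr\} + \gamma^T q^\pi_u(s_T)\right] + \Ebb_\pi\!\left[\sum_{t=0}^{T-1}\gamma^t \delta^\pi_u(s_t)\right].
\end{equation*}
Letting $T\to\infty$, the boundary term $\gamma^T\Ebb_\pi[q^\pi_u(s_T)]$ vanishes (as the quantile is bounded on the reachable states under the standing boundedness assumption and $\gamma<1$), and the error sum is dominated termwise by $|\delta^\pi_u(s_t)|\leq \frac{\epsilon R}{1-\epsilon}$, so a geometric series bound gives
\begin{equation*}
    \left\vert\,\Ebb_\pi\!\left[\sum_{t=0}^{\infty}\gamma^t\delta^\pi_u(s_t)\right]\right\vert \leq \sum_{t=0}^{\infty}\gamma^t\cdot\frac{\epsilon R}{1-\epsilon} = \frac{\epsilon R}{(1-\epsilon)(1-\gamma)},
\end{equation*}
which is exactly the right-hand side in \eqref{eq:quantile_approximation_policy_dependent_cost}.

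The main obstacle is the telescoping step: one must check that the per-step residual $\delta^\pi_u(s_t)$ — even after being wrapped inside the outer $\Ebb_\pi$ and multiplied by $\gamma^t$ — retains the uniform bound from Corollary \ref{cor:td_relation_quantile_bound}, which hinges on the bound from Corollary 1 holding uniformly in $s_t$ (so that taking the outer expectation over the trajectory distribution does not enlarge it). A secondary technical point is justifying the vanishing of the $\gamma^T q^\pi_u(s_T)$ tail term; this follows from Assumption \ref{assm:boundness}, which implies that the per-step discrepancy of the quantile along a trajectory is bounded so that $q^\pi_u(s_T)$ grows at most linearly in $T$, making the $\gamma^T$ factor dominate. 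Once these two points are handled, the telescoping and the geometric summation yield the stated bound.
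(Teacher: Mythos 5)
Your proposal is correct and follows essentially the same route as the paper: the identity $\mu^\pi_u = 1 + (\mu^\pi_u - 1)$ is exactly the paper's importance-weighting step $\Ebb_{\tilde{\pi}_u}[\cdot] = \Ebb_{\pi}[\tfrac{\tilde{\pi}_u}{\pi}\,\cdot\,]$ with $\tfrac{\tilde{\pi}_u}{\pi} = \mu^\pi_u$, and the recursive unrolling with the geometric accumulation of the per-step error $\tfrac{\epsilon R}{1-\epsilon}$ is identical. Your explicit treatment of the vanishing boundary term $\gamma^T q^\pi_u(s_T)$ and the uniformity of the one-step bound over states is a welcome bit of extra care that the paper leaves implicit.
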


Note that the additional cost $\tilde{c}^\pi_u(s_t, a_t)$ in Theorem \ref{thm:expectation_form_quantile_policy_dependent_cost}  is a policy-dependent cost function.
Under an additional mild assumption, we can find an upper bound of \eqref{eq:quantile_approximation_policy_dependent_cost} which replaces the policy-dependent cost function $\tilde{c}^\pi_u(s,a)$ with another cost $\tilde{c}^{\pi'}_u(s,a)$ for some fixed $\pi'$ independent of $\pi$, only requiring $\max_s \KL\left( \pi'(\cdot | s) ~\big\Vert~ \pi(\cdot | s) \right) \leq \delta$ for some $\delta > 0$. The additional assumption  is as follows:
\begin{myassm}[Lipschitz continuity of $\tilde{c}^\pi_u(s,a)$ over $\pi$] \label{assm:additional_cost}
    For any given fixed $u \in (0, 1)$ and any policies $\pi$ and $\pi'$, there exists a coefficient $C_u$ such that 
    \begin{equation}
        \left\vert \tilde{c}^{\pi'}_u(s, a) - \tilde{c}^{\pi}_u(s, a) \right\vert \leq C_u \cdot \max_{s'} \KL \left( \pi'(\cdot | s') ~\big\Vert~ \pi(\cdot | s') \right), ~~~\forall s \in \Scal, a \in \Acal.
    \end{equation}
\end{myassm}
Basically, Assumption \ref{assm:additional_cost} is that the function $\tilde{c}_u^\pi$ as a function of $\pi$ is continuous, which is expected to be satisfied if there is no abrupt change in the associated distributions. 
With Assumption \ref{assm:additional_cost} and Theorem \ref{thm:expectation_form_quantile_policy_dependent_cost}, we obtain an expression for the quantile $q_u^\pi(s_0)$ as a form of desired expected sum:
\begin{mythm} \label{thm:expectation_form_quantile_policy_independent_cost}
    Under  deterministic dynamics $s_{t+1} = h(s_t, a_t)$ and Assumptions \ref{assm:boundness}, \ref{assm:smoothness}, and \ref{assm:additional_cost}, the $u$-quantile $q^\pi_u(s_0)$ is expressed as  the expectation of the sum of actual cost and a $\pi$-independent additional cost  $\tilde{c}^{\pi'}_u(s, a)$ for $\pi'$ satisfying $\max_s \KL(\pi'(\cdot |s) ~\Vert~ \pi(\cdot |s)) \leq \delta$:
    \begin{equation}
        \left\vert q^{\pi}_u (s_0) - \Ebb_{\pi} \left[ \sum^\infty_{t=0} \gamma^t \left\{ c(s_t, a_t) + \tilde{c}^{\pi'}_u(s_t, a_t) \right\} \right] \right\vert \leq \frac{\epsilon R}{(1-\epsilon)(1-\gamma)} + \frac{C_u}{1-\gamma} \delta. \label{eq:quantile_approximation_expectation_form}
    \end{equation}
\end{mythm}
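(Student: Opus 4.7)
The plan is to obtain Theorem \ref{thm:expectation_form_quantile_policy_independent_cost} as a direct corollary of Theorem \ref{thm:expectation_form_quantile_policy_dependent_cost} by swapping the policy-dependent cost $\tilde{c}^\pi_u$ for the policy-independent cost $\tilde{c}^{\pi'}_u$ and absorbing the swap error via the triangle inequality. Concretely, I would write
\begin{equation*}
    q^\pi_u(s_0) - \Ebb_\pi\!\left[\sum_{t=0}^\infty \gamma^t\{c(s_t,a_t) + \tilde{c}^{\pi'}_u(s_t,a_t)\}\right]
    = \underbrace{\left(q^\pi_u(s_0) - \Ebb_\pi\!\left[\sum_{t=0}^\infty \gamma^t\{c(s_t,a_t) + \tilde{c}^{\pi}_u(s_t,a_t)\}\right]\right)}_{(\mathrm{I})}
    + \underbrace{\Ebb_\pi\!\left[\sum_{t=0}^\infty \gamma^t\{\tilde{c}^{\pi}_u(s_t,a_t) - \tilde{c}^{\pi'}_u(s_t,a_t)\}\right]}_{(\mathrm{II})},
\end{equation*}
and bound $|(\mathrm{I})| + |(\mathrm{II})|$ term by term.

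For the first term, Theorem \ref{thm:expectation_form_quantile_policy_dependent_cost} already gives $|(\mathrm{I})| \leq \epsilon R/\{(1-\epsilon)(1-\gamma)\}$, so nothing more is needed there. For the second term, the plan is to pull the absolute value inside the expectation and the sum by the triangle inequality, obtaining
\begin{equation*}
    |(\mathrm{II})| \leq \Ebb_\pi\!\left[\sum_{t=0}^\infty \gamma^t \left|\tilde{c}^{\pi}_u(s_t,a_t) - \tilde{c}^{\pi'}_u(s_t,a_t)\right|\right].
\end{equation*}
Then I would invoke Assumption \ref{assm:additional_cost}, which gives the pointwise bound $|\tilde{c}^{\pi'}_u(s,a) - \tilde{c}^{\pi}_u(s,a)| \leq C_u \cdot \max_{s'} \KL(\pi'(\cdot|s')\,\|\,\pi(\cdot|s'))$ uniformly in $(s,a)$. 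Since we are restricting attention to $\pi'$ with $\max_{s'} \KL(\pi'(\cdot|s')\,\|\,\pi(\cdot|s')) \leq \delta$, the integrand is bounded by $C_u \delta$ at every time step, yielding
\begin{equation*}
    |(\mathrm{II})| \leq C_u \delta \sum_{t=0}^\infty \gamma^t = \frac{C_u}{1-\gamma}\delta.
\end{equation*}
Combining the two bounds through the triangle inequality produces the claimed inequality.

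Almost all of the work is structural (triangle inequality plus a geometric-series sum), so there is no serious technical obstacle once Theorem \ref{thm:expectation_form_quantile_policy_dependent_cost} and Assumption \ref{assm:additional_cost} are in hand. The only subtlety worth checking is that Assumption \ref{assm:additional_cost} applies \emph{pointwise} in $(s,a)$, not merely on-trajectory, so the bound $C_u\delta$ is valid regardless of which $(s_t,a_t)$ is realized under the rollout distribution of $\pi$; this lets us move the bound outside the expectation cleanly and obtain a uniform constant that geometric-sums to $C_u/(1-\gamma)$. No measurability or integrability issues arise since $c$ and $\tilde{c}^\pi_u$ are bounded (implicitly by Assumption \ref{assm:boundness}) and $\gamma \in (0,1)$, making dominated convergence/Fubini interchange of sum and expectation immediate.
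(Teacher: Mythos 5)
Your proposal is correct and follows essentially the same route as the paper: the paper also invokes Theorem \ref{thm:expectation_form_quantile_policy_dependent_cost}, applies the pointwise bound $|\tilde{c}^{\pi'}_u(s,a) - \tilde{c}^{\pi}_u(s,a)| \leq C_u\delta$ from Assumption \ref{assm:additional_cost} inside the discounted sum, and collects the $\sum_t \gamma^t C_u\delta = C_u\delta/(1-\gamma)$ term. The only cosmetic difference is that the paper writes the argument as a pair of one-sided (upper/lower) inequalities rather than your single triangle-inequality decomposition into terms (I) and (II); the two are equivalent.
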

By Theorem \ref{thm:expectation_form_quantile_policy_independent_cost}, we can approximate the $u$-quantile $q^\pi_u(s_0)$ as the expectation of the sum of costs plus $\pi$-independent additional costs $\tilde{c}^{\pi'}_u(s,a)$ for a base policy $\pi'$, and this approximation is tighter when the distance between the current policy $\pi$ and the base policy $\pi'$ is smaller. 
Theorem \ref{thm:expectation_form_quantile_policy_independent_cost} can be interpreted the other way around. As in the case of PPO \cite{schulman2017proximal}, if we first simply set  $\pi'$ as the policy before the update, denoted as $\pi_{old}$, then the updated  $\pi$ is near from the base policy $\pi_{old}=\pi'$, and we can compute the corresponding KL distance between $\pi_{old}$ (= $\pi'$) and $\pi$. Then, still, the inequality \eqref{eq:quantile_approximation_expectation_form} holds for $\delta = \max_s KL\left( \pi_{old}(\cdot | s) || \pi(\cdot | s) \right)$. Now, this result enables us to solve  the quantile constrained problem \eqref{problem:quantile} by applying the policy gradient theorem.


\section{Quantile Constrained Policy Optimization} \label{sec:QCPO}

\begin{wrapfigure}{r}{0.5\textwidth}
    \vspace{-20pt}
  \begin{center}
    \includegraphics[width=0.45\textwidth]{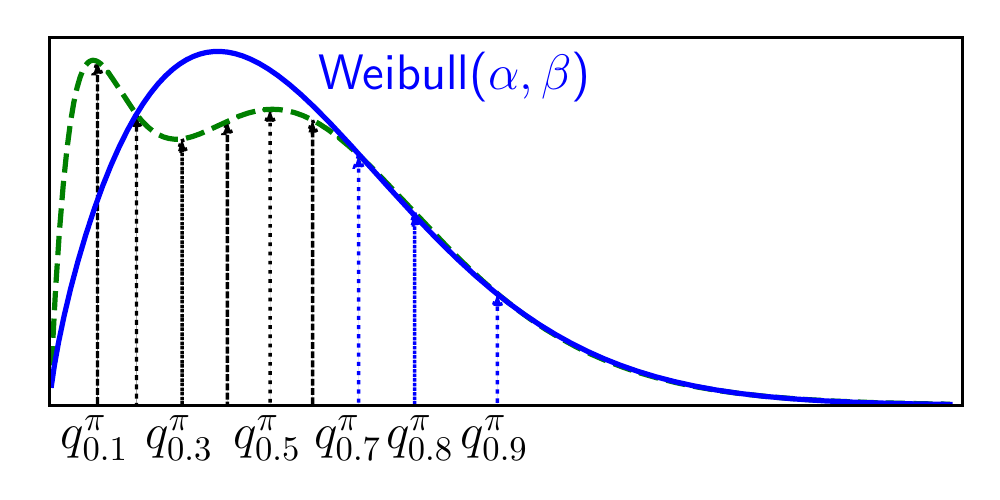}
  \end{center}
  \caption{Illustration of the quantile approximation and tail-probability approximation. Green-dash curve is the unknown PDF $p_{X^\pi(s)}(x)$ and up-arrow points  are the estimated $u$-quantiles $q^\pi_u(s)$. We approximate the PDF $p_{X^\pi(s)}(x)$ on the right tail by a Weibull distribution (blue curve) and this is approximated using the 4 rightmost quantile points (blue arrow).}
  \label{fig:quantile_and_right_tail_approximation}
  \vspace{-10pt}
\end{wrapfigure}

Using the theoretical results in Section \ref{sec:QCRL}, we now construct an algorithm named quantile constrained policy optimization (QCPO) to solve \eqref{problem:quantile} based on 
an on-policy RL algorithm:  PPO \cite{schulman2017proximal}.  The QCPO is a direct method to constrain the outage probability and  consists of three parts: 1) estimation of the $u$-quantile $q^\pi_u(s)$ of $X^\pi(s)$ for a given policy $\pi$, 2) estimation of PDF of $X^\pi(s)$ to compute the additional cost $\tilde{c}^{\pi'}_u(s,a)$, and 3) updating method of the Lagrange multiplier to control the outage probability. 
We first explain the overall structure of QCPO and the base loss function for the policy, which has a similar form to that in \cite{schulman2015trust}. Then, we provide a condition for policy improvement for the proposed method. The implementation of the proposed algorithm is based on the implementation of \cite{stooke2020responsive}, and details of the implementation, including the network structure, the loss functions, the Lagrangian multiplier update method, and the hyper-parameters, are in Appendix \ref{appendix:implementation_details}. The implementation code of QCPO is available at \href{https://github.com/wyjung0625/QCPO}{github.com/wyjung0625/QCPO}.

\subsection{Overall Structure of QCPO} \label{subsec:qcpo}

The agent of QCPO uses function approximators for the policy $\pi$, the value function $V^\pi(s)=\Ebb_{\pi}\left[ \sum^\infty_{t=0} \gamma^t r(s_t, a_t) \right]$ and the quantile function $q^\pi(s) = \left[q^\pi_{u_1}(s), q^\pi_{u_2}(s), \ldots, q^\pi_{u_{n_q}}(s) \right]$ of $X^\pi(s) = \sum^\infty_{t=0} \gamma^t c(s_t, a_t)$ for the policy $\pi$. These functions are parameterized by deep neural networks with parameters $\theta$, $\phi$, and $\psi$, respectively. We  denote $\theta_{old}$, $\phi_{old}$, and $\psi_{old}$ as their old parameters. 
Note that the quantile function $q_\psi(s)$ outputs $n_q$ values, and the $i$-th value  represents an estimate of the $u_i$-quantile $q^\pi_{u_i}(s)$ of $X^\pi(s)$ for  fixed target CDF values $\left[u_1, u_2, \ldots, u_{n_q}\right]$ with $u_1 < u_2 < \cdots < u_{n_q}$. 
In addition to these parameterized functions, we need another function that approximates the PDF of $X^\pi(s)$ on the right tail. (Here, the right-tail probability most matters since the target outage probability is typically small.)

As aforementioned in Section \ref{sec:background_relative_works}, we know that $X^\pi(s)$ follows LDP with a rate function $I_{X^\pi(s)}(x)$. However, finding the rate function $I_{X^\pi(s)}(x)$ in an analytic approach is hard. 
Therefore, in QCPO, the agent approximates the rate function of the form of $I_{X^\pi(s)}(x) \approx \left(  x / \beta(s)  \right)^{\alpha(s)}$ and learns the state-dependent parameters $\alpha(s)$ and $\beta(s)$ by using the quantiles on the right tail approximated by its quantile function for the right tail: $q^\pi_{u_{n_q - k + 1}}(s), \ldots, q^\pi_{u_{n_q}}(s)$. 
This approximation of the rate function results in approximation probability on the right tail as a Weibull distribution, whose tail distribution is $1 - F_{X^\pi(s)}(x) = e^{- \left(  x / \beta(s)  \right)^{\alpha(s)}}$. 
In order to obtain the state-dependent parameters $\alpha(s)$ and $\beta(s)$ for right-tail distribution approximation, we again parameterize them by neural networks with parameters $\xi$ and $\zeta$, respectively.
Fig.  \ref{fig:quantile_and_right_tail_approximation} shows both the quantile approximation and right-tail approximation of QCPO. Note that our approach actually learns the rate function governing the tail-probability decay rate, whereas the previous Gaussian approximation \cite{yang2021wcsac}  on the PDF of $X^\pi(s)$ fixes the rate function as quadratic $x^2$, which is not the correct rate function in general.

The overall procedure of QCPO is as follows: 1) estimate the value function $V^\pi(s)$ for return and estimate the quantile function $q^\pi_u(s)$, $u \in \left[u_1, u_2, \ldots, u_{n_q}\right]$ for the cumulative sum cost, 2) approximate tail distribution $p_{X^\pi(s)}(x)$ on the right tail using a Weibull distribution with parameters $\alpha(s)$, $\beta(s)$, 3) compute the additional cost for the base policy $\pi'$, $\tilde{c}^{\pi'}_{1-\epsilon_0}(s,a)$ for the quantile advantage $A^\pi_{1-\epsilon_0}(s, a) :=  c(s,a) + \tilde{c}^{\pi'}_{1-\epsilon_0}(s,a) + \gamma q^\pi_{1-\epsilon_0}(s') - q^\pi_{1-\epsilon_0}(s)$ , 4) take policy gradient using the sum of the value advantage and the quantile advantage $A_r(s,a) - \lambda A^\pi_{1-\epsilon_0}(s, a)$, 5) update the Lagrange multiplier $\lambda$. Since QCPO is based on PPO \cite{schulman2017proximal}, the loss functions for the policy and the value function are similar to those of PPO \cite{schulman2017proximal}. Please see Appendix \ref{appendix:loss_functions} and \ref{appendix:policy_loss} for detail.

\subsection{Policy Loss Function and Policy Improvement Condition} \label{subsec:policy_loss}
Let us consider  the policy loss function of QCPO to solve \eqref{problem:quantile}. The basic loss function of QCPO for a given Lagrange multiplier $\lambda$ is given by
\begin{align}
    L^{\pi_{old}}(\pi_\theta) - \tilde{C}_1 \max_s \KL(\pi_{old}(\cdot |s) ~\Vert~ \pi_{\theta}(\cdot |s)) \label{eq:policy_loss_function}
\end{align}
where 
\begingroup
\allowdisplaybreaks
\begin{align}
    L^{\pi_{old}}(\pi_\theta) &= \left( V^{\pi_{old}}(s_0) - \lambda q^{\pi_{old}}_{1-\epsilon_0}(s_0) \right) + \Ebb_{s \sim \rho^{\pi_{old}}, a \sim \pi_\theta}\left[  A^{\pi_{old}}_r(s_t, a) - \lambda A^{\pi_{old}}_{1-\epsilon_0}(s_t, a) \right] \\
    A^{\pi_{old}}_r (s, a) &= r(s,a) + \gamma \Ebb_{s' \sim M(\cdot | s, a)}\left[  V^{\pi_{old}}(s') \right] - V^{\pi_{old}}(s) \\
    A^{\pi_{old}}_{1-\epsilon_0}(s, a) &= c(s,a) + \tilde{c}^{\pi_{old}}_{1-\epsilon_0}(s, a) + \gamma \Ebb_{s' \sim M(\cdot | s, a)} \left[  q^{\pi_{old}}_{1-\epsilon_0}(s') \right] - q^{\pi_{old}}_{1-\epsilon_0}(s),
\end{align}
\endgroup
and $\pi_{old} := \pi_{\theta_{old}}$ is the policy that collects the most recent batch of samples, $\rho^{\pi_{old}}(s) := \sum^{\infty}_{t=0} \gamma^t ~ \Pr(S_t = s | s_0, \pi_{old})$ is the stationary state distribution under $\pi_{old}$, $\tilde{C}_1$ is a constant (see Appendix \ref{appendix:proof_improvement_theorem}),
Now, we consider the relationship between the actually-desired maximization objective $L_{quant}(\pi, \lambda) = V^\pi(s_0) - \lambda \left( q^\pi_{1 - \epsilon_0}(s_0) - d_{th} \right)$ in \eqref{eq:lagrange_quantile} and the practical QCPO objective $L^{\pi_{old}}(\theta) - \tilde{C}_1 \max_s \KL(\pi_{old}(\cdot |s) ~\Vert~ \pi_{\theta}(\cdot |s))$ in  \eqref{eq:policy_loss_function}. The relationship between the two is given by the following theorem.

\begin{mythm} \label{thm:policy_improvement_condition}
    Let $\pi_{new} := \pi_{\theta_{new}}$ be the solution of the problem of maximizing 
    \begin{align}
        L^{\pi_{old}}(\pi_\theta) - \tilde{C}_1 \max_s \KL(\pi_{old}(\cdot |s) ~\Vert~ \pi_{\theta}(\cdot |s))
    \end{align}
    for some constant $\tilde{C}_1 > 0$. Then, under deterministic dynamics $s_{t+1} = h(s_t, a_t)$ and Assumptions \ref{assm:boundness}, \ref{assm:smoothness}, and \ref{assm:additional_cost}, the following inequality holds:
    \begin{align}
        &L_{quant}(\pi_{new}, \lambda) - L_{quant}(\pi_{old}, \lambda) \\
        &\geq L^{\pi_{old}}(\pi_{new}) - L^{\pi_{old}}(\pi_{old}) - \tilde{C}_1 \KL_{max}(\pi_{old} || \pi_{new}) - \underbrace{\tilde{C}_2 \frac{\epsilon}{1-\epsilon}}_{\text{approximation loss}} \label{eq:theorem4}
    \end{align}
    for a given Lagrange multiplier $\lambda > 0$, some constant $\tilde{C}_2$ and small $\epsilon > 0$.
\end{mythm}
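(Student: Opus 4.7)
The plan is to mimic the classical TRPO-style policy improvement bound (Kakade--Langford / Schulman et al.\ 2015), but applied simultaneously to the reward objective and to the quantile term via the expectation representation granted by Theorem 3. First I would split the left-hand side of \eqref{eq:theorem4} into two pieces,
\begin{equation*}
L_{quant}(\pi_{new}, \lambda) - L_{quant}(\pi_{old}, \lambda) = \bigl[V^{\pi_{new}}(s_0) - V^{\pi_{old}}(s_0)\bigr] - \lambda \bigl[q^{\pi_{new}}_{1-\epsilon_0}(s_0) - q^{\pi_{old}}_{1-\epsilon_0}(s_0)\bigr],
\end{equation*}
noting that the constant $\lambda d_{th}$ cancels.

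For the reward piece, I would invoke the standard performance difference lemma to write $V^{\pi_{new}}(s_0) - V^{\pi_{old}}(s_0) = \Ebb_{s \sim \rho^{\pi_{new}}, a \sim \pi_{new}}[A^{\pi_{old}}_r(s,a)]$, then replace the on-policy distribution $\rho^{\pi_{new}}$ by $\rho^{\pi_{old}}$ at the cost of a term proportional to $\KL_{max}(\pi_{old} \Vert \pi_{new})$, exactly as in TRPO. This yields a lower bound of the form $\Ebb_{s \sim \rho^{\pi_{old}}, a \sim \pi_{new}}[A^{\pi_{old}}_r(s,a)] - c_R \KL_{max}(\pi_{old}\Vert\pi_{new})$, which is precisely the reward part of $L^{\pi_{old}}(\pi_{new}) - L^{\pi_{old}}(\pi_{old})$ up to a KL penalty.

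For the quantile piece, the key tool is Theorem \ref{thm:expectation_form_quantile_policy_independent_cost}. I would apply it twice with base policy $\pi' = \pi_{old}$: once to $\pi = \pi_{new}$ to write
\begin{equation*}
q^{\pi_{new}}_{1-\epsilon_0}(s_0) \approx \Ebb_{\pi_{new}}\Bigl[\sum_{t=0}^{\infty} \gamma^t \bigl\{c(s_t,a_t) + \tilde{c}^{\pi_{old}}_{1-\epsilon_0}(s_t,a_t)\bigr\}\Bigr],
\end{equation*}
and once trivially to $\pi = \pi_{old}$, where the additional-cost term becomes $\tilde{c}^{\pi_{old}}_{1-\epsilon_0}$ itself. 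Each invocation contributes an error bounded by $\frac{\epsilon R}{(1-\epsilon)(1-\gamma)} + \frac{C_u}{1-\gamma}\delta$ where $\delta = \KL_{max}(\pi_{old}\Vert\pi_{new})$. Subtracting the two expressions turns the quantile difference into an expected advantage in the augmented cost $c + \tilde{c}^{\pi_{old}}_{1-\epsilon_0}$, i.e.\ $\Ebb_{s \sim \rho^{\pi_{new}}, a \sim \pi_{new}}[A^{\pi_{old}}_{1-\epsilon_0}(s,a)]$ plus error. I would then again swap the state distribution $\rho^{\pi_{new}} \to \rho^{\pi_{old}}$, paying another TRPO-style $\KL_{max}$ penalty. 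Multiplying by $-\lambda$ and combining with the reward bound reconstitutes $L^{\pi_{old}}(\pi_{new}) - L^{\pi_{old}}(\pi_{old})$ on the right, leaving a $\KL_{max}$ term with coefficient absorbed into $\tilde{C}_1$, and an approximation residual of order $\epsilon/(1-\epsilon)$ absorbed into $\tilde{C}_2$ (with the $C_u \delta/(1-\gamma)$ contribution folded into the KL penalty coefficient since $\delta = \KL_{max}$).

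The main obstacle I foresee is bookkeeping the constants cleanly. The $\tilde{C}_1$ must swallow three distinct $\KL_{max}$ contributions: the classical TRPO reward-side penalty, the analogous penalty from swapping $\rho$ in the quantile term, and the $\lambda C_u/(1-\gamma)$ factor arising from Theorem \ref{thm:expectation_form_quantile_policy_independent_cost} with $\delta = \KL_{max}(\pi_{old}\Vert\pi_{new})$. Meanwhile $\tilde{C}_2$ absorbs the purely smoothness-driven $\frac{\epsilon R}{(1-\epsilon)(1-\gamma)}$ error, amplified by $\lambda$ and a factor of two (once for each application of Theorem 3). The other subtlety is justifying the TRPO-style distribution swap for the quantile advantage; this works because once Theorem 3 re-expresses $q^\pi_{1-\epsilon_0}$ as a $\gamma$-discounted expectation against a fixed (policy-independent given $\pi' = \pi_{old}$) cost, the resulting quantity behaves exactly like a standard value function, so the Kakade--Langford identity and its $\KL_{max}$ consequence apply verbatim.
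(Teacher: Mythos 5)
Your proposal follows essentially the same route as the paper: decompose the Lagrangian difference into reward and quantile parts, handle the reward part with TRPO's Theorem~1, convert the quantile difference into a discounted expected advantage in the augmented cost $c+\tilde{c}^{\pi_{old}}_{1-\epsilon_0}$ via the expectation form of the quantile (the paper packages this as a telescoping lemma plus Assumption \ref{assm:additional_cost}), swap $\rho^{\pi_{new}}\to\rho^{\pi_{old}}$ at a $\KL_{max}$ cost, and absorb the residuals into $\tilde{C}_1$ and $\tilde{C}_2$. The one imprecision is your claim that the Kakade--Langford/TRPO coupling argument applies \emph{verbatim} to the quantile advantage: since $\Ebb_{a\sim\pi_{old}}\left[A^{\pi_{old}}_{1-\epsilon_0}(s,a)\right]$ vanishes only up to $\tfrac{\epsilon R}{1-\epsilon}$ (Theorem \ref{thm:td_relation_quantile_bound}), the paper must re-derive the coupling lemmas with this extra slack, and that slack is precisely one of the sources of the $\tilde{C}_2\tfrac{\epsilon}{1-\epsilon}$ residual you correctly anticipated.
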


Note that the term $\tilde{C}_2 \frac{\epsilon}{1-\epsilon}$ in \eqref{eq:theorem4} is due to our approximation of the quantile as an expected sum to apply policy gradient. Therefore, by Theorem \ref{thm:policy_improvement_condition}, when the improvement $L^{\pi_{old}}(\pi_{new}) - L^{\pi_{old}}(\pi_{old}) - \tilde{C}_1 \KL_{max}(\pi_{old} || \pi_{new}) ~(>0)$ by the policy update from the QCPO loss function is large enough to compensate for the approximation loss, the desired quantity will also be improved by our policy update. That is, the Lagrangian for the quantile constrained problem for $\pi_{new}$ will be higher than that for $\pi_{old}$.


\section{Experiments} \label{sec:experiments}

\subsection{Environments} \label{subsec:environments}

\begin{wrapfigure}{r}{0.65\textwidth}
    \vspace{-70pt}
    \centering
    \begin{subfigure}[b]{0.2\columnwidth}
        \centering
        \includegraphics[width=\textwidth]{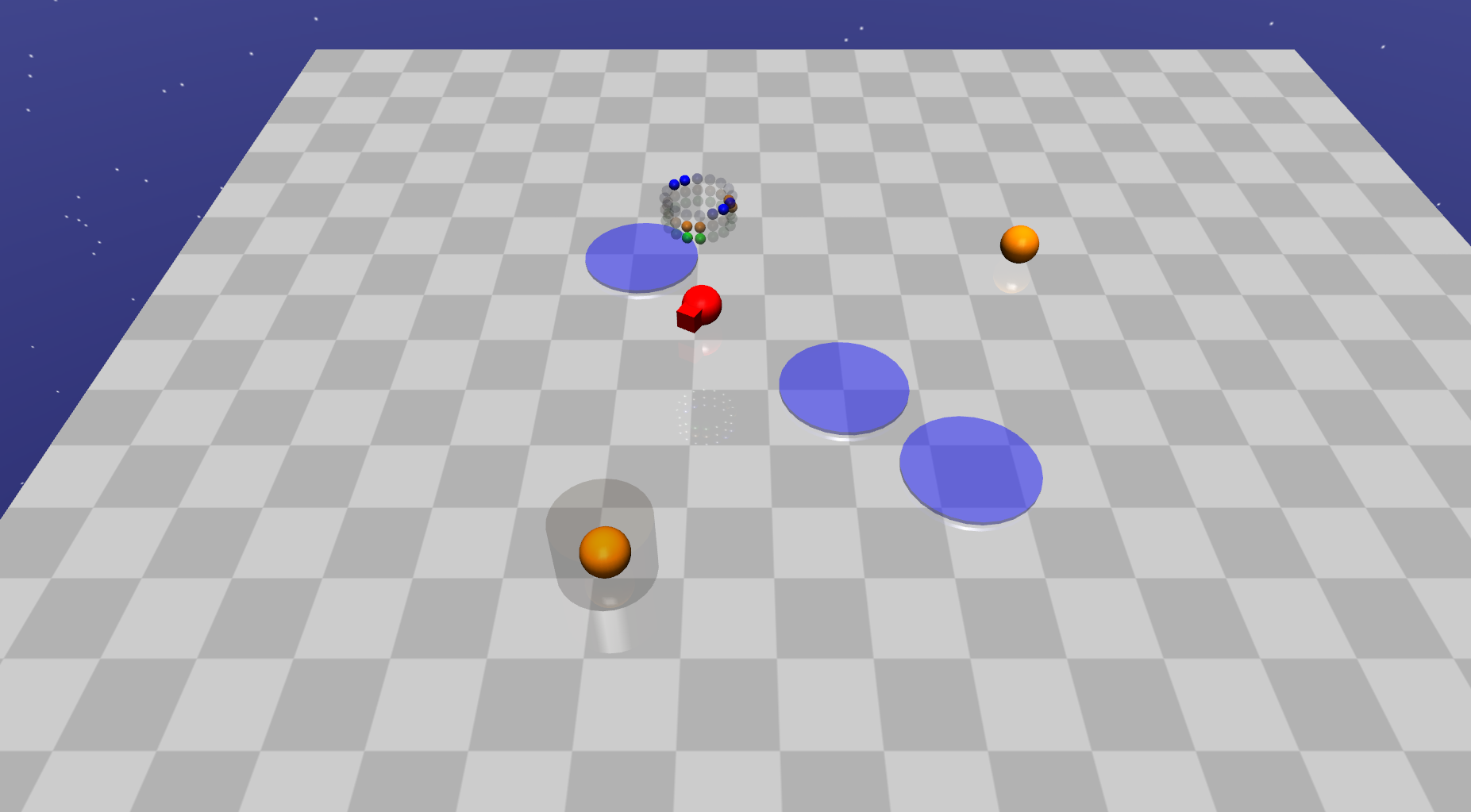}
        \caption{SimpleButtonEnv}
        \label{fig:SimpleButtonEnv}
    \end{subfigure}
    \begin{subfigure}[b]{0.2\columnwidth}
        \centering
        \includegraphics[width=\textwidth]{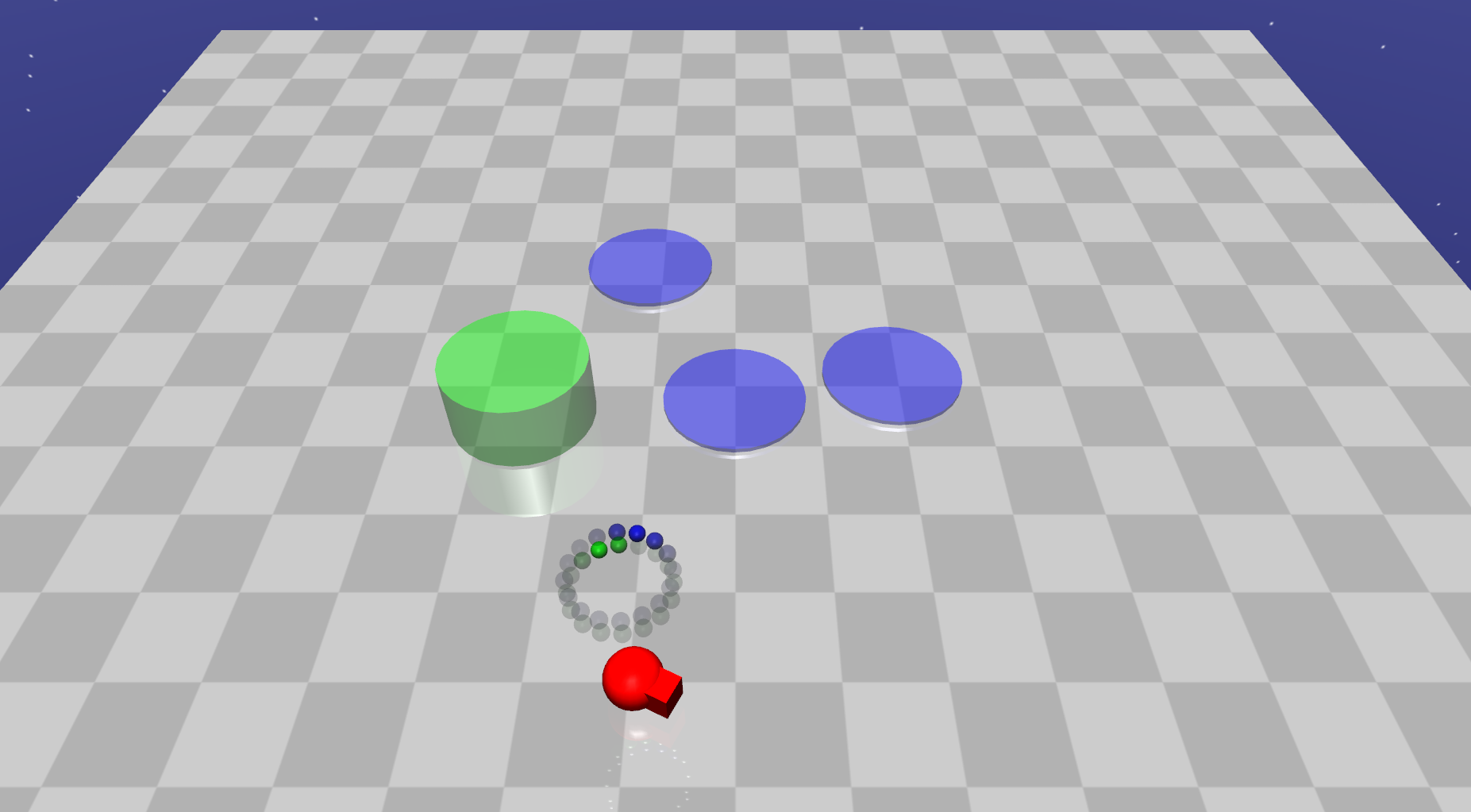}
        \caption{DynamicEnv}
        \label{fig:DynamicEnv}
    \end{subfigure}
    \begin{subfigure}[b]{0.2\columnwidth}
        \centering
        \includegraphics[width=\textwidth]{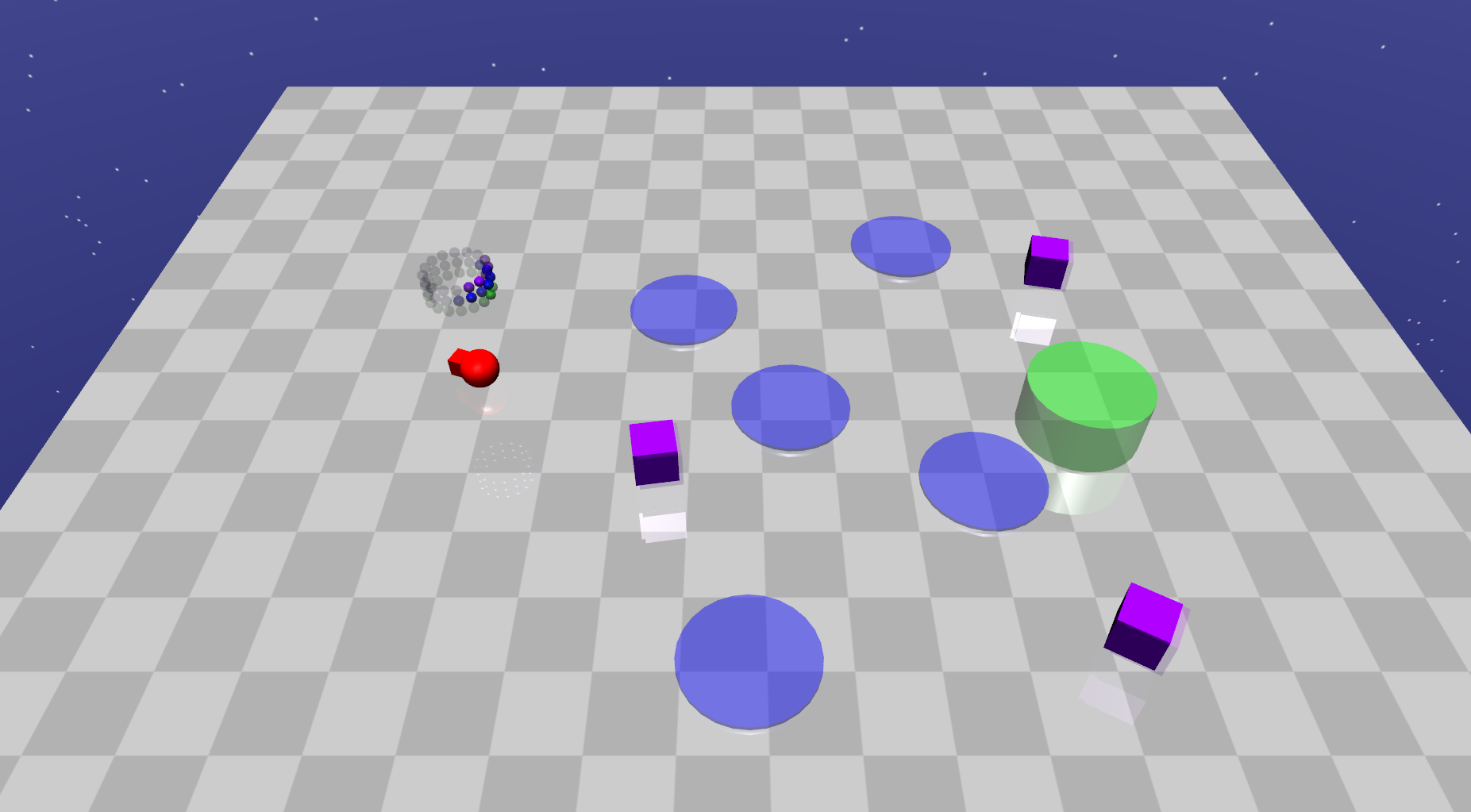}
        \caption{GremlinEnv}
        \label{fig:GremlinEnv}
    \end{subfigure}
    \caption{The considered environments}
    \label{fig:environments}
    \vspace{-10pt}
\end{wrapfigure}

We examined the performance of the proposed QCPO and compared it to that of WCSAC, which uses the CVaR constraint. The environments we considered are SimpleButtonEnv, DynamicEnv \cite{yang2021wcsac}, and GremlinEnv, which are based on Safety Gym \cite{ray2019benchmarking}, MuJoCo \cite{todorov2012mujoco}, and OpenAI Gym \cite{openaigym}. The environments can be considered as simplified versions of a real environment of an automatic serving robot and are illustrated in Fig. \ref{fig:environments}. The goal of these environments is for a robot (red sphere) to reach a goal (orange sphere wrapped by a grey translucent pillar, or green pillar) while avoiding the non-goal button (orange sphere), hazards (blue circle) or moving gremlins (purple box). Once the robot reaches the current goal, the environments generate the next goal deterministically (SimpleButtonEnv) or randomly (DynamicEnv, GremlinEnv), so the task complexity increases in the order of SimpleButtonEnv, DynamicEnv, and GremlinEnv. When the robot performs an action at time $t$, it receives a reward $\left\{ \left\Vert p_{t+1} - p_{\text{goal}} \right\Vert_2 - \left\Vert p_{t} - p_{\text{goal}} \right\Vert_2  \right\} + 1_{\text{goal reached}}$, where $p_{t}$ is the position ($x$, $y$) of the robot at time $t$ and $p_{\text{goal}}$ is the current goal position at time $t$. It also receives a cost $+1$ if the robot touches a non-goal object (the non-goal button, a hazard, or a gremlin) and $0$ otherwise. Thus, for the robot, it receives a higher return when the robot touches more goals in  maximum timesteps $T=1000$, and a higher sum of costs when the robot touches one of the other objects more often. A more detailed explanation of the environments is in Appendix \ref{appendix:environments}.

\subsection{Empirical Results} \label{subsec:empirical_results}

We compared the performance of the proposed algorithm (QCPO) with that of PPO with the Lagrangian multiplier method (PPO\_Lag)\footnote{We used the implementation code in \url{https://github.com/astooke/rlpyt/tree/master/rlpyt/projects/safe}, (MIT License)} for \eqref{problem:expectation} and that of WCSAC \cite{yang2021wcsac}\footnote{We used the github code that the authors of the paper uploaded: \url{https://github.com/AlgTUDelft/WCSAC}, (MIT License)} for \eqref{problem:cvar} which is a  stricter problem than \eqref{problem:probabilistic}. We set the threshold $d_{th} = 15$ in \eqref{problem:expectation}, \eqref{problem:cvar}, and \eqref{problem:quantile} and the target outage probability $\epsilon_0 = 0.1, 0.2$ in \eqref{problem:cvar} and \eqref{problem:quantile}.

Fig. \ref{fig:result} shows the results of the considered algorithms on SimpleButtonEnv, DynamicEnv, and GremlinEnv. All experiments were done with 10 different random seeds, and the real line and the shaded area represent the average and average $\pm$ standard deviation, respectively. PPO with the Lagrangian multiplier method for \eqref{problem:expectation} (green) keeps the average of the sum cost around the threshold $d_{th} = 15$ well (please see the graph in Appendix \ref{appendix:performance_comparison}), and its outage probability becomes around $0.35$ as we can observe in Fig. \ref{fig:SimpleButtonEnv_ProbOutage}, \ref{fig:DynamicEnv_ProbOutage}, and \ref{fig:GremlinEnv_ProbOutage}. As aforementioned, the CVaR approach (WCSAC) should satisfy a sufficient condition for satisfying the outage probability constraint in \eqref{problem:probabilistic}. It is seen  that WCSAC ($\epsilon_0 = 0.2$ (purple), $\epsilon_0 = 0.1$ (red)) achieves a lower or similar outage probability to the threshold $\epsilon_0$ in Fig. \ref{fig:SimpleButtonEnv_ProbOutage}, but the algorithm does not satisfy the outage probability constraint exactly in Fig. \ref{fig:DynamicEnv_ProbOutage} and \ref{fig:GremlinEnv_ProbOutage}. This means that the Gaussian distribution approximation of the distribution of $X^\pi(s)$ has limited capability to capture the decay rate of the tail probability. On the other hand, the proposed QCPO ($\epsilon_0 = 0.2$ (blue), $\epsilon_0 = 0.1$ (orange)) maintains the outage probability around the desired target outage probability very well, as shown in Fig. \ref{fig:SimpleButtonEnv_ProbOutage}, \ref{fig:DynamicEnv_ProbOutage}, and \ref{fig:GremlinEnv_ProbOutage}.

Now consider the average return of these algorithms. In constrained RL, in general, if an algorithm is allowed to have a higher sum of costs, then it has a higher return. Thus, as seen in Fig. \ref{fig:SimpleButtonEnv_ProbOutage}, \ref{fig:DynamicEnv_ProbOutage}, and \ref{fig:GremlinEnv_ProbOutage}, PPO\_Lag induces the highest outage probability, so it has the highest average return, as shown in Fig. \ref{fig:SimpleButtonEnv_Return}, \ref{fig:DynamicEnv_Return}, and \ref{fig:GremlinEnv_Return}. The direct comparison between WCSAC and QCPO is less meaningful in DynamicEnv and GremlinEnv, since WCSAC does not satisfy the outage probability constraint, but it is fair in SimpleButtonEnv because both algorithms satisfy the outage probability constraint. As seen in Fig. \ref{fig:SimpleButtonEnv_Return}, QCPO achieves a higher average return than WCSAC for the same target probability constraint $\epsilon_0=0.1, 0.2$. This is because QCPO satisfies the target outage probability exactly, i.e., uses the given cost budget fully for a higher return. 
We provided more results in Appendix \ref{appendix:more_results}.

\begin{figure}[t]
    \centering
    \begin{subfigure}[b]{0.32\columnwidth}
        \centering
        \includegraphics[width=\textwidth]{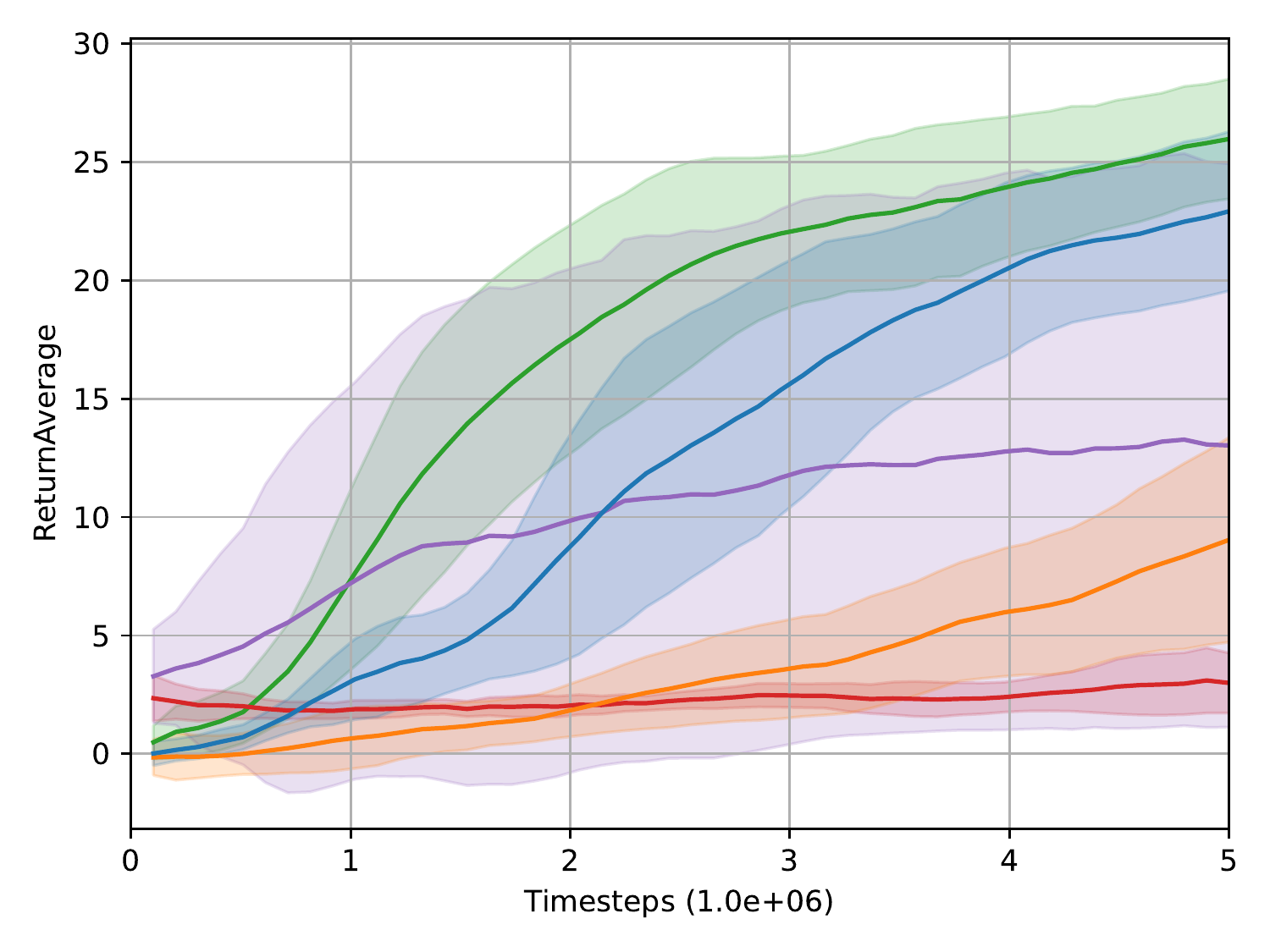}
        \caption{Average Return}
        \label{fig:SimpleButtonEnv_Return}
    \end{subfigure}
    \begin{subfigure}[b]{0.32\columnwidth}
        \centering
        \includegraphics[width=\textwidth]{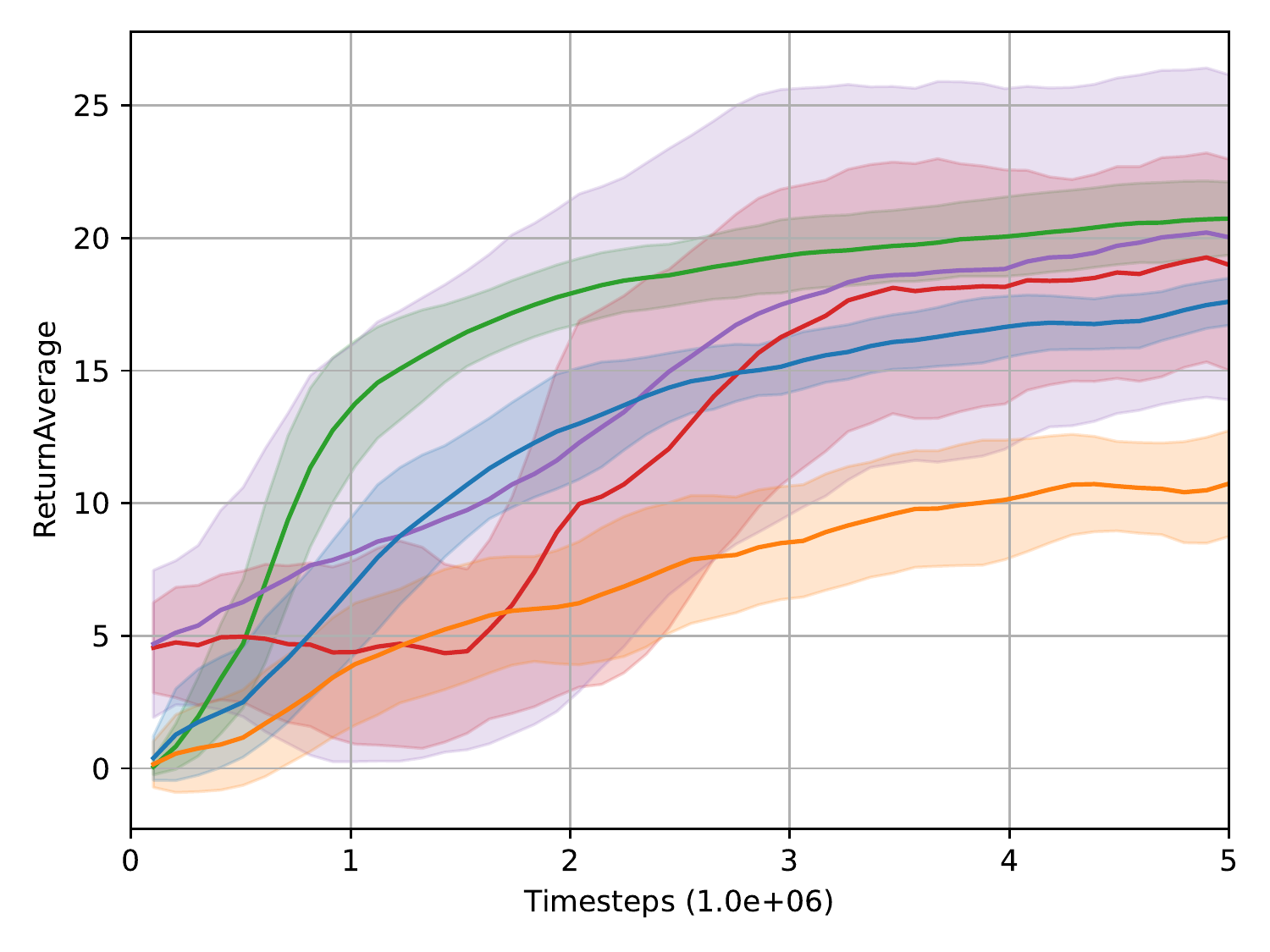}
        \caption{Average Return}
        \label{fig:DynamicEnv_Return}
    \end{subfigure}
    \begin{subfigure}[b]{0.32\columnwidth}
        \centering
        \includegraphics[width=\textwidth]{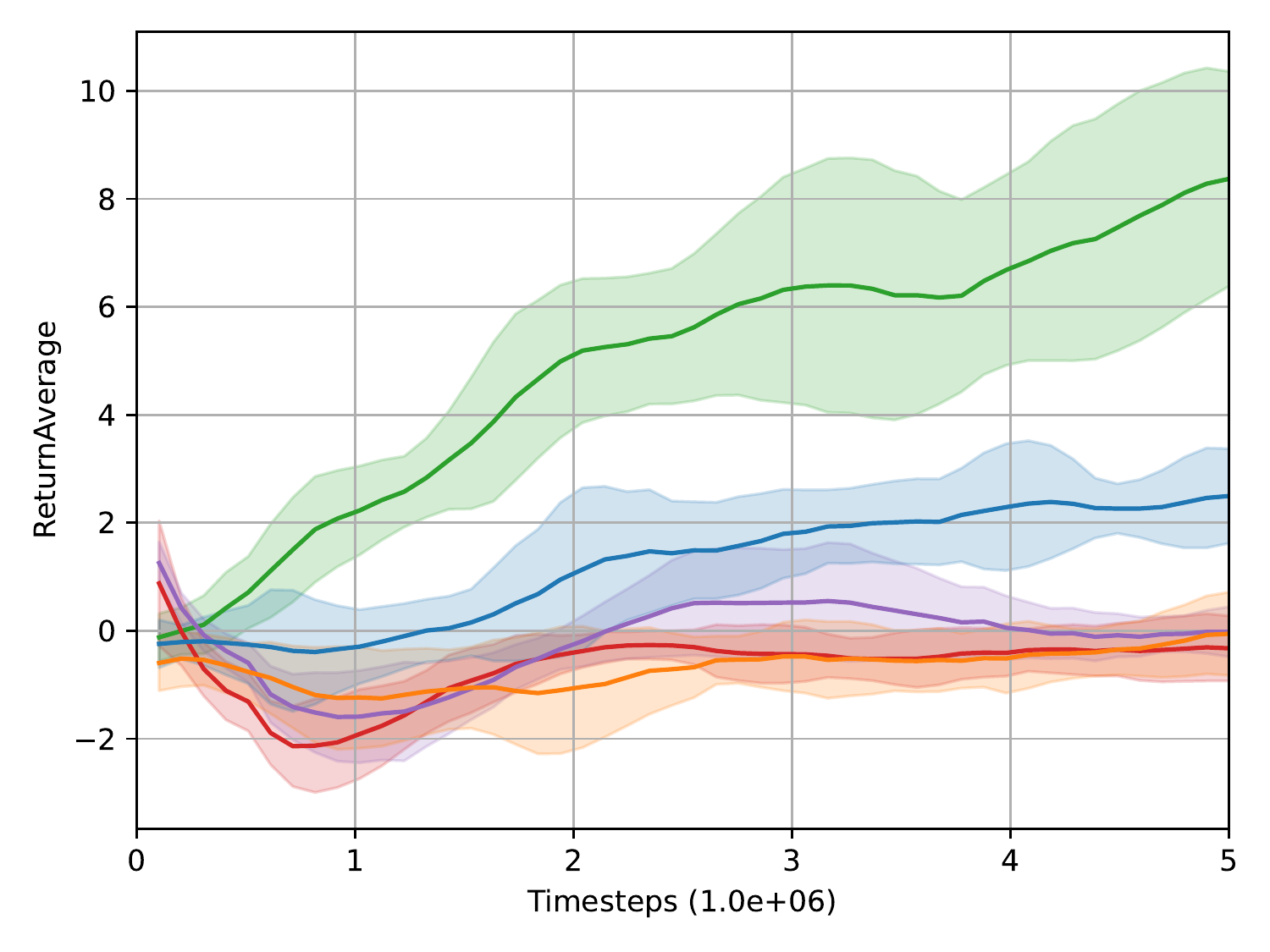}
        \caption{Average Return}
        \label{fig:GremlinEnv_Return}
    \end{subfigure}
    \begin{subfigure}[b]{0.32\columnwidth}
        \centering
        \includegraphics[width=\textwidth]{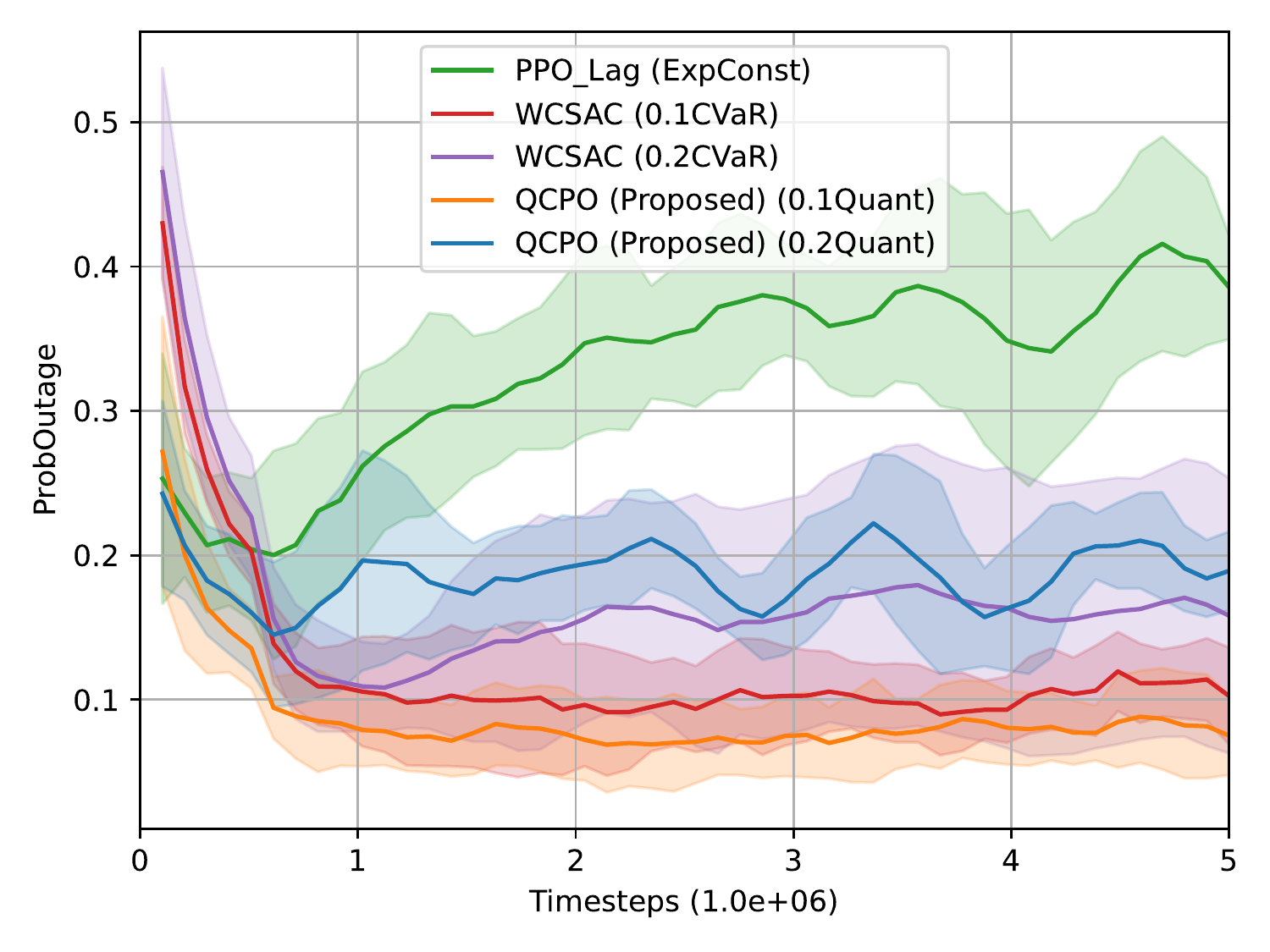}
        \caption{Outage Probability}
        \label{fig:SimpleButtonEnv_ProbOutage}
    \end{subfigure}
    \begin{subfigure}[b]{0.32\columnwidth}
        \centering
        \includegraphics[width=\textwidth]{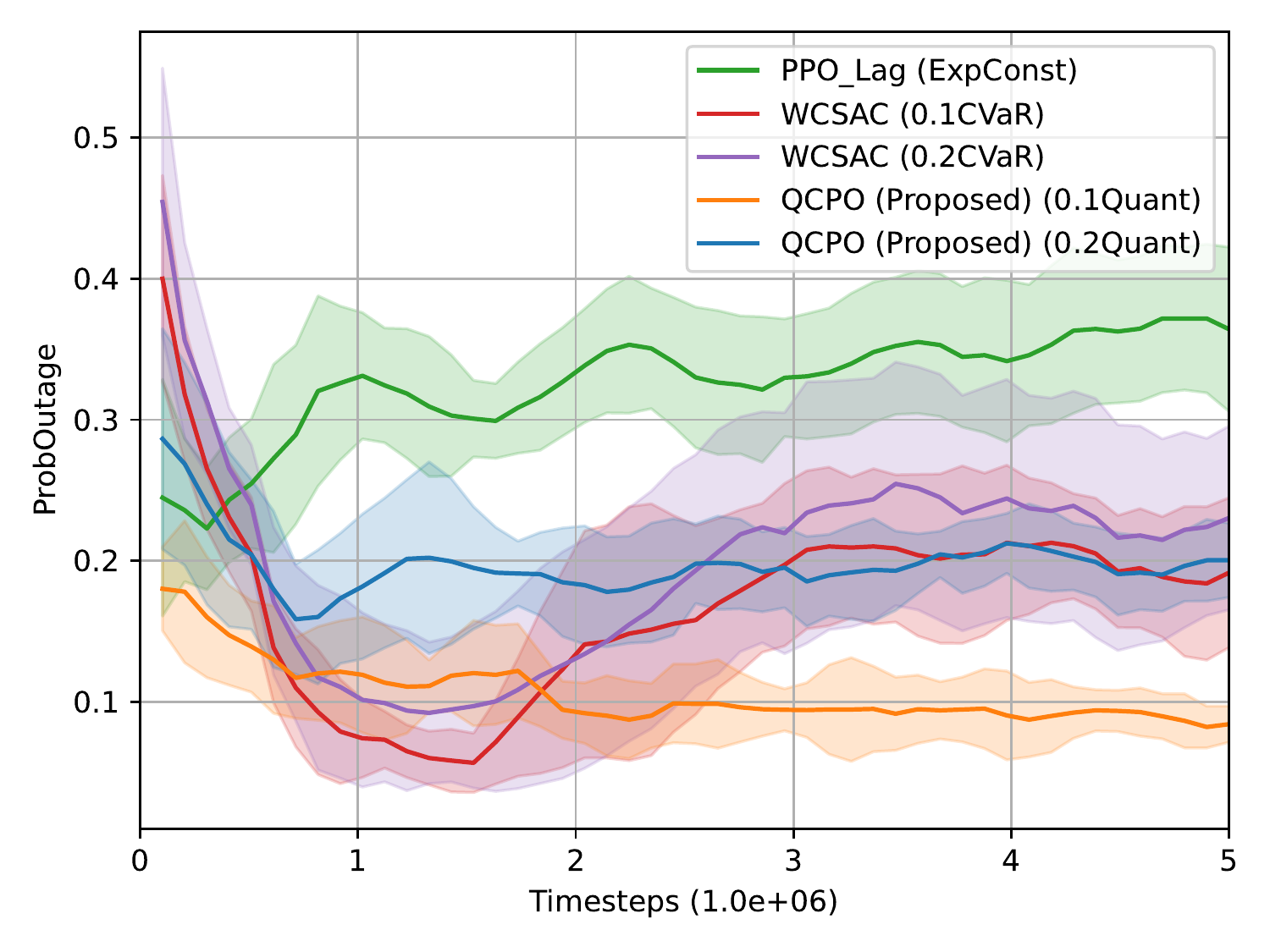}
        \caption{Outage Probability}
        \label{fig:DynamicEnv_ProbOutage}
    \end{subfigure}
    \begin{subfigure}[b]{0.32\columnwidth}
        \centering
        \includegraphics[width=\textwidth]{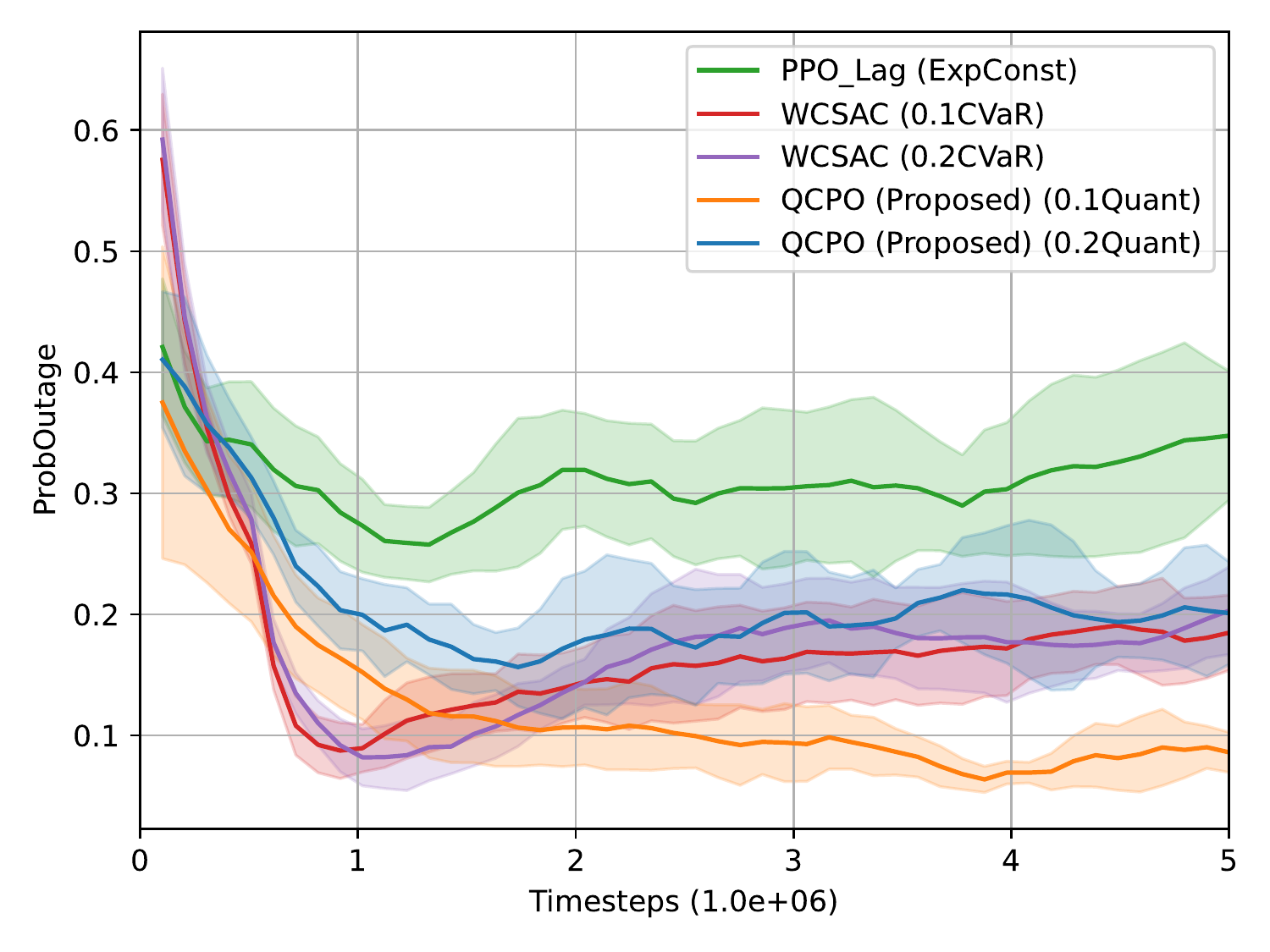}
        \caption{Outage Probability}
        \label{fig:GremlinEnv_ProbOutage}
    \end{subfigure}
    \caption{(left) SimpleButtonEnv, (middle)  DynamicEnv, and (right)  GremlinEnv: (upper row) average return and (lower row) outage probability of the most current 100 episodes.}
    \label{fig:result}
\end{figure}

\section{Conclusion} \label{sec:conclusion}

We have proposed the framework of quantile-constrained RL to constrain the outage probability by adopting a constraint on the quantile, which is equivalent to the outage probability constraint.  We have investigated  issues in applying the policy gradient theorem to the Lagrangian  of the quantile-constrained RL problem and  have converted the quantile into an additive form of costs so that the application of the policy gradient theorem is feasible.  Based on our derivation, we have constructed the QCPO algorithm, which uses distributional RL techniques to learn the $u$-quantile of the cumulative sum cost $X^\pi(s)$, and  Weibull distribution to approximate the tail distribution of $X^\pi(s)$. We also proved the policy improvement condition for QCPO and showed that there exists an approximation loss due to our approximation of the quantile. Empirical results show that QCPO constrains the outage probability well as the desired target value. The meaning of such exact satisfaction of the outage probability is two-fold: First, the constraint on the  outage probability is satisfied to control the probability of unsafe events, and second, the exact satisfaction of the cost constraint enables us to exploit the cost budget fully and obtain a higher return. Empirical results demonstrated the effectiveness of the proposed scheme.

\acksection

This work was supported by Institute of Information \& Communications Technology Planning \& Evaluation (IITP) grant funded by the Korea government (MSIT) (No.2022-0-00469, Development of Core Technologies for Task-oriented Reinforcement Learning for Commercialization of Autonomous Drones, 50\%) and by Institute of Information \& Communications Technology Planning \& Evaluation (IITP) grant funded by the Korea government (MSIT) (No.2022-0-00124, Development of Artificial Intelligence Technology for Self-Improving Competency-Aware Learning Capabilities, 50\%)

\newpage
\bibliography{qcpo_final}
\bibliographystyle{plainnat}

\section*{Checklist}

\begin{enumerate}

\item For all authors...
\begin{enumerate}
  \item Do the main claims made in the abstract and introduction accurately reflect the paper's contributions and scope?
    \answerYes{}
  \item Did you describe the limitations of your work?
    \answerYes{We mentioned theoretical bound and approximation loss in Section \ref{sec:QCRL} and \ref{sec:QCPO}.}
  \item Did you discuss any potential negative societal impacts of your work?
    \answerNA{}
  \item Have you read the ethics review guidelines and ensured that your paper conforms to them?
    \answerYes{I have read the guidelines.}
\end{enumerate}

\item If you are including theoretical results...
\begin{enumerate}
  \item Did you state the full set of assumptions of all theoretical results?
    \answerYes{Please see Appendix \ref{appendix:proof}.}
  \item Did you include complete proofs of all theoretical results?
    \answerYes{Please see Appendix \ref{appendix:proof}.}
\end{enumerate}

\item If you ran experiments...
\begin{enumerate}
  \item Did you include the code, data, and instructions needed to reproduce the main experimental results (either in the supplemental material or as a URL)?
    \answerYes{The implementation code and instructions are uploaded in github. Please see github address in page 7.}
  \item Did you specify all the training details (e.g., data splits, hyperparameters, how they were chosen)?
    \answerYes{Please see Appendix \ref{appendix:implementation_details}.}
  \item Did you report error bars (e.g., with respect to the random seed after running experiments multiple times)?
    \answerYes{Experimental results show mean and standard deviation with 10 random seeds.}
  \item Did you include the total amount of compute and the type of resources used (e.g., type of GPUs, internal cluster, or cloud provider)?
    \answerYes{Please see Appendix \ref{appendix:environments}.}
\end{enumerate}

\item If you are using existing assets (e.g., code, data, models) or curating/releasing new assets...
\begin{enumerate}
  \item If your work uses existing assets, did you cite the creators?
    \answerYes{Please see footnotes in page 9.}
  \item Did you mention the license of the assets?
    \answerYes{Please see footnotes in page 9.}
  \item Did you include any new assets either in the supplemental material or as a URL?
    \answerYes{Please see github address in page 7.}
  \item Did you discuss whether and how consent was obtained from people whose data you're using/curating?
    \answerNA{}
  \item Did you discuss whether the data you are using/curating contains personally identifiable information or offensive content?
    \answerNA{}
\end{enumerate}

\item If you used crowdsourcing or conducted research with human subjects...
\begin{enumerate}
  \item Did you include the full text of instructions given to participants and screenshots, if applicable?
    \answerNA{}
  \item Did you describe any potential participant risks, with links to Institutional Review Board (IRB) approvals, if applicable?
    \answerNA{}
  \item Did you include the estimated hourly wage paid to participants and the total amount spent on participant compensation?
    \answerNA{}
\end{enumerate}

\end{enumerate}

\newpage
\appendix

\section{More Backgrounds} \label{appendix:more_backgounds}

\subsection{Distributional RL} \label{appendix:distributional_rl}

Distributional RL \cite{barth-maron2018distributional, bellemare2017distributional, dabney2018implicit} is an area of RL that considers the distribution of the cumulative return $Z^\pi(s, a) = \sum^\infty_{t=0} \gamma^t r(S_t, A_t)$ for $S_0 = s$, $A_0 = a$, $S_{t+1} \sim M(\cdot | S_t, A_t)$, $A_{t+1} \sim \pi(\cdot | S_t)$, $t=0, 1, \cdots$, instead of the expectation of the cumulative return $Q^\pi(s, a) = \Ebb_{\pi}[Z^\pi(s,a)] = \Ebb_{\pi}\left[ \sum^\infty_{t=0} \gamma^t r(s_t, a_t) \right]$ to optimize a policy $\pi$. In distributional RL, the distribution of the cumulative return $Z^\pi(s, a)$ is computed by the distributional Bellman equation \cite{bellemare2017distributional}, defined as
\begin{equation}
    Z^\pi(s,a) \overset{D}{=} r(s,a) + Z^\pi(S', A')
\end{equation}
for $S' \sim M(\cdot | s, a), A' \sim \pi(\cdot | S')$, where  $\overset{D}{=}$ means that the random variable in the left-hand side (LHS) has the same distribution to that in the right-hand side (RHS). So,  the following holds\cite{ma2021conservative}:
\begin{align}
    F_{Z^\pi(s,a)}(z) &= \Ebb_{s' \sim M, a' \sim \pi} \left[ F_{r(s,a) + \gamma Z^\pi(s',a')}\left( z \right) \right] \nonumber \\
        &= \Ebb_{s' \sim M, a' \sim \pi} \left[ F_{Z^\pi(s',a')}\left( \frac{z - r(s,a)}{\gamma} \right) \right] \label{appendix:eq:relation_cdf}\\
    p_{Z^\pi(s,a)}(z) &= \frac{1}{\gamma} \Ebb_{s' \sim M, a' \sim \pi} \left[ p_{Z^\pi(s',a')}\left( \frac{z - r(s,a)}{\gamma} \right) \right], \label{appendix:eq:relation_pdf}
\end{align}
where $F_X(x)$ and $P_X(x)$ denote the cumulative distribution function (CDF) and PDF of a random variable $X$, respectively, and \eqref{appendix:eq:relation_pdf} is obtained by taking derivative of  \eqref{appendix:eq:relation_cdf}.  To train the distribution of the cumulative return $Z^\pi(s,a)$, the $p$-Wasserstein distance $W_p(X, Y)$ is typically used, which can be  written explicitly as 
\begin{equation}
    W_p(X, Y) = \left( \int^1_0 \left\vert F^{-1}_X(u) - F^{-1}_Y(u) \right\vert^p \right)^{1/p}
\end{equation}
for $p < \infty$, where  $F^{-1}_X(u) = \inf \left\{ x ~|~ F_X(x) \geq u \right\}=:Q_X(u)  $ is the quantile function (inverse CDF) of the random variable $X$.
\citet{dabney2018implicit, dabney2018distributional, mavrin19distributional, kuznetsov2020controlling, yang2019fully} used quantile regression to learn the quantile of the cumulative return $Z^\pi(s,a)$. The quantile regression loss is given by $L_{quant, u}(q) = \Ebb_X \left[ l_{quant, u}(X - q) \right]$,  where 
\begin{equation}
    l_{quant, u}(x) = \left( u - 1_{\{x < 0\}} \right) \cdot x. 
\end{equation}
To smooth the gradient, they used the quantile Huber loss function $L_{Huber, u}(q) = \Ebb_X \left[ l_{Huber, u}(X-q) \right]$ for a given $\kappa > 0$, where
\begin{align}
    l_{Huber, u}(x) &= \left\vert u - 1_{\{x < 0\}} \right\vert ~ \frac{L_{\kappa}(x)}{\kappa}, \\
    L_{\kappa}(x) &= \left\{ \begin{array}{ll} \frac{1}{2} x^2 , \quad & \text{if } \vert x \vert \leq \kappa \\ \kappa \left( \left\vert x \right\vert - \frac{1}{2} \kappa \right) , \quad & \text{otherwise.}\end{array} \right. \nonumber
\end{align}
In this paper, we estimate the quantiles of the cumulative sum cost using the quantile loss, and use them to solve the constrained optimization problem \eqref{problem:quantile}. 

\begin{equation}\tag{QuantCP}
    \begin{array}{ll} \text{Maximize} \quad &\Ebb_{\pi}\left[ \sum^\infty_{t=0} \gamma^t r(s_t, a_t) \right] \\ \text{Subject to} \quad &q^\pi_{1-\epsilon_0}(s_0) \leq d_{th},  \end{array} 
\end{equation}

\subsection{Large Deviation Principle (LDP)} \label{appendix:LDP}

Large deviation principle (LDP) \cite{dembo1998large} is a technique for estimating the limiting behavior of a sequence of distributions. A simple example is the empirical mean $\bar{X}_n = \frac{1}{n} \sum^n_{k=1} X_k$ of i.i.d. random variables $X_i$. We say that a sequence $\{\bar{X}_n\}$ satisfies LDP if the sequence of its log probability distribution $\frac{1}{n} \log{\Pr\left( \bar{X}_n \in \Gamma \right)}$ satisfies the following condition $\frac{1}{n} \log{\Pr\left( \bar{X}_n \in \Gamma \right)} \overset{n \rightarrow \infty}{\longrightarrow} - \inf_{x \in \Gamma } I(x)$ for some function $I(x)$. The function $I(x)$ satisfying such limiting behavior is called the rate function of $\bar{X}_n$. The rate function $I(x)$ is also related to the cumulative distribution function $F_{\bar{X}_n}(x)$ since $1 - F_{\bar{X}_n}(x_0) = \Pr\left(\bar{X}_n \in [x_0, \infty) \right) \approx \exp{\left(-n \inf_{x \in [x_0, \infty)} I(x) \right)}$ for some $x_0 > \Ebb[X]$ and sufficiently large $n$. 

LDP can be applied to finite state Markov chains \cite{dembo1998large}. Let $Y_k \in \mcal{Y} = \left\{ y^1, \ldots y^m \right\}$ be random variables that follows the Markov property: $\Pr(Y_1=y_1, \ldots, Y_n=y_n) = p_0(y_1) \prod^{n}_{i=1} M(y_{i+1} | y_i)$. Then, the sequence of empirical means $Z_n := \frac{1}{n} \sum^n_{k=0} X_k$, where $X_k = f(Y_k)$ for some function $f: \mcal{Y} \rightarrow \Rbb^d$, satisfies LDP and the rate function  is given by $I(z) = \sup_{\lambda \in \Rbb^d} \left\{ \langle \lambda, z \rangle - \log{\rho(\Pi_\lambda)} \right\}$, where $\rho(\Pi)$ is the Perron-Frobenius eigenvalue of a given matrix $\Pi$, and $\Pi_\lambda$ is the matrix whose $(i,j)$-th element is $M(y^j | y^i)\exp{\langle \lambda, f(y^j) \rangle}$. 

In this paper, we consider the tail probability of the distribution of the cumulative sum cost $X^\pi(s_0) = \sum^\infty_{t=0} \gamma^t c(s_t, a_t)$. Finding its analytic rate function is hard. Therefore, we instead approximate the rate function directly as $I_{X^\pi(s)}(x) \approx (x / \beta(s))^{\alpha(s)}$ with learnable parameters $\alpha(s)$ and $\beta(s)$, which results in a Weibull distribution: $1 - F_{X^\pi(s)}(x) = \exp{\left\{ -(x / \beta(s))^{\alpha(s)} \right\}}$. We use this distribution to approximate the tail probability of $p_{X^\pi(s)}(x)$ of $X^\pi(s)$.

\subsection{The Considered  Constrained Problems} \label{appendix:constrained_problem}

In this subsection, we list the  problems for constrained RL. The first constrained problem is a common problem used in many previous constrained RL papers. 
\begin{equation}\tag{ExpCP}
    \begin{array}{ll} \text{Maximize} \quad &V^\pi(s_0) := \Ebb_{\pi}\left[ \sum^\infty_{t=0} \gamma^t r(s_t, a_t) \right] \\ \text{subject to} \quad &C^\pi(s_0) := \Ebb_{\pi}\left[ \sum^\infty_{t=0} \gamma^t c(s_t, a_t) \right] \leq d_{th}, \end{array} 
    \label{appendix:problem:expectation}
\end{equation}
In \eqref{appendix:problem:expectation}, the cost constraint is that the expectation of the sum of costs is less than or equal to a threshold parameter $d_{th}$. Note that the threshold $d_{th}$ is set on the average (i.e., expectation) of the cumulative sum cost to avoid undesired high-cost events  in this formulation. However, solving the problem \eqref{appendix:problem:expectation} may have undesirable outcomes for real environments that typically need  constrained behavior on the event that the cost exceeds the threshold $d_{th}$.

There are two well-known techniques, called Value at Risk (VaR, or Quantile) and Conditional Value at Risk (CVaR), to manage undesirable events in the domain of finance\cite{rockafellar2002conditional}. In the context of RL, the definitions of the quantile and the CVaR for the distribution of the cumulative sum cost for a given $\pi$ are given by $q^\pi_u(s_0) := \inf \{ x ~|~ \Pr(X^\pi(s_0) \leq x) \geq u \}$ and $\cvar^\pi_u(s_0) := \Ebb_{\pi}\left[ X^\pi(s_0) ~\vert~ X^\pi(s_0) \geq q^\pi_u(s_0) \right]$, respectively. Note that the CVaR and the quantile are two  different measures for undesirable events, and the choice between the two depends on what  we desire. For example, an insurance company prefers  the CVaR of undesirable events to determine an insurance premium. On the other hand, a company developing an autonomous driving car system needs the quantile of undesirable events to guarantee the accident probability for safety.

\begin{figure}[ht!]
    \centering
    \includegraphics[width=0.6\textwidth]{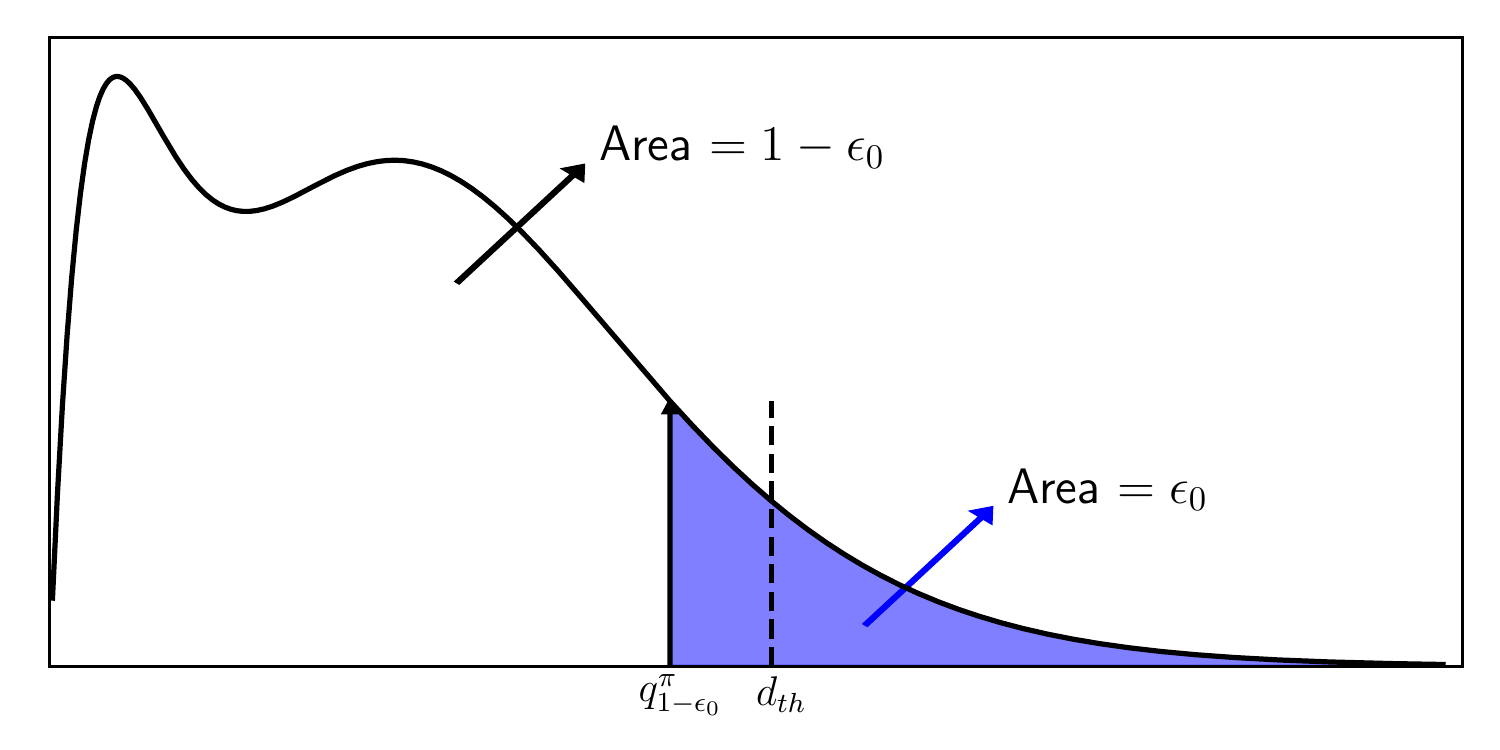}
    \caption{Equivalence between the outage probability constraint and the quantile constraint}
    \label{appendix:fig:relation_between_probcp_quantcp}
\end{figure}

The CVaR constrained problem to constrain undesirable events was previously used in RL \cite{chow2017risk, yang2021wcsac}, and the problem is explicitly formulated as  
\begin{equation}\tag{CVaR-CP}
    \begin{array}{ll} \text{Maximize} \quad &\Ebb_{\pi}\left[ \sum^\infty_{t=0} \gamma^t r(s_t, a_t) \right] \\ \text{Subject to} \quad &\cvar^\pi_{1-\epsilon_0}(s_0) \leq d_{th},  \end{array}
    \label{appendix:problem:cvar}
\end{equation}

In this paper, we focus on constraining the probability of undesirable events that the cost exceeds the threshold $d_{th}$. Thus we can consider a constrained problem with a probabilistic constraint as follows:
\begin{equation}\tag{ProbCP}
    \begin{array}{ll} \text{Maximize} \quad &V^\pi(s_0) = \Ebb_{\pi}\left[ \sum^\infty_{t=0} \gamma^t r(s_t, a_t) \right] \\ \text{Subject to} \quad &\Pr\left[ \sum^\infty_{t=0} \gamma^t c(S_t, A_t) > d_{th} \right] \leq \epsilon_0 \\
    &~~\text{for } S_0 = s_0, A_t \sim \pi(\cdot | S_t), S_{t+1} \sim M(\cdot | S_t, A_t). \end{array} 
    \label{appendix:problem:probabilistic}
\end{equation}
Our approach to this problem is  first to convert the outage probability constraint in \eqref{appendix:problem:probabilistic} into a quantile constraint $q^\pi_{1-\epsilon_0}(s_0) \leq d_{th}$, which is equivalent to the original probabilistic constraint (See Fig. \ref{appendix:fig:relation_between_probcp_quantcp}), and then to solve the equivalent optimization: 
\begin{equation}\tag{QuantCP}
    \begin{array}{ll} \text{Maximize} \quad &\Ebb_{\pi}\left[ \sum^\infty_{t=0} \gamma^t r(s_t, a_t) \right] \\ \text{Subject to} \quad &q^\pi_{1-\epsilon_0}(s_0) \leq d_{th},  \end{array} \label{appendix:problem:quantile}
\end{equation}

Note that the $(1-\epsilon_0)$-quantile denoted as $q^\pi_{1-\epsilon_0}(s)$ is always less or equal to than the $(1-\epsilon_0)$-CVaR denoted as $\cvar^\pi_{1-\epsilon_0}(s)$ for all $s \in \mcal{S}$ because of the definition of the CVaR. Therefore, satisfying the CVaR constraint is a sufficient condition for satisfying the probabilistic constraint, and hence  this problem is a  stricter problem than \eqref{appendix:problem:probabilistic} or \eqref{appendix:problem:quantile}. Therefore, the algorithms proposed to solve \eqref{appendix:problem:cvar} can be used for solving \eqref{appendix:problem:probabilistic}, and this should satisfy the probabilistic constraint in theory.

\newpage
\section{Proofs} \label{appendix:proof}

\setcounter{mythm}{0}
\setcounter{mycor}{0}
\setcounter{mylem}{0}
\setcounter{myassm}{0}

In the following proofs, we used text color so that readers can follow the proof easily.

\subsection{Proof of Theorem \ref{appendix:thm:td_relation_quantile_bound}}

\begin{myassm}[Boundness of quantile difference] \label{appendix:assm:boundness}
    For a given policy $\pi$, the following two quantities are bounded
    \begin{align}
        \left\vert c(s,a) + \gamma q^\pi_u(s') - q^\pi_u(s) \right\vert &\leq \gamma R \\
        \left\vert q^\pi_u(s) - F^{-1}_{X^\pi(s)}\left( \Fqtarg \right) \right\vert &\leq R
    \end{align}
    for all $(s, a, s') \in \Scal \times \Acal \times \Scal$ such that $\pi(a | s) \cdot M(s' | s, a) > 0$.
\end{myassm}
Note that for finite MDPs, which are assumed for many RL proofs, this assumption definitely holds with a finite cost function.

\begin{myassm}[Smoothness of CDF of $X^\pi(s)$] \label{appendix:assm:smoothness}
    For each state $s$, the average slope of $F_{X^\pi(s)}(x)$ between $q^\pi_u(s)$ and $y \in [q^\pi_u(s)-R, q^\pi_u(s)+R]$ is bounded by
    \begin{equation}
        \frac{1}{1 + \epsilon} \cdot p_{X^\pi(s)} \left( q^\pi_u(s) \right) \leq \frac{F_{X^\pi(s)}\left( q^\pi_u(s) \right) - F_{X^\pi(s)}\left( y \right) }{ q^\pi_u(s) - y } \leq \frac{1}{1 - \epsilon} \cdot p_{X^\pi(s)} \left( q^\pi_u(s) \right)
    \end{equation}
    for small $0 < \epsilon < \frac{1}{2}$.
\end{myassm}
This assumption holds when discrete masses are not present in the PDF and the CDF is continuous.

\begin{mythm} \label{appendix:thm:td_relation_quantile_bound}
    Under Assumptions \ref{assm:boundness} and \ref{assm:smoothness}, the $u$-quantile of the random variable $X^\pi(s_t)$ satisfies the following temporal-difference(TD) relation.  For some constant $R$ and small $\epsilon > 0$,
    \begin{equation}
        \left\vert \Ebb_{\pi} \left[ \frac{ p_{X^\pi(s_{t+1})}\left( \frac{q^\pi_u(s_t) - c(s_t, a_t)}{\gamma} \right)}{\gamma p_{X^\pi(s_t)}\left( q^\pi_u(s_t) \right)} \left\{ c(s_t, a_t) + \gamma q^\pi_u(s_{t+1}) - q^\pi_u(s_t)  \right\} \right] \right\vert \leq \frac{\epsilon}{1-\epsilon} R, \label{appendix:eq:td_relation_quantile_bound}
    \end{equation}
    Here, the expectation is for the action $a_t \sim \pi(\cdot | s_t)$ and the next state $s_{t+1} \sim M(\cdot | s_t, a_t)$. ($s_t$ is given.)
\end{mythm}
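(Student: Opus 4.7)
The plan is to express the theorem's left-hand side as a vanishing CDF difference via the distributional Bellman identity, and then use Assumption~\ref{appendix:assm:smoothness} to control the residual. Introducing $y_t := \tfrac{q^\pi_u(s_t) - c(s_t, a_t)}{\gamma}$, I would first rewrite the TD as $c(s_t, a_t) + \gamma q^\pi_u(s_{t+1}) - q^\pi_u(s_t) = \gamma(q^\pi_u(s_{t+1}) - y_t)$, so that the LHS of \eqref{appendix:eq:td_relation_quantile_bound} equals $\tfrac{1}{p_{X^\pi(s_t)}(q^\pi_u(s_t))}\bigl|\Ebb_\pi[p_{X^\pi(s_{t+1})}(y_t)(q^\pi_u(s_{t+1})-y_t)]\bigr|$. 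The task thus reduces to bounding $\bigl|\Ebb_\pi[p_{X^\pi(s_{t+1})}(y_t)(q^\pi_u(s_{t+1})-y_t)]\bigr|$ in terms of $p_{X^\pi(s_t)}(q^\pi_u(s_t))$.

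Next, I would invoke the CDF distributional Bellman identity (the cost-variable analogue of \eqref{appendix:eq:relation_cdf}) at $x = q^\pi_u(s_t)$: $u = F_{X^\pi(s_t)}(q^\pi_u(s_t)) = \Ebb_\pi[F_{X^\pi(s_{t+1})}(y_t)]$. Combined with the tautological $u = \Ebb_\pi[F_{X^\pi(s_{t+1})}(q^\pi_u(s_{t+1}))]$, subtraction yields the key zero-mean identity $\Ebb_\pi[F_{X^\pi(s_{t+1})}(q^\pi_u(s_{t+1})) - F_{X^\pi(s_{t+1})}(y_t)] = 0$. Since Assumption~\ref{appendix:assm:boundness} guarantees $|q^\pi_u(s_{t+1}) - y_t| \leq R$ (first inequality divided by $\gamma$), I may apply Assumption~\ref{appendix:assm:smoothness} at state $s_{t+1}$ with $y = y_t$ to write $F_{X^\pi(s_{t+1})}(q^\pi_u(s_{t+1})) - F_{X^\pi(s_{t+1})}(y_t) = (1+\delta)\,p_{X^\pi(s_{t+1})}(q^\pi_u(s_{t+1}))\,(q^\pi_u(s_{t+1}) - y_t)$ for a sample-dependent $\delta$ with $|\delta|\leq\tfrac{\epsilon}{1-\epsilon}$. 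Taking expectation and combining with the zero-mean identity gives $\bigl|\Ebb_\pi[p_{X^\pi(s_{t+1})}(q^\pi_u(s_{t+1}))(q^\pi_u(s_{t+1}) - y_t)]\bigr| \leq \tfrac{\epsilon R}{1-\epsilon}\,\Ebb_\pi[p_{X^\pi(s_{t+1})}(q^\pi_u(s_{t+1}))]$.

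Finally, I would convert this bound to the target form involving $p_{X^\pi(s_{t+1})}(y_t)$ by noting that Assumption~\ref{appendix:assm:smoothness} implies approximate constancy of $p_{X^\pi(s_{t+1})}$ on the window $[q^\pi_u(s_{t+1})-R, q^\pi_u(s_{t+1})+R]$ (the bound on the secant slope of $F$ translates, up to a $1\pm O(\epsilon)$ multiplicative factor, to $p$ being close to $p_{X^\pi(s_{t+1})}(q^\pi_u(s_{t+1}))$ there), and then invoking the PDF distributional Bellman identity (cost-variable analogue of \eqref{appendix:eq:relation_pdf}) at $x = q^\pi_u(s_t)$, namely $\Ebb_\pi[p_{X^\pi(s_{t+1})}(y_t)] = \gamma p_{X^\pi(s_t)}(q^\pi_u(s_t))$, which absorbs the normalization. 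Dividing through by $p_{X^\pi(s_t)}(q^\pi_u(s_t))$ delivers the claimed inequality $\tfrac{\epsilon}{1-\epsilon}R$.

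The main obstacle is bridging the PDF at $q^\pi_u(s_{t+1})$ (which Assumption~\ref{appendix:assm:smoothness} directly controls via the secant slope of $F$) and the PDF at $y_t$ (which appears in the weight $\mu^\pi_u$ so as to make $\Ebb_\pi[\mu^\pi_u] = 1$). Since the assumption bounds only the average secant slope from $q^\pi_u(s_{t+1})$, I would need to extract approximate pointwise constancy of the density across $[q^\pi_u(s_{t+1})-R, q^\pi_u(s_{t+1})+R]$, e.g., by applying the assumption at two endpoints and differencing, or by recognizing that near-linearity of the CDF on this window is tantamount to near-constancy of its derivative there. Once this exchange is justified, the remaining algebra combines cleanly via the two distributional Bellman identities.
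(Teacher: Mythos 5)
Your decomposition genuinely differs from the paper's. Both arguments grow from the same two seeds --- the distributional Bellman identity for the CDF evaluated at $q^\pi_u(s_t)$, which yields the zero-mean identity $\Ebb_\pi\bigl[F_{X^\pi(s_{t+1})}(q^\pi_u(s_{t+1})) - F_{X^\pi(s_{t+1})}(y_t)\bigr]=0$, and Assumption~\ref{appendix:assm:smoothness} to convert CDF increments into density-times-displacement. But where you work entirely at the state $s_{t+1}$ and then try to push the result back through the PDF Bellman identity, the paper inserts the zero term $\gamma\bigl(F_{X^\pi(s_{t+1})}(y_t)-u\bigr)$, factors the resulting expression into a product, and applies Cauchy--Schwarz: one factor, $\bigl(u - F_{X^\pi(s_{t+1})}(y_t)\bigr)/p_{X^\pi(s_t)}(q^\pi_u(s_t))$, is rewritten as a CDF increment of $F_{X^\pi(s_t)}$ between $q^\pi_u(s_t)$ and $F^{-1}_{X^\pi(s_t)}\bigl(F_{X^\pi(s_{t+1})}(y_t)\bigr)$ and bounded by $R/(1-\epsilon)$ using the \emph{second} inequality of Assumption~\ref{appendix:assm:boundness} (which your argument never touches) together with Assumption~\ref{appendix:assm:smoothness} at $s_t$; the other factor, $1 - p_{X^\pi(s_{t+1})}(y_t)\bigl(q^\pi_u(s_{t+1})-y_t\bigr)/\bigl(u - F_{X^\pi(s_{t+1})}(y_t)\bigr)$, is bounded by $\epsilon$ at $s_{t+1}$. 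The product gives exactly $\tfrac{\epsilon}{1-\epsilon}R$ with no residual multiplicative errors and no need for the normalization $\Ebb_\pi[p_{X^\pi(s_{t+1})}(y_t)]=\gamma p_{X^\pi(s_t)}(q^\pi_u(s_t))$ inside the theorem (that identity is reserved for the corollary).

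The obstacle you flag is a genuine gap and, as far as I can see, is not closable from the assumptions as literally stated: control of the secant slope of $F_{X^\pi(s_{t+1})}$ from the quantile point does not yield pointwise control of the density on the window. A CDF of the form $F(y)=p\cdot(y-q)+\tfrac{\epsilon'}{N}\sin\bigl(N(y-q)\bigr)$ has secants from $q$ deviating from $p$ by at most $\epsilon'/(N|y-q|)$, arbitrarily small for large $N$, while its derivative deviates by $\epsilon'$ everywhere; so neither of your proposed fixes works --- differencing the assumption at two endpoints produces another chord slope, not a derivative, and near-linearity of $F$ on a window does not imply near-constancy of $F'$ there. In fairness, the paper's bound on its second Cauchy--Schwarz factor quietly relies on the same kind of pointwise statement (it reads Assumption~\ref{appendix:assm:smoothness} as directly licensing $\bigl|1 - p_{X^\pi(s_{t+1})}(y_t)/(\text{average slope})\bigr|\leq\epsilon$), so the honest repair in either route is to strengthen the smoothness assumption to bound the ratio $p_{X^\pi(s)}(y)/p_{X^\pi(s)}(q^\pi_u(s))$ on the window. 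Even granting that, your route incurs extra $(1\pm O(\epsilon))$ factors from the two density exchanges (once to replace $p_{X^\pi(s_{t+1})}(q^\pi_u(s_{t+1}))$ by $p_{X^\pi(s_{t+1})}(y_t)$ inside the expectation, and once in the normalization), so you would land at $\tfrac{\epsilon}{1-\epsilon}R\,(1+O(\epsilon))$ rather than the clean constant the paper obtains.
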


\begin{proof}
	Note that from \eqref{appendix:eq:relation_cdf}, 
	\begin{equation}
		F_{X^\pi(s)}\left( x  \right) = \Ebb_{\pi}\left[ F_{X^\pi(s')} \left( \frac{x - c(s,a)}{\gamma} \right) \right],
	\end{equation}
	for all $x$. If $x = q^\pi_u(s)$, then this becomes
	\begin{equation}
		u = F_{X^\pi(s)}\left( q^\pi_u(s)  \right) = \Ebb_{\pi}\left[ \Fqtarg \right]. \label{appendix:eq:relation_cdf_quantile}
	\end{equation}
	Using \eqref{appendix:eq:relation_cdf_quantile}, we can obtain
	\begingroup
	\allowdisplaybreaks
	\begin{align}
		&\Ebb_{\pi} \left[ \frac{ p_{X^\pi(s_{t+1})}\left( \frac{q^\pi_u(s_t) - c(s_t, a_t)}{\gamma} \right)}{\gamma p_{X^\pi(s_t)}\left( q^\pi_u(s_t) \right)} \left\{ c(s_t, a_t) + \gamma q^\pi_u(s_{t+1}) - q^\pi_u(s_t)  \right\} \right] \\
		&= \frac{ \Ebb_{\pi} \left[ \pqtargt \left\{ c(s_t, a_t) + \gamma q^\pi_u(s_{t+1}) - q^\pi_u(s_t)  \right\} \right] }{\gamma p_{X^\pi(s_t)}\left( q^\pi_u(s_t) \right)} \\
		&= \frac{1}{\gamma p_{X^\pi(s_t)}\left( q^\pi_u(s_t) \right)} \times \Ebb_{\pi} \Biggl[ \tcb{ \gamma  \underbrace{\left( \Fqtargt - u \right)}_{= 0 \text{ by }\eqref{appendix:eq:relation_cdf_quantile}} } \nonumber \\
		& ~~~~~~~~~~~~~+ \pqtargt \left\{ c(s_t, a_t) + \gamma q^\pi_u(s_{t+1}) - q^\pi_u(s_t)  \right\} \Biggr] \\
		&= \Ebb_{\pi} \left[ \frac{ \tcb{ \left( \Fqtargt - u  \right)} + \pqtargt \left\{ q^\pi_u(s_{t+1}) - \qtargt  \right\} }{p_{X^\pi(s_t)}\left( q^\pi_u(s_t) \right)}  \right] \\
		&= \Ebb_{\pi} \Biggl[   \frac{ \tcb{\left( u - \Fqtargt  \right)}}{p_{X^\pi(s_t)}\left( q^\pi_u(s_t) \right)}    \nonumber \\
		&~~~~~~~ \times  \frac{  \tcb{\left( \Fqtargt - u  \right)} + \pqtargt \left\{ q^\pi_u(s_{t+1}) - \qtargt  \right\} }{ \tcb{\left( u - \Fqtargt  \right)} }  \Biggr] \\
		&= \Ebb_{\pi} \Biggl[   \frac{ \tcb{\left( u - \Fqtargt  \right)}}{p_{X^\pi(s_t)}\left( q^\pi_u(s_t) \right)} \nonumber \\
		&~~~~~~~ \times \left\{ \frac{ \pqtargt \left\{ q^\pi_u(s_{t+1}) - \qtargt  \right\} }{ \tcb{\left( u - \Fqtargt  \right)} } - 1 \right\} \Biggr]
	\end{align}
	\endgroup
	Then, by Cauchy-Schwarz inequality, we can obtain a bound such that
	\begingroup
	\allowdisplaybreaks
	\begin{align}
		&\Ebb_{\pi} \left[ \frac{ p_{X^\pi(s_{t+1})}\left( \frac{q^\pi_u(s_t) - c(s_t, a_t)}{\gamma} \right)}{\gamma p_{X^\pi(s_t)}\left( q^\pi_u(s_t) \right)} \left\{ c(s_t, a_t) + \gamma q^\pi_u(s_{t+1}) - q^\pi_u(s_t)  \right\} \right]^2\\
		&= \Ebb_{\pi} \Biggl[   \frac{  \tcb{\left( u - \Fqtargt  \right)} }{p_{X^\pi(s_t)}\left( q^\pi_u(s_t) \right)}  \nonumber \\
		& ~~~~~\times \left\{ \frac{ \pqtargt \left\{ q^\pi_u(s_{t+1}) - \qtargt  \right\} }{ \tcb{\left( u - \Fqtargt  \right)} } - 1 \right\} \Biggr]^2 \\
		&\leq \underbrace{ \Ebb_{\pi} \left[ \left( \frac{ \tcb{ u - \Fqtargt } }{p_{X^\pi(s_t)}\left( q^\pi_u(s_t) \right)} \right)^2  \right] }_{(a)} \nonumber \\
		& ~~~~~\times \underbrace{ \Ebb_{\pi} \left[ \left( 1 - \frac{ p_{X^\pi(s_{t+1})}\left( \frac{q^\pi_u(s_t) - c(s_t, a_t)}{\gamma} \right) \cdot \left\{ q^\pi_u(s_{t+1}) - \qtargt  \right\} }{ \tcb{ u - \Fqtargt } } \right)^2 \right] }_{(b)}
	\end{align}
	\endgroup
	
	Now we find upper bounds of (a) and (b).
	\begin{itemize}
		\item First, consider an upper bound of (a).
		\begingroup
		\allowdisplaybreaks
		\begin{align}
			&\frac{ \tcb{ u - \Fqtargt } }{p_{X^\pi(s_t)}\left( q^\pi_u(s_t) \right)} \\
			&= \frac{ u -  \tcr{ F_{X^\pi(s_t)}\left( F^{-1}_{X^\pi(s_t)}\left(  \Fqtargt \right) \right) } }{p_{X^\pi(s_t)}\left( q^\pi_u(s_t) \right)} \\
			&= \frac{ u -  \tcr{ F_{X^\pi(s_t)}\left( \bar{q}^\pi_u(s_t, a_t, s_{t+1}) \right) } }{p_{X^\pi(s_t)}\left( q^\pi_u(s_t) \right)} \\
			&= \frac{ F_{X^\pi(s_t)}\left( q^\pi_u(s_t) \right) -  \tcr{ F_{X^\pi(s_t)}\left( \bar{q}^\pi_u(s_t, a_t, s_{t+1}) \right) } }{p_{X^\pi(s_t)}\left( q^\pi_u(s_t) \right)} \\
			&= \frac{ \frac{ F_{X^\pi(s_t)}\left( q^\pi_u(s_t) \right) -  \tcr{ F_{X^\pi(s_t)}\left( \bar{q}^\pi_u(s_t, a_t, s_{t+1}) \right) } }{ q^\pi_u(s_t) - \bar{q}^\pi_u(s_t, a_t, s_{t+1}) } }{ p_{X^\pi(s_t)}\left( q^\pi_u(s_t) \right) } \cdot \left\{q^\pi_u(s_t) - \bar{q}^\pi_u(s_t, a_t, s_{t+1}) \right\}
		\end{align}
		\endgroup
		where $\bar{q}^\pi_u(s_t, a_t, s_{t+1}) = F^{-1}_{X^\pi(s_t)}\left(  \Fqtargt \right)$. Note that 
		\[
		\frac{ F_{X^\pi(s_t)}\left( q^\pi_u(s_t) \right) -  \tcr{ F_{X^\pi(s_t)}\left( \bar{q}^\pi_u(s_t, a_t, s_{t+1}) \right) } }{ q^\pi_u(s_t) - \bar{q}^\pi_u(s_t, a_t, s_{t+1}) }
		\]
		is the average slope of $F_{X^\pi(s_t)}(x)$ between $q^\pi_u(s_t)$ and $\bar{q}^\pi_u(s_t, a_t, s_{t+1})$, and $p_{X^\pi(s_t)}\left( q^\pi_u(s_t) \right)$ is the slope of $F_{X^\pi(s_t)}(x)$ at $x = q^\pi_u(s_t)$. Therefore by Assumption \ref{appendix:assm:boundness} and \ref{appendix:assm:smoothness}, we can obtain an upper bound of (a) as follows:
		\begingroup
		\allowdisplaybreaks
		\begin{align}
			&\Ebb_{\pi} \left[ \left( \frac{ \tcb{ u - \Fqtargt } }{p_{X^\pi(s_t)}\left( q^\pi_u(s_t) \right)} \right)^2  \right] \\
			&= \Ebb_{\pi} \left[ \left( \frac{ \frac{ F_{X^\pi(s_t)}\left( q^\pi_u(s_t) \right) - \tcr{ F_{X^\pi(s_t)}\left( \bar{q}^\pi_u(s_t, a_t, s_{t+1}) \right) } }{ q^\pi_u(s_t) - \bar{q}^\pi_u(s_t, a_t, s_{t+1}) } }{ p_{X^\pi(s_t)}\left( q^\pi_u(s_t) \right) } \cdot \left\{q^\pi_u(s_t) - \bar{q}^\pi_u(s_t, a_t, s_{t+1}) \right\} \right)^2  \right] \\
			&= \Ebb_{\pi} \Biggl[ \Biggl( \frac{ \frac{ F_{X^\pi(s_t)}\left( q^\pi_u(s_t) \right) - \tcr{ F_{X^\pi(s_t)}\left( \bar{q}^\pi_u(s_t, a_t, s_{t+1}) \right) } }{ q^\pi_u(s_t) - \bar{q}^\pi_u(s_t, a_t, s_{t+1}) } }{ p_{X^\pi(s_t)}\left( q^\pi_u(s_t) \right) } \nonumber \\
			&~~~~~~ \times \left\{q^\pi_u(s_t) - F^{-1}_{X^\pi(s_t)}\left(  \Fqtargt \right) \right\} \Biggr)^2  \Biggr] \\
			&\leq \frac{1}{(1 - \epsilon)^2} R^2
		\end{align}
		\endgroup
		\item Next, we consider an upper bound of (b). By Assumption \ref{appendix:assm:boundness} and \ref{appendix:assm:smoothness},
		\begingroup
		\allowdisplaybreaks
		\begin{align}
			&\Ebb_{\pi} \left[ \left( 1 - \frac{ p_{X^\pi(s_{t+1})}\left( \frac{q^\pi_u(s_t) - c(s_t, a_t)}{\gamma} \right) \cdot \left\{ q^\pi_u(s_{t+1}) - \qtargt  \right\} }{ \tcb{ u - \Fqtargt } } \right)^2 \right] \\
			&=\Ebb_{\pi} \left[ \left( 1 - \frac{ p_{X^\pi(s_{t+1})}\left( \frac{q^\pi_u(s_t) - c(s_t, a_t)}{\gamma} \right) }{ \frac{ \tcb{ u - \Fqtargt } }{ \left\{ q^\pi_u(s_{t+1}) - \qtargt  \right\} } } \right)^2 \right] \\
			&\leq \epsilon^2
		\end{align}
		\endgroup
	\end{itemize}
	Therefore by combining two upper bounds, we can conclude the theorem.
	\begingroup
	\allowdisplaybreaks
	\begin{align}
		&\left\vert \Ebb_{\pi} \left[ \frac{ p_{X^\pi(s_{t+1})}\left( \frac{q^\pi_u(s_t) - c(s_t, a_t)}{\gamma} \right)}{\gamma p_{X^\pi(s_t)}\left( q^\pi_u(s_t) \right)} \left\{ c(s_t, a_t) + \gamma q^\pi_u(s_{t+1}) - q^\pi_u(s_t)  \right\} \right] \right\vert \\
		&\leq \left( \underbrace{ \Ebb_{\pi} \left[ \left( \frac{ \tcb{ u - \Fqtargt } }{p_{X^\pi(s_t)}\left( q^\pi_u(s_t) \right)} \right)^2  \right] }_{(a)} \right)^{\frac{1}{2}} \\
		&~~~~\times \left( \underbrace{ \Ebb_{\pi} \left[ \left( 1 - \frac{ p_{X^\pi(s_{t+1})}\left( \frac{q^\pi_u(s_t) - c(s_t, a_t)}{\gamma} \right) \cdot \left\{ q^\pi_u(s_{t+1}) - \qtargt  \right\} }{ \tcb{ u - \Fqtargt } } \right)^2 \right] }_{(b)} \right)^{\frac{1}{2}} \\
		&\leq \frac{\epsilon}{1-\epsilon} R
	\end{align}
	\endgroup
\end{proof}

\begin{mycor} \label{appendix:cor:td_relation_quantile_bound}
    Under Assumptions \ref{appendix:assm:boundness} and \ref{appendix:assm:smoothness}, the $u$-quantile $q^\pi_u(s_t)$ of the random variable $X^\pi(s_t)$ is bounded as
    \begin{equation} \label{appendix:eq:lem1main}
        \biggl\vert q^\pi_u(s_t) - \Ebb_{\pi} \biggl[ \mu^{\pi}_u \left( s_t, a_t, s_{t+1} \right) \bigl\{ c(s_t, a_t) + \gamma q^\pi_u(s_{t+1}) \bigr\} \biggr] \biggr\vert \leq \frac{\epsilon}{1-\epsilon} R. 
    \end{equation}
    where 
    \begin{equation}
        \mu^{\pi}_u \left( s_t, a_t, s_{t+1} \right) := \frac{ p_{X^\pi(s_{t+1})}\left( \frac{q^\pi_u(s_t) - c(s_t, a_t)}{\gamma} \right)}{\gamma p_{X^\pi(s_t)}\left( q^\pi_u(s_t) \right)} \label{appendix:eq:def_mu}
    \end{equation}
\end{mycor}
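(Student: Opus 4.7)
The plan is to derive the corollary as an almost immediate consequence of Theorem \ref{appendix:thm:td_relation_quantile_bound}, exploiting two simple facts: (i) with $s_t$ fixed, $q^\pi_u(s_t)$ is a constant with respect to the expectation over $a_t \sim \pi(\cdot|s_t)$ and $s_{t+1} \sim M(\cdot|s_t,a_t)$, so it can be pulled inside or outside that expectation freely; and (ii) the weight $\mu^\pi_u(s_t,a_t,s_{t+1})$ defined in \eqref{appendix:eq:def_mu} has unit mean under $\pi$, so multiplying $q^\pi_u(s_t)$ by $\mu^\pi_u$ before taking the expectation changes nothing.

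The crucial identity $\Ebb_\pi[\mu^\pi_u(s_t,a_t,s_{t+1})] = 1$ should be established first. This is exactly the distributional Bellman relation for the PDF given in equation \eqref{appendix:eq:relation_pdf}, transcribed from the return variable $Z^\pi$ to the cost variable $X^\pi$ and evaluated at the point $z = q^\pi_u(s_t)$:
\begin{equation}
    \gamma\, p_{X^\pi(s_t)}\!\left(q^\pi_u(s_t)\right) = \Ebb_\pi\!\left[ p_{X^\pi(s_{t+1})}\!\left( \tfrac{q^\pi_u(s_t) - c(s_t,a_t)}{\gamma} \right) \right].
\end{equation}
Dividing both sides by the left-hand side gives exactly $\Ebb_\pi[\mu^\pi_u] = 1$, so that $q^\pi_u(s_t) = \Ebb_\pi[\mu^\pi_u(s_t,a_t,s_{t+1})\, q^\pi_u(s_t)]$.

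With this in hand, I would subtract this identity term-by-term inside the expectation appearing in Theorem \ref{appendix:thm:td_relation_quantile_bound}. By linearity, the left-hand side of \eqref{appendix:eq:td_relation_quantile_bound} rewrites as
\begin{equation}
    \Ebb_\pi\!\left[ \mu^\pi_u(s_t,a_t,s_{t+1})\bigl\{ c(s_t,a_t) + \gamma q^\pi_u(s_{t+1}) \bigr\} \right] - q^\pi_u(s_t),
\end{equation}
and the bound $\tfrac{\epsilon}{1-\epsilon} R$ carries over without change. This yields \eqref{appendix:eq:lem1main}.

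I do not expect any real obstacle: the substantive work has already been done in Theorem \ref{appendix:thm:td_relation_quantile_bound}, and the only ingredient to be checked here is the PDF-level Bellman identity, which requires the continuous differentiability of $F_{X^\pi(s)}$ already assumed in Section \ref{subsec:theoretical_results} to justify differentiating \eqref{appendix:eq:relation_cdf}. No new assumption beyond Assumptions \ref{appendix:assm:boundness} and \ref{appendix:assm:smoothness} is needed.
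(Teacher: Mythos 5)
Your proposal is correct and follows essentially the same route as the paper's own proof: both establish $\Ebb_\pi[\mu^\pi_u(s_t,a_t,s_{t+1})]=1$ from the PDF-level Bellman identity \eqref{appendix:eq:relation_pdf} and then pull $q^\pi_u(s_t)$ out of (or into) the expectation in Theorem \ref{appendix:thm:td_relation_quantile_bound} by linearity. No gaps.
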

\begin{proof}
    Note that the term $q_u^\pi(s_t)$ can go outside  the expectation in \eqref{appendix:eq:td_relation_quantile_bound} since the expectation is over $(a_t,s_{t+1})$. From eq. \eqref{appendix:eq:relation_pdf} in Appendix \ref{appendix:distributional_rl}, the expectation of the numerator of $\mu^{\pi}_u \left( s_t, a_t, s_{t+1} \right)$ is the same as the the denominator of the weight, i.e.,  $\Ebb_{\pi}\left[ p_{X^\pi(s_{t+1})}\left( \frac{q^\pi_u(s_t) - c(s_t, a_t)}{\gamma} \right) \right] = \gamma p_{X^\pi(s_t)}\left( q^\pi_u(s_t) \right)$ and this leads to $\mathbb{E}_\pi [\mu_u^\pi(s_t,a_t,s_{t+1})] = 1$. So, we have the claim.
\end{proof}

\subsection{Proof of Lemma \ref{appendix:lem:expectation_form_quantile}}

\begin{mylem} \label{appendix:lem:expectation_form_quantile}
    Suppose that the state transition dynamics are deterministic, i.e., $s_{t+1} = h(s_t, a_t)$. Then, under Assumptions \ref{appendix:assm:boundness} and \ref{appendix:assm:smoothness}, the $u$-quantile $q^\pi_u(s_0)$ of the random variable $X^\pi(s_0)$ is expressed as
    \begin{equation} \label{appendix:eq:lemma2main}
         \left\vert q^\pi_u(s_0) - \Ebb_{\tilde{\pi}_u} \left[ \sum^\infty_{t=0} \gamma^t c(s_t, a_t) \right] \right\vert \leq \frac{\epsilon R}{(1-\epsilon)(1-\gamma)},
    \end{equation}
    where
    \begin{align}
        \tilde{\pi}_u(a | s) &= \pi(a | s) \cdot \frac{ p_{X^\pi(h(s, a))}\left( \frac{q^\pi_u(s) - c(s, a)}{\gamma} \right)}{\gamma p_{X^\pi(s)}\left( q^\pi_u(s) \right)}  \propto \pi(a | s) \cdot p_{X^\pi(h(s, a))}\left( \frac{q^\pi_u(s) - c(s, a)}{\gamma} \right). \label{appendix:eq:tpipimu}
    \end{align}
    
\end{mylem}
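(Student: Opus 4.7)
The plan is to bootstrap Corollary~\ref{appendix:cor:td_relation_quantile_bound} by recursively unrolling the one-step bound under deterministic dynamics and letting the horizon go to infinity, exploiting discounting to turn the per-step error $\tfrac{\epsilon}{1-\epsilon}R$ into the geometric sum $\tfrac{\epsilon R}{(1-\epsilon)(1-\gamma)}$.

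First, I would rewrite Corollary~\ref{appendix:cor:td_relation_quantile_bound} in terms of $\tilde{\pi}_u$. Under deterministic dynamics $s_{t+1}=h(s_t,a_t)$, the joint expectation over $a_t\sim\pi(\cdot|s_t)$ and $s_{t+1}\sim M(\cdot|s_t,a_t)$ collapses to a sum over actions, so by the definition \eqref{appendix:eq:tpipimu} of $\tilde{\pi}_u$,
\begin{equation}
\Ebb_{\pi}\!\left[\mu^\pi_u(s_t,a_t,s_{t+1})\{c(s_t,a_t)+\gamma q^\pi_u(s_{t+1})\}\right] \;=\; \Ebb_{\tilde{\pi}_u}\!\left[c(s_t,a_t)+\gamma q^\pi_u(s_{t+1})\,\big|\,s_t\right].
\end{equation}
(Here I use that $\mathbb{E}_\pi[\mu^\pi_u(s_t,a_t,s_{t+1})]=1$, as noted in the proof of the corollary, so $\tilde{\pi}_u$ is a well-defined probability distribution.) Therefore Corollary~\ref{appendix:cor:td_relation_quantile_bound} says
\begin{equation}
\left|q^\pi_u(s_t)-\Ebb_{\tilde{\pi}_u}\!\left[c(s_t,a_t)+\gamma q^\pi_u(s_{t+1})\,\big|\,s_t\right]\right| \;\leq\; \tfrac{\epsilon}{1-\epsilon}R, \qquad \forall s_t.
\end{equation}

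Next, I would iterate this pointwise inequality along trajectories generated by $\tilde{\pi}_u$. Taking the outer expectation over the $\tilde{\pi}_u$-trajectory from $s_0$, multiplying by $\gamma^t$, summing $t=0,\ldots,T-1$, and telescoping yields
\begin{equation}
\left|q^\pi_u(s_0)-\Ebb_{\tilde{\pi}_u}\!\left[\sum_{t=0}^{T-1}\gamma^t c(s_t,a_t)+\gamma^T q^\pi_u(s_T)\right]\right| \;\leq\; \tfrac{\epsilon R}{1-\epsilon}\sum_{t=0}^{T-1}\gamma^t.
\end{equation}
The triangle inequality here is routine; the only thing to check carefully is that every conditional bound is indeed bounded by the same constant $\tfrac{\epsilon}{1-\epsilon}R$ independent of the state, which Corollary~\ref{appendix:cor:td_relation_quantile_bound} guarantees uniformly in $s$.

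Finally, I would let $T\to\infty$. The geometric sum on the right converges to $\tfrac{\epsilon R}{(1-\epsilon)(1-\gamma)}$, and the tail term $\gamma^T\,\Ebb_{\tilde{\pi}_u}[q^\pi_u(s_T)]$ vanishes because Assumption~\ref{appendix:assm:boundness} forces $q^\pi_u$ to be uniformly bounded (iterating $|c(s,a)+\gamma q^\pi_u(s')-q^\pi_u(s)|\leq \gamma R$ along the trajectory keeps $q^\pi_u(s_T)$ bounded by a constant depending only on $R$, $\gamma$, and $q^\pi_u(s_0)$). Bounded convergence then delivers the claimed bound \eqref{appendix:eq:lemma2main}. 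The main obstacle is purely bookkeeping: verifying that the telescoping works uniformly over the state and that the boundedness of $q^\pi_u$ derived from Assumption~\ref{appendix:assm:boundness} is strong enough to suppress the $\gamma^T q^\pi_u(s_T)$ term; no new analytical input beyond Corollary~\ref{appendix:cor:td_relation_quantile_bound} is needed.
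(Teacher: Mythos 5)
Your proposal is correct and follows essentially the same route as the paper's proof: identify $\tilde{\pi}_u$ as the distorted policy absorbing the weight $\mu^\pi_u$ (valid because $\Ebb_\pi[\mu^\pi_u]=1$ and determinism makes the distorted transition kernel coincide with the original), then recursively unroll the one-step bound from Corollary~\ref{appendix:cor:td_relation_quantile_bound} and sum the geometric series of per-step errors. Your explicit treatment of the vanishing tail term $\gamma^T\,\Ebb_{\tilde{\pi}_u}[q^\pi_u(s_T)]$ is slightly more careful than the paper's implicit ``$\leq\cdots\leq$'' chain, but the argument is the same.
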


\begin{proof}
Remind that $\mu^\pi_u \left( s, a, s' \right)$ is defined in \eqref{appendix:eq:def_mu} as
\[
    \mu^\pi_u \left( s, a, s' \right) := \frac{ p_{X^\pi(s')}\left( \frac{q^\pi_u(s) - c(s, a)}{\gamma} \right)}{\gamma p_{X^\pi(s)}\left( q^\pi_u(s) \right)}
\]
Consider $\Ebb_{\pi} \left[\mu^\pi_u \left( s, a, s' \right) \left\{ c(s, a) + \gamma q^\pi_u(s') \right\} \right]$.
\begingroup
\allowdisplaybreaks
\begin{align}
    &\Ebb_{\pi} \left[ \mu^\pi_u \left( s, a, s' \right) \left\{ c(s, a) + \gamma q^\pi_u(s') \right\} \right] \\
    &= \sum_{a} \sum_{s'} \pi(a | s) \cdot M(s' | s, a) \cdot \mu^\pi_u \left( s, a, s' \right) \left\{ c(s, a) + \gamma q^\pi_u(s') \right\}  \\
    &= \sum_{a} \sum_{s'} \tilde{\pi}_u (a | s) \cdot \tilde{M}_u (s' | s, a) \left\{ c(s, a) + \gamma q^\pi_u(s') \right\} 
\end{align}
\endgroup
for some distorted policy $\tilde{\pi}_u$ and some distorted state transition dynamics $\tilde{M}_u$. This is because
\begin{equation}
    \Ebb_{\pi}\left[ \mu^\pi_u \left( s, a, s' \right) \right] = \sum_a \sum_{s'} \pi(a|s) \cdot M(s' | s, a) \cdot \frac{ p_{X^\pi(s')}\left( \frac{q^\pi_u(s) - c(s, a)}{\gamma} \right)}{\gamma p_{X^\pi(s)}\left( q^\pi_u(s) \right)} = 1. \label{appendix:eq:expectation_is_1}
\end{equation}
where the last equation holds from \eqref{appendix:eq:relation_pdf}. Now, under the assumption that the state transition dynamics is deterministic $s' = h(s, a)$, i.e., $p(s' | s, a) = \delta_{h(s, a)}(s')$,  the distorted transition dynamics are the same as the original transition dynamics and the only difference is the distorted policy:
\begingroup
\allowdisplaybreaks
\begin{align}
    \tilde{M}_u(s' | s, a) &= \frac{\pi(a|s) M(s'|a, s) \mu^\pi_u \left( s, a, s' \right) }{\sum_{\tilde{s}} \pi(a|s) M(\tilde{s}|a, s) \mu^\pi_u \left( s, a, \tilde{s} \right) } \\
    &= \frac{\delta_{h(s, a)}(s') \pi(a|s) \mu^\pi_u \left( s, a, s' \right) }{\sum_{\tilde{s}} \delta_{h(s, a)}(\tilde{s}) \pi(a|s) \mu^\pi_u \left( s, a, \tilde{s} \right) } \\
    &= \frac{\delta_{h(s, a)}(s') \mu^\pi_u \left( s, a, h(s,a) \right) }{\mu^\pi_u \left( s, a, h(s,a) \right)}\\
    &= \delta_{h(s, a)}(s') \\
    &= M(s'|s, a) \\
    \tilde{\pi}_u(a | s) &= \pi(a | s) \frac{\sum_{s'} M(s' |a, s) \mu^\pi_u \left( s, a, s' \right) }{\sum_{\tilde{a}, s'} \pi(\tilde{a}|s) M(s'|\tilde{a}, s) \mu^\pi_u \left( s, \tilde{a}, s' \right) } \\
    &= \pi(a | s) \frac{ \mu^\pi_u \left( s, a, h(s, a) \right) }{\sum_{\tilde{a}} \pi(\tilde{a}|s) \mu^\pi_u \left( s, \tilde{a}, h(s, \tilde{a}) \right) } \\
    &\overset{(a)}{=} \pi(a | s) \mu^\pi_u \left( s, a, h(s, a) \right) \\ 
    &= \pi(a | s) \frac{ p_{X^\pi(h(s, a))}\left( \frac{q^\pi_u(s) - c(s, a)}{\gamma} \right)}{\gamma p_{X^\pi(s)}\left( q^\pi_u(s) \right)} \\
    &\propto \pi(a | s) \cdot p_{X^\pi(h(s, a))}\left( \frac{q^\pi_u(s) - c(s, a)}{\gamma} \right).
\end{align}
\endgroup
Here the equality (a) holds from \eqref{appendix:eq:expectation_is_1}. Thus, from Corollary \ref{appendix:cor:td_relation_quantile_bound}, we can obtain the following approximation:
\begin{equation}
    \Ebb_{a_t \sim \tilde{\pi}_u} \left[ c(s_t, a_t) + \gamma q^\pi_u(s_{t+1}) \right] - \frac{\epsilon R}{1-\epsilon} \leq q^\pi_u(s_t) \leq \Ebb_{a_t \sim \tilde{\pi}_u} \left[ c(s_t, a_t) + \gamma q^\pi_u(s_{t+1}) \right] + \frac{\epsilon R}{1-\epsilon}. \label{appendix:eq:td_quantile_distorted_policy}
\end{equation}
Therefore, we  obtain
\begingroup
\allowdisplaybreaks
\begin{align}
    q^\pi_u(s_0) &\leq \Ebb_{a_0 \sim \tilde{\pi}_u} \left[ c(s_0, a_0) + \gamma q^\pi_u(s_1) \right] + \frac{\epsilon R}{1-\epsilon} \\
        &\leq \Ebb_{a_0, a_1 \sim \tilde{\pi}_u} \left[ c(s_0, a_0) + \gamma c(s_1, a_1) + \gamma^2 q^\pi_u(s_2) \right] + \frac{\epsilon R}{1-\epsilon} \left( 1 + \gamma \right) \\
        &\leq \cdots \\
        &\leq \Ebb_{\tilde{\pi}_u} \left[ \sum^\infty_{t=0} \gamma^t c(s_t, a_t) \right] + \frac{\epsilon R}{(1-\epsilon)(1-\gamma)}, \\
    q^\pi_u(s_0) &\geq \Ebb_{a_0 \sim \tilde{\pi}_u} \left[ c(s_0, a_0) + \gamma q^\pi_u(s_1) \right] - \frac{\epsilon R}{1-\epsilon} \\
        &\geq \Ebb_{a_0, a_1 \sim \tilde{\pi}_u} \left[ c(s_0, a_0) + \gamma c(s_1, a_1) + \gamma^2 q^\pi_u(s_2) \right] - \frac{\epsilon R}{1-\epsilon} \left( 1 + \gamma \right) \\
        &\geq \cdots \\
        &\geq \Ebb_{\tilde{\pi}_u} \left[ \sum^\infty_{t=0} \gamma^t c(s_t, a_t) \right] - \frac{\epsilon R}{(1-\epsilon)(1-\gamma)}.
\end{align}
\endgroup

\end{proof}

\subsection{Proof of Theorem \ref{appendix:thm:expectation_form_quantile_policy_dependent_cost}}
\begin{mythm} \label{appendix:thm:expectation_form_quantile_policy_dependent_cost}
    Under  deterministic dynamics $s_{t+1} = h(s_t, a_t)$ and Assumptions \ref{assm:boundness} and \ref{assm:smoothness}, $q_u^\pi(s)$ can be expressed as 
    \begin{equation}
        \left\vert q^{\pi}_u (s_0) - \Ebb_{\pi} \left[ \sum^\infty_{t=0} \gamma^t \left\{ c(s_t, a_t) + \tilde{c}^\pi_u(s_t, a_t) \right\} \right] \right\vert \leq \frac{\epsilon R}{(1-\epsilon)(1-\gamma)}, \label{appendix:eq:quantile_approximation_policy_dependent_cost}
    \end{equation}
where    
\begin{equation}
    \tilde{c}^\pi_u(s,a) = \left( \frac{ p_{X^\pi(s')}\left( \frac{q^\pi_u(s) - c(s, a)}{\gamma} \right)}{\gamma p_{X^\pi(s)}\left( q^\pi_u(s) \right)} - 1 \right) \cdot \left[ c(s, a) + \gamma q^\pi_u(h(s, a)) \right]. \nonumber
\end{equation}
\end{mythm}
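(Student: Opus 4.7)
The plan is to bootstrap Theorem~\ref{appendix:thm:td_relation_quantile_bound} (equivalently Corollary~\ref{cor:td_relation_quantile_bound}) into a telescoping bound by choosing the additional cost $\tilde c_u^\pi$ precisely so that one step of the quantile recursion becomes one step of an ordinary expected sum under $\pi$ (rather than under the distorted $\tilde\pi_u$ used in Lemma~\ref{lem:expectation_form_quantile}). Concretely, I would first record the algebraic identity
\begin{equation}
c(s,a) + \tilde c_u^\pi(s,a) \;=\; \mu_u^\pi(s,a,h(s,a))\bigl[c(s,a)+\gamma q_u^\pi(h(s,a))\bigr] - \gamma q_u^\pi(h(s,a)),
\end{equation}
which is immediate from the definition $\tilde c_u^\pi(s,a)=(\mu_u^\pi(s,a,h(s,a))-1)[c(s,a)+\gamma q_u^\pi(h(s,a))]$. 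Under deterministic dynamics $s_{t+1}=h(s_t,a_t)$, Corollary~\ref{cor:td_relation_quantile_bound} then reads
\begin{equation}
\Bigl|\,q_u^\pi(s_t) - \Ebb_\pi\bigl[\mu_u^\pi(s_t,a_t,s_{t+1})(c(s_t,a_t)+\gamma q_u^\pi(s_{t+1}))\bigr]\Bigr| \;\leq\; \tfrac{\epsilon}{1-\epsilon}R.
\end{equation}

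Next I would introduce the truncated surrogate
\begin{equation}
T_N^\pi \;:=\; \Ebb_\pi\!\left[\sum_{t=0}^{N-1}\gamma^t\bigl\{c(s_t,a_t)+\tilde c_u^\pi(s_t,a_t)\bigr\} + \gamma^N q_u^\pi(s_N)\right],
\end{equation}
noting $T_0^\pi = q_u^\pi(s_0)$. Using the identity above to substitute inside the expectation, the one-step difference telescopes:
\begin{equation}
T_{k+1}^\pi - T_k^\pi \;=\; \gamma^k\,\Ebb_\pi\bigl[\mu_u^\pi(s_k,a_k,s_{k+1})(c(s_k,a_k)+\gamma q_u^\pi(s_{k+1})) - q_u^\pi(s_k)\bigr],
\end{equation}
so Corollary~\ref{cor:td_relation_quantile_bound} (applied conditionally on $s_k$ and then averaged over the trajectory distribution under $\pi$) yields $|T_{k+1}^\pi - T_k^\pi|\leq \gamma^k \tfrac{\epsilon}{1-\epsilon}R$. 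Summing the geometric series and letting $N\to\infty$ gives
\begin{equation}
\Bigl|\,q_u^\pi(s_0) - \Ebb_\pi\!\Bigl[\sum_{t=0}^\infty \gamma^t\{c(s_t,a_t)+\tilde c_u^\pi(s_t,a_t)\}\Bigr]\,\Bigr| \;\leq\; \tfrac{\epsilon R}{(1-\epsilon)(1-\gamma)},
\end{equation}
which is the claim.

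The main obstacle is the passage to the limit: I must justify that $\gamma^N\,\Ebb_\pi[q_u^\pi(s_N)]\to 0$ and that the infinite sum inside the expectation converges so that $T_N^\pi$ has a well-defined limit equal to the target expected sum. Since $c\geq 0$ and (implicitly) bounded, $X^\pi(s)$ lies in $[0, c_{\max}/(1-\gamma)]$ and hence $q_u^\pi$ is uniformly bounded; the first fact is then immediate, and Assumption~\ref{assm:boundness} together with the uniform bound on $\mu_u^\pi$ (implicit in Assumption~\ref{assm:smoothness} on the interval of length $2R$ around $q_u^\pi(s)$) keeps $\tilde c_u^\pi$ bounded, so dominated convergence suffices. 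A secondary bookkeeping point is that applying Corollary~\ref{cor:td_relation_quantile_bound} at step $k$ requires conditioning on $s_k$ and then taking the outer expectation, for which I would use the tower property to absorb the per-state bound $\tfrac{\epsilon}{1-\epsilon}R$ into a uniform bound on the $\pi$-expectation.
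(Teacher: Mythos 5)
Your proposal is correct and follows essentially the same route as the paper: the algebraic identity $c+\tilde c_u^\pi=\mu_u^\pi\,[c+\gamma q_u^\pi(h(s,a))]-\gamma q_u^\pi(h(s,a))$ is exactly the paper's importance-sampling rewrite $\Ebb_{a\sim\tilde\pi_u}[\cdot]=\Ebb_{a\sim\pi}[\tfrac{\tilde\pi_u}{\pi}\cdot]$ in disguise, and your telescoping surrogate $T_N^\pi$ is just a cleaner formalization of the paper's iterated one-step unrolling with the geometric series $\sum_k\gamma^k\tfrac{\epsilon R}{1-\epsilon}$. The limit/convergence bookkeeping you flag is glossed over in the paper's proof, so making it explicit is a minor improvement rather than a divergence.
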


\begin{proof}
    From \eqref{appendix:eq:td_quantile_distorted_policy}, we have
    \begingroup
    \allowdisplaybreaks
    \begin{align}
        q^\pi_u(s_t) &\leq \Ebb_{a_t \sim \tilde{\pi}_u} \left[ c(s_t, a_t) + \gamma q^\pi_u(s_{t+1}) \right] + \frac{\epsilon R}{1-\epsilon} \\
        q^\pi_u(s_t) &\geq \Ebb_{a_t \sim \tilde{\pi}_u} \left[ c(s_t, a_t) + \gamma q^\pi_u(s_{t+1}) \right] - \frac{\epsilon R}{1-\epsilon}
    \end{align}
    \endgroup
    The expectation $\Ebb_{a_t \sim \tilde{\pi}_u} \left[ c(s_t, a_t) + \gamma q^\pi_u(s_{t+1}) \right]$ can be rewritten as
    \begingroup
    \allowdisplaybreaks
    \begin{align}
        \Ebb_{a_t \sim \tilde{\pi}_u} \left[ c(s_t, a_t) + \gamma q^\pi_u(s_{t+1}) \right] &= \Ebb_{a_t \sim \pi} \left[ \frac{\tilde{\pi}_u(a_t | s_t)}{\pi(a_t | s_t)} \cdot \left\{ c(s_t, a_t) + \gamma q^\pi_u({h(s_t, a_t)}) \right\} \right] \\
        &= \Ebb_{a_t \sim \pi} \left[  c(s_t, a_t) + \tilde{c}^\pi_u(s_t, a_t) + \gamma q^\pi_u({h(s_t, a_t)}) \right] \\
        &= \Ebb_{a_t \sim \pi} \left[  c(s_t, a_t) + \tilde{c}^\pi_u(s_t, a_t) + \gamma q^\pi_u(s_{t+1}) \right] \label{appendix:eq:td_quantile_policy_dependent_additional_cost}
    \end{align}
    \endgroup
    where
    \begingroup
    \allowdisplaybreaks
    \begin{align}
        \tilde{c}^\pi_u(s,a) &:= \left( \frac{\tilde{\pi}_u(a | s)}{\pi(a | s)} - 1 \right) \cdot \left\{ c(s, a) + \gamma q^\pi_u({h(s, a)}) \right\} \\
        &= \left( \frac{ p_{X^\pi(h(s,a))}\left( \frac{q^\pi_u(s) - c(s, a)}{\gamma} \right)}{\gamma p_{X^\pi(s)}\left( q^\pi_u(s) \right)} - 1 \right) \cdot \left\{ c(s, a) + \gamma q^\pi_u({h(s, a)}) \right\}.
    \end{align}
    \endgroup
    Then, using \eqref{appendix:eq:td_quantile_policy_dependent_additional_cost}, we  obtain 
    \begingroup
    \allowdisplaybreaks
    \begin{align}
        q^\pi_u(s_0) &\leq \Ebb_{a_0 \sim \tilde{\pi}_u} \left[ c(s_0, a_0) + \gamma q^\pi_u(s_1) \right] + \frac{\epsilon R}{1-\epsilon} \\
        &= \Ebb_{a_0 \sim \pi} \left[  c(s_0, a_0) + \tilde{c}^\pi_u(s_0, a_0) + \gamma q^\pi_u(s_1) \right] + \frac{\epsilon R}{1-\epsilon}\\
        &\leq \Ebb_{a_0, a_1 \sim \pi} \left[  \left\{ c(s_0, a_0) + \tilde{c}^\pi_u(s_0, a_0) \right\} + \gamma \left\{ c(s_1, a_1) + \tilde{c}^\pi_u(s_1, a_1) \right\} + \gamma^2 q^\pi_u(s_2) \right] \\
        &~~~~~ + \frac{\epsilon R}{1-\epsilon} \left( 1 + \gamma \right) \\
        &\leq \cdots \\
        &\leq \Ebb_{\pi} \left[ \sum^\infty_{t=0} \gamma^t \left\{ c(s_t, a_t) + \tilde{c}^\pi_u(s_t, a_t) \right\} \right] + \frac{\epsilon R}{(1-\epsilon)(1-\gamma)} \\
        q^\pi_u(s_0) &\geq \Ebb_{a_0 \sim \tilde{\pi}_u} \left[ c(s_0, a_0) + \gamma q^\pi_u(s_1) \right] - \frac{\epsilon R}{1-\epsilon} \\
        &= \Ebb_{a_0 \sim \pi} \left[  c(s_0, a_0) + \tilde{c}^\pi_u(s_0, a_0) + \gamma q^\pi_u(s_1) \right] - \frac{\epsilon R}{1-\epsilon}\\
        &\geq \Ebb_{a_0, a_1 \sim \pi} \left[  \left\{ c(s_0, a_0) + \tilde{c}^\pi_u(s_0, a_0) \right\} + \gamma \left\{ c(s_1, a_1) + \tilde{c}^\pi_u(s_1, a_1) \right\} + \gamma^2 q^\pi_u(s_2) \right] \\
        &~~~~~ - \frac{\epsilon R}{1-\epsilon} \left( 1 + \gamma \right) \\
        &\geq \cdots \\
        &\geq \Ebb_{\pi} \left[ \sum^\infty_{t=0} \gamma^t \left\{ c(s_t, a_t) + \tilde{c}^\pi_u(s_t, a_t) \right\} \right] - \frac{\epsilon R}{(1-\epsilon)(1-\gamma)} \\
    \end{align}
    \endgroup
\end{proof}

\subsection{Proof of Theorem \ref{appendix:thm:expectation_form_quantile_policy_independent_cost}}
\begin{myassm}[Lipschitz continuity of $\tilde{c}^\pi_u(s,a)$ over $\pi$]  \label{appendix:assm:additional_cost}
    For any given fixed $u \in (0, 1)$ and any policies $\pi$ and $\pi'$, there exists a coefficient $C_u$ such that
    \begin{equation}
        \left\vert \tilde{c}^{\pi'}_u(s, a) - \tilde{c}^{\pi}_u(s, a) \right\vert \leq C_u \cdot \max_{s'} \KL \left( \pi'(\cdot | s') ~\big\Vert~ \pi(\cdot | s') \right)
    \end{equation}
    for all $s \in \Scal$, $a \in \Acal$.
\end{myassm}
Basically, Assumption \ref{appendix:assm:additional_cost} is that the function $\tilde{c}_u^\pi$ as a function of $\pi$ is continuous, which is expected to be satisfied if there is no abrupt change in the associated distributions.

\begin{mythm} \label{appendix:thm:expectation_form_quantile_policy_independent_cost}
    Under  deterministic dynamics $s_{t+1} = h(s_t, a_t)$ and Assumptions \ref{appendix:assm:boundness}, \ref{appendix:assm:smoothness}, and \ref{appendix:assm:additional_cost}, the $u$-quantile $q^\pi_u(s_0)$ is expressed as  the expectation of the sum of actual cost and a $\pi$-independent additional cost  $\tilde{c}^{\pi'}_u(s, a)$ for $\pi'$ satisfying $\max_s \KL(\pi'(\cdot |s) ~\Vert~ \pi(\cdot |s)) \leq \delta$:
    \begin{equation}
        \left\vert q^{\pi}_u (s_0) - \Ebb_{\pi} \left[ \sum^\infty_{t=0} \gamma^t \left\{ c(s_t, a_t) + \tilde{c}^{\pi'}_u(s_t, a_t) \right\} \right] \right\vert \leq \frac{\epsilon R}{(1-\epsilon)(1-\gamma)} + \frac{C_u}{1-\gamma} \delta. 
    \end{equation}
\end{mythm}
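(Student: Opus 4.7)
The plan is to reduce the claim to Theorem \ref{appendix:thm:expectation_form_quantile_policy_dependent_cost} via the triangle inequality, absorbing the difference between the $\pi$-dependent additional cost $\tilde{c}^\pi_u$ and the $\pi$-independent surrogate $\tilde{c}^{\pi'}_u$ using Assumption \ref{appendix:assm:additional_cost}. Concretely, I would insert the intermediate quantity $\Ebb_{\pi}\left[\sum_{t=0}^\infty \gamma^t\{c(s_t,a_t)+\tilde{c}^\pi_u(s_t,a_t)\}\right]$ and write
\begin{align}
\left\vert q^\pi_u(s_0) - \Ebb_\pi\!\left[\sum_{t=0}^\infty \gamma^t\{c+\tilde{c}^{\pi'}_u\}\right]\right\vert
&\leq \left\vert q^\pi_u(s_0) - \Ebb_\pi\!\left[\sum_{t=0}^\infty \gamma^t\{c+\tilde{c}^{\pi}_u\}\right]\right\vert \nonumber \\
&\quad + \left\vert \Ebb_\pi\!\left[\sum_{t=0}^\infty \gamma^t\{\tilde{c}^{\pi}_u - \tilde{c}^{\pi'}_u\}\right]\right\vert.
\end{align}

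First I would invoke Theorem \ref{appendix:thm:expectation_form_quantile_policy_dependent_cost} directly to bound the first term on the right-hand side by $\epsilon R/\{(1-\epsilon)(1-\gamma)\}$. For the second term, I would pass the absolute value inside the expectation and the discounted sum (using Jensen/triangle inequality), then apply Assumption \ref{appendix:assm:additional_cost} pointwise to get $\vert \tilde{c}^\pi_u(s_t,a_t) - \tilde{c}^{\pi'}_u(s_t,a_t)\vert \leq C_u \cdot \max_{s} \KL(\pi'(\cdot|s)\,\Vert\,\pi(\cdot|s)) \leq C_u \delta$. Summing the resulting geometric series $\sum_{t=0}^\infty \gamma^t C_u \delta$ gives $C_u \delta /(1-\gamma)$, which matches the second term in the target bound.

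Adding the two contributions yields exactly the stated inequality. There is no substantive obstacle here; the entire argument is a routine decomposition once Theorem \ref{appendix:thm:expectation_form_quantile_policy_dependent_cost} and Assumption \ref{appendix:assm:additional_cost} are both in hand. The only point that requires a brief justification is that Assumption \ref{appendix:assm:additional_cost} is stated for arbitrary pairs $(\pi,\pi')$ with the KL maximized over states, so the bound $C_u \delta$ holds uniformly in $(s_t,a_t)$ along the trajectory, which is precisely what allows pulling it outside the expectation before summing.
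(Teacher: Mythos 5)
Your proposal is correct and follows essentially the same route as the paper: the paper's proof also combines Theorem \ref{appendix:thm:expectation_form_quantile_policy_dependent_cost} with the pointwise bound $\vert\tilde{c}^{\pi}_u - \tilde{c}^{\pi'}_u\vert \leq C_u\delta$ from Assumption \ref{appendix:assm:additional_cost} and sums the geometric series to get the extra $C_u\delta/(1-\gamma)$ term. The only cosmetic difference is that the paper writes out matching upper and lower inequality chains rather than a single triangle-inequality decomposition, which is equivalent.
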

\begin{proof}
    From Assumption \ref{appendix:assm:additional_cost}, the additional cost $\tilde{c}^\pi_u(s,a)$ is bounded as follows
    \begingroup
    \allowdisplaybreaks
    \begin{align}
        \tilde{c}^\pi_u(s,a) &\leq \tilde{c}^{\pi'}_u(s, a) + C_u \cdot \max_s \KL(\pi'(\cdot |s) ~\Vert~ \pi(\cdot |s)) \\
        &\leq \tilde{c}^{\pi'}_u(s, a) + C_u \cdot \delta \\
        \tilde{c}^\pi_u(s,a) &\geq \tilde{c}^{\pi'}_u(s, a) - C_u \cdot \max_s \KL(\pi'(\cdot |s) ~\Vert~ \pi(\cdot |s)) \\
        &\geq \tilde{c}^{\pi'}_u(s, a) - C_u \cdot \delta
    \end{align}
    \endgroup
    for $\pi'$ satisfying $\max_s \KL(\pi'(\cdot |s) ~\Vert~ \pi(\cdot |s)) \leq \delta$. Thus,  from \eqref{appendix:eq:quantile_approximation_policy_dependent_cost}, we can obtain the following bounds
    \begingroup
    \allowdisplaybreaks
    \begin{align}
        q^\pi_u(s_0) &\leq \Ebb_{\pi} \left[ \sum^\infty_{t=0} \gamma^t \left\{ c(s_t, a_t) + \tilde{c}^\pi_u(s_t, a_t) \right\} \right] + \frac{\epsilon R}{(1-\epsilon)(1-\gamma)} \\
        &\leq \Ebb_{\pi} \left[ \sum^\infty_{t=0} \gamma^t \left\{ c(s_t, a_t) + \tilde{c}^{\pi'}_u(s_t, a_t) + C_u \cdot \delta \right\} \right] + \frac{\epsilon R}{(1-\epsilon)(1-\gamma)} \\
        &= \Ebb_{\pi} \left[ \sum^\infty_{t=0} \gamma^t \left\{ c(s_t, a_t) + \tilde{c}^{\pi'}_u(s_t, a_t) \right\} \right] + \frac{\epsilon R}{(1-\epsilon)(1-\gamma)} + \frac{C_u }{1 - \gamma} \delta \\
        q^\pi_u(s_0) &\geq \Ebb_{\pi} \left[ \sum^\infty_{t=0} \gamma^t \left\{ c(s_t, a_t) + \tilde{c}^\pi_u(s_t, a_t) \right\} \right] - \frac{\epsilon R}{(1-\epsilon)(1-\gamma)} \\
        &\geq \Ebb_{\pi} \left[ \sum^\infty_{t=0} \gamma^t \left\{ c(s_t, a_t) + \tilde{c}^{\pi'}_u(s_t, a_t) - C_u \cdot \delta \right\} \right] - \frac{\epsilon R}{(1-\epsilon)(1-\gamma)} \\
        &= \Ebb_{\pi} \left[ \sum^\infty_{t=0} \gamma^t \left\{ c(s_t, a_t) + \tilde{c}^{\pi'}_u(s_t, a_t) \right\} \right] - \frac{\epsilon R}{(1-\epsilon)(1-\gamma)} - \frac{C_u }{1 - \gamma} \delta
    \end{align}
    \endgroup
\end{proof}

\subsection{Proof of Policy Improvement Condition} \label{appendix:proof_improvement_theorem}

\begin{mylem}[Telescoping Lemma for $u$-quantile] \label{appendix:lem:telescoping_lemma_quantile}
    Under deterministic dynamics $s_{t+1} = h(s_t, a_t)$ and Assumption \ref{appendix:assm:boundness} and \ref{appendix:assm:smoothness}, the following holds for any two policies $\pi$ and $\pi'$:
    \begin{align}
        &\left\vert  q^{\pi}_u(s_0) - \left\{ q^{\pi'}_u(s_0) + \Ebb_{\pi} \left[ \sum^\infty_{t=0} \gamma^t \left( c(s_t, a_t) + \tilde{c}^{\pi}_u(s_t, a_t) + \gamma q^{\pi'}_u(s_{t+1}) - q^{\pi'}_u(s_t)  \right) \right] \right\} \right\vert \\
        &\leq \frac{\epsilon R}{(1-\epsilon)(1-\gamma)}     
    \end{align}
\end{mylem}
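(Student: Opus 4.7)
The plan is to derive this Telescoping Lemma directly from Theorem \ref{appendix:thm:expectation_form_quantile_policy_dependent_cost} (which already gives the desired $\frac{\epsilon R}{(1-\epsilon)(1-\gamma)}$ approximation of $q_u^\pi(s_0)$ by an expected sum of augmented costs under $\pi$), combined with a standard telescoping identity for the value-baseline term $\gamma q^{\pi'}_u(s_{t+1}) - q^{\pi'}_u(s_t)$. The lemma is essentially a cost-shaped restatement of Theorem \ref{appendix:thm:expectation_form_quantile_policy_dependent_cost}, so the work is purely algebraic once the telescoping is justified.

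First I would separate the bracketed expression inside the absolute value into two pieces:
\begin{align}
&q^{\pi'}_u(s_0) + \Ebb_{\pi}\!\left[ \sum_{t=0}^\infty \gamma^t\bigl(c(s_t,a_t)+\tilde{c}^{\pi}_u(s_t,a_t)\bigr) \right] \nonumber \\
&\qquad + \Ebb_{\pi}\!\left[ \sum_{t=0}^\infty \gamma^t\bigl(\gamma q^{\pi'}_u(s_{t+1}) - q^{\pi'}_u(s_t)\bigr) \right]. \nonumber
\end{align}
The second expectation is the telescoping piece. Writing out partial sums,
\[
\sum_{t=0}^{T} \gamma^t \bigl(\gamma q^{\pi'}_u(s_{t+1}) - q^{\pi'}_u(s_t)\bigr) \;=\; \gamma^{T+1} q^{\pi'}_u(s_{T+1}) - q^{\pi'}_u(s_0),
\]
and I would invoke Assumption \ref{appendix:assm:boundness} to argue that $q^{\pi'}_u(s)$ is bounded along any trajectory (the inequality $|c(s,a)+\gamma q^{\pi'}_u(s')-q^{\pi'}_u(s)|\leq \gamma R$ together with nonnegative costs gives a linear bound in depth, which is killed by $\gamma^{T+1}$), so the tail vanishes and the telescoping expectation equals $-q^{\pi'}_u(s_0)$. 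Thus the two $q^{\pi'}_u(s_0)$ terms cancel and the expression inside the absolute value collapses to
\[
\Ebb_{\pi}\!\left[ \sum_{t=0}^\infty \gamma^t \bigl(c(s_t,a_t)+\tilde{c}^{\pi}_u(s_t,a_t)\bigr) \right].
\]

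Finally, I would apply Theorem \ref{appendix:thm:expectation_form_quantile_policy_dependent_cost} directly, which asserts that $q^\pi_u(s_0)$ differs from this expected augmented-cost sum by at most $\frac{\epsilon R}{(1-\epsilon)(1-\gamma)}$ under deterministic dynamics and Assumptions \ref{appendix:assm:boundness}--\ref{appendix:assm:smoothness}. That delivers exactly the claimed bound.

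The main obstacle I anticipate is purely technical: justifying the exchange of limit, sum, and expectation in the telescoping step, i.e.\ verifying that $\gamma^{T+1}\Ebb_\pi[q^{\pi'}_u(s_{T+1})] \to 0$ and that the infinite-horizon expectation is well-defined. This requires invoking Assumption \ref{appendix:assm:boundness} to bound $|q^{\pi'}_u(s_t)|$ by a quantity that grows at most polynomially (or linearly) in $t$, after which the geometric factor $\gamma^{T+1}$ forces the remainder term to zero. Once that is in place the rest is a one-line cancellation plus a single appeal to Theorem \ref{appendix:thm:expectation_form_quantile_policy_dependent_cost}.
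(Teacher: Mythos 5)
Your proposal is correct and follows essentially the same route as the paper: the paper's proof likewise applies Theorem \ref{appendix:thm:expectation_form_quantile_policy_dependent_cost} to get the two-sided bound on $q^\pi_u(s_0)$ and then uses the telescoping identity $q^{\pi'}_u(s_0) = -\Ebb_{\pi}\left[\sum_{t=0}^\infty \gamma^t\left(\gamma q^{\pi'}_u(s_{t+1}) - q^{\pi'}_u(s_t)\right)\right]$ to cancel the baseline terms. The only difference is that you explicitly flag the need to justify $\gamma^{T+1}\Ebb_\pi[q^{\pi'}_u(s_{T+1})]\to 0$, which the paper asserts without comment; your added care there is harmless and arguably an improvement.
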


\begin{proof}
    With Assumption \ref{appendix:assm:boundness} and \ref{appendix:assm:smoothness}, we have the following inequality by Theorem \ref{appendix:thm:expectation_form_quantile_policy_dependent_cost}:
    \begingroup
    \allowdisplaybreaks
    \begin{align}
        q^{\pi}_u (s_0) &\leq \Ebb_{\pi} \left[ \sum^\infty_{t=0} \gamma^t \left\{ c(s_t, a_t) + \tilde{c}^{\pi}_u(s_t, a_t) \right\} \right] + \frac{\epsilon R}{(1-\epsilon)(1-\gamma)} \\
        q^{\pi}_u (s_0) &\geq \Ebb_{\pi} \left[ \sum^\infty_{t=0} \gamma^t \left\{ c(s_t, a_t) + \tilde{c}^{\pi}_u(s_t, a_t) \right\} \right]  -  \frac{\epsilon R}{(1-\epsilon)(1-\gamma)}
    \end{align}
    \endgroup
    Then note that
    \begin{equation}
        q^{\pi'}_u (s_0) = - \Ebb_{\pi} \left[ \sum^\infty_{t=0} \gamma^t \left\{ \gamma q^{\pi'}_u(s_{t+1}) - q^{\pi'}_u(s_{t}) \right\} \right].
    \end{equation}
    Therefore,
    \begingroup
    \allowdisplaybreaks
    \begin{align}
        &q^{\pi}_u (s_0) - q^{\pi'}_u (s_0) \\
        &\leq \Ebb_{\pi} \left[ \sum^\infty_{t=0} \gamma^t \left\{ c(s_t, a_t) + \tilde{c}^{\pi}_u(s_t, a_t) \right\} \right] + \Ebb_{\pi} \left[ \sum^\infty_{t=0} \gamma^t \left\{ \gamma q^{\pi'}_u (s_{t+1}) - q^{\pi'}_u (s_t) \right\} \right] \\
        &~~~~~ + \frac{\epsilon R}{(1-\epsilon)(1-\gamma)} \\
        &= \Ebb_{\pi} \left[ \sum^\infty_{t=0} \gamma^t \left\{ c(s_t, a_t) + \tilde{c}^{\pi}_u(s_t, a_t) + \gamma q^{\pi'}_u (s_{t+1}) - q^{\pi'}_u (s_t) \right\} \right] + \frac{\epsilon R}{(1-\epsilon)(1-\gamma)} \\
        &q^{\pi}_u (s_0) - q^{\pi'}_u (s_0) \\
        &\geq \Ebb_{\pi} \left[ \sum^\infty_{t=0} \gamma^t \left\{ c(s_t, a_t) + \tilde{c}^{\pi}_u(s_t, a_t) \right\} \right] + \Ebb_{\pi} \left[ \sum^\infty_{t=0} \gamma^t \left\{ \gamma q^{\pi'}_u (s_{t+1}) - q^{\pi'}_u (s_t) \right\} \right] \\
        &~~~~~ - \frac{\epsilon R}{(1-\epsilon)(1-\gamma)} \\
        &= \Ebb_{\pi} \left[ \sum^\infty_{t=0} \gamma^t \left\{ c(s_t, a_t) + \tilde{c}^{\pi}_u(s_t, a_t) + \gamma q^{\pi'}_u (s_{t+1}) - q^{\pi'}_u (s_t) \right\} \right] - \frac{\epsilon R}{(1-\epsilon)(1-\gamma)}
    \end{align}
    \endgroup
\end{proof}

Next, we can obtain the following corollary.
\begin{mycor} \label{appendix:cor:telescoping_lemma_quantile}
    Under deterministic dynamics $s_{t+1} = h(s_t, a_t)$ and Assumptions \ref{appendix:assm:boundness}, \ref{appendix:assm:smoothness}, and \ref{appendix:assm:additional_cost}, the following holds for any two policies $\pi$ and $\pi'$ :
    \begin{align}
        &\left\vert  q^{\pi}_u(s_0) - \left\{ q^{\pi'}_u(s_0) + \Ebb_{\pi} \left[ \sum^\infty_{t=0} \gamma^t \left( c(s_t, a_t) +  \tilde{c}^{\pi'}_u(s_t, a_t)  + \gamma q^{\pi'}_u(s_{t+1}) - q^{\pi'}_u(s_t)  \right) \right] \right\} \right\vert \\
        &\leq \frac{\epsilon R}{(1-\epsilon)(1-\gamma)} + \frac{C_u}{1-\gamma} \max_s \KL(\pi'(\cdot |s) ~\Vert~ \pi(\cdot |s))
    \end{align}
\end{mycor}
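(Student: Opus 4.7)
The plan is to prove the corollary as a direct two-step upgrade of Lemma \ref{appendix:lem:telescoping_lemma_quantile}, using Assumption \ref{appendix:assm:additional_cost} to swap the policy-dependent additional cost $\tilde{c}^{\pi}_u(s,a)$ for the policy-independent surrogate $\tilde{c}^{\pi'}_u(s,a)$ at a controlled price. The starting point is the inequality already furnished by Lemma \ref{appendix:lem:telescoping_lemma_quantile}, which bounds the discrepancy between $q^\pi_u(s_0) - q^{\pi'}_u(s_0)$ and the telescoping expectation built with $\tilde{c}^{\pi}_u(s_t,a_t)$ by $\frac{\epsilon R}{(1-\epsilon)(1-\gamma)}$.

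First, I would invoke Assumption \ref{appendix:assm:additional_cost} pointwise: for every $(s,a)$,
\begin{equation}
\bigl| \tilde{c}^{\pi}_u(s,a) - \tilde{c}^{\pi'}_u(s,a) \bigr| \;\le\; C_u \cdot \max_{s'} \KL\bigl(\pi'(\cdot|s') \,\big\Vert\, \pi(\cdot|s')\bigr).
\end{equation}
Substituting $\tilde{c}^{\pi'}_u$ for $\tilde{c}^{\pi}_u$ inside the telescoping expectation therefore changes the value by at most
\begin{equation}
\Ebb_{\pi}\!\left[\sum_{t=0}^{\infty} \gamma^t \bigl|\tilde{c}^{\pi}_u(s_t,a_t) - \tilde{c}^{\pi'}_u(s_t,a_t)\bigr|\right] \;\le\; \frac{C_u}{1-\gamma}\cdot\max_{s}\KL\bigl(\pi'(\cdot|s)\,\big\Vert\,\pi(\cdot|s)\bigr),
\end{equation}
where I have used the geometric-series bound $\sum_{t=0}^\infty \gamma^t = \frac{1}{1-\gamma}$ together with the fact that the KL bound is a uniform constant independent of the trajectory sampled under $\pi$.

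Combining the two bounds by the triangle inequality yields the claim: the original approximation error $\frac{\epsilon R}{(1-\epsilon)(1-\gamma)}$ from Lemma \ref{appendix:lem:telescoping_lemma_quantile} (coming from the quantile-TD approximation in Theorem \ref{appendix:thm:td_relation_quantile_bound}) is added to the new Lipschitz-in-$\pi$ error $\frac{C_u}{1-\gamma}\max_s \KL(\pi'\Vert\pi)$ produced by the cost substitution. This gives exactly the bound stated in the corollary.

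The step I expect to require the most care is the cost-substitution step: because $\tilde{c}^{\pi}_u$ and $\tilde{c}^{\pi'}_u$ both sit inside an expectation taken under the \emph{same} rollout policy $\pi$, the substitution error is purely the pointwise gap and does not interact with the sampling distribution; this is exactly why Assumption \ref{appendix:assm:additional_cost} (Lipschitz continuity in $\pi$) suffices and why no additional simulation-lemma-style correction for the discrepancy in state visitation is needed. Everything else is bookkeeping: apply Lemma \ref{appendix:lem:telescoping_lemma_quantile}, apply Assumption \ref{appendix:assm:additional_cost} under the discounted sum, and close with the triangle inequality.
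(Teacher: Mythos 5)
Your proposal is correct and follows essentially the same route as the paper: start from Lemma \ref{appendix:lem:telescoping_lemma_quantile}, use Assumption \ref{appendix:assm:additional_cost} pointwise to replace $\tilde{c}^{\pi}_u$ by $\tilde{c}^{\pi'}_u$ inside the expectation taken under $\pi$, absorb the $C_u\delta$ perturbation via the geometric series $\sum_t \gamma^t = \frac{1}{1-\gamma}$, and combine. The paper writes this as a pair of one-sided inequalities rather than a single triangle-inequality step, but the argument is identical, including your (correct) observation that no state-visitation correction is needed because both expectations are under the same policy $\pi$.
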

\begin{proof}
    Let denote $\delta = \max_s \KL(\pi'(\cdot |s) ~\Vert~ \pi(\cdot |s))$ for simplicity in this proof.
    If Assumption \ref{appendix:assm:additional_cost} holds, this can be rewritten as
    \begingroup
    \allowdisplaybreaks
    \begin{align}
    &q^{\pi}_u (s_0) - q^{\pi'}_u (s_0) \\
    &\leq \Ebb_{\pi} \left[ \sum^\infty_{t=0} \gamma^t \left\{ c(s_t, a_t) + \tilde{c}^{\pi}_u(s_t, a_t) + \gamma q^{\pi'}_u (s_{t+1}) - q^{\pi'}_u (s_t) \right\} \right] + \frac{\epsilon R}{(1-\epsilon)(1-\gamma)} \\
    &\leq \Ebb_{\pi} \left[ \sum^\infty_{t=0} \gamma^t \left\{ c(s_t, a_t) + \tilde{c}^{\pi'}_u(s_t, a_t) + C_u \cdot \delta + \gamma q^{\pi'}_u (s_{t+1}) - q^{\pi'}_u (s_t) \right\} \right] + \frac{\epsilon R}{(1-\epsilon)(1-\gamma)} \\
    &= \Ebb_{\pi} \left[ \sum^\infty_{t=0} \gamma^t \left\{ c(s_t, a_t) + \tilde{c}^{\pi'}_u(s_t, a_t) + \gamma q^{\pi'}_u (s_{t+1}) - q^{\pi'}_u (s_t) \right\} \right] + \frac{\epsilon R}{(1-\epsilon)(1-\gamma)} + \frac{C_u}{1-\gamma} \delta\\
    &q^{\pi}_u (s_0) - q^{\pi'}_u (s_0) \\
    &\geq \Ebb_{\pi} \left[ \sum^\infty_{t=0} \gamma^t \left\{ c(s_t, a_t) + \tilde{c}^{\pi}_u(s_t, a_t) + \gamma q^{\pi'}_u (s_{t+1}) - q^{\pi'}_u (s_t) \right\} \right] - \frac{\epsilon R}{(1-\epsilon)(1-\gamma)} \\
    &\geq \Ebb_{\pi} \left[ \sum^\infty_{t=0} \gamma^t \left\{ c(s_t, a_t) + \tilde{c}^{\pi'}_u(s_t, a_t) - C_u \cdot \delta + \gamma q^{\pi'}_u (s_{t+1}) - q^{\pi'}_u (s_t) \right\} \right] - \frac{\epsilon R}{(1-\epsilon)(1-\gamma)} \\
    &= \Ebb_{\pi} \left[ \sum^\infty_{t=0} \gamma^t \left\{ c(s_t, a_t) + \tilde{c}^{\pi'}_u(s_t, a_t) + \gamma q^{\pi'}_u (s_{t+1}) - q^{\pi'}_u (s_t) \right\} \right] - \frac{\epsilon R}{(1-\epsilon)(1-\gamma)} - \frac{C_u}{1-\gamma} \delta
    \end{align}
    \endgroup
\end{proof}

To prove improvement theorem, we need a definition of $\alpha$-coupled policy and several lemmas similar to \cite{schulman2015trust}.
\begin{mydef}[From \cite{schulman2015trust}] 
The two policies $\pi$ and $\pi'$ are $\alpha$-coupled if $\Pr\left( a \neq a' \right) \leq \alpha$, $(a, a') \sim (\pi(a | s), \pi'(a' | s))$ for all $s$.
\end{mydef}

For the $u$-quantile, we define an advantage function $A^\pi_u(s,a)$ using the additional cost function $\tilde{c}^\pi_u(s,a)$ as 
\begin{equation}
    A^\pi_u(s,a) := c(s,a) + \tilde{c}^\pi_u(s,a) + \gamma \Ebb_{s'} \left[ q^\pi_u(s') \right] - q^\pi_u(s) \label{appendix:eq:def_advantage_quantile}
\end{equation}
\begin{mylem}[Similar to Lemma 2 in \cite{schulman2015trust}] \label{appendix:lem:trpo_lem2}
    Under deterministic dynamics $s_{t+1} = h(s_t, a_t)$ and Assumptions \ref{appendix:assm:boundness} and \ref{appendix:assm:smoothness}, $\alpha$-coupled policies $\pi$ and $\pi'$ satisfy the following inequality
    \begin{equation}
        \left\vert \Ebb_{\pi}\left[ A^{\pi'}_u(s,a) \right] \right\vert \leq 2 \alpha \max_{s,a} \left\vert A^{\pi'}_u(s,a) \right\vert +  \frac{\epsilon R}{(1-\epsilon)}
    \end{equation}
    for all $s$.
\end{mylem}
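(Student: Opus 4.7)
The plan is to mimic the TRPO Lemma 2 argument of Schulman et al., with one modification to account for the fact that the quantile-based advantage $A^{\pi'}_u(s,a)$ does not have exactly zero mean under $\pi'$ but only approximately zero mean (up to $\epsilon R/(1-\epsilon)$).

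First, I would prove the near-on-policy vanishing of the advantage: show that $|\mathbb{E}_{a\sim \pi'}[A^{\pi'}_u(s,a)]| \le \epsilon R/(1-\epsilon)$ for every $s$. This is essentially a restatement of Corollary \ref{appendix:cor:td_relation_quantile_bound} rewritten in terms of $\tilde{c}^{\pi'}_u$. Concretely, by equation \eqref{appendix:eq:td_quantile_policy_dependent_additional_cost} used inside the proof of Theorem \ref{appendix:thm:expectation_form_quantile_policy_dependent_cost}, one has the identity
\begin{equation*}
\mathbb{E}_{a\sim \pi'}\bigl[c(s,a)+\tilde{c}^{\pi'}_u(s,a)+\gamma q^{\pi'}_u(h(s,a))\bigr]
= \mathbb{E}_{a\sim \tilde{\pi}'_u}\bigl[c(s,a)+\gamma q^{\pi'}_u(h(s,a))\bigr],
\end{equation*}
and Corollary \ref{appendix:cor:td_relation_quantile_bound} bounds the right-hand side by $q^{\pi'}_u(s) \pm \epsilon R/(1-\epsilon)$. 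Subtracting $q^{\pi'}_u(s)$ from both sides yields the claim by the definition \eqref{appendix:eq:def_advantage_quantile} of $A^{\pi'}_u$.

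Next, I would decompose
\begin{equation*}
\mathbb{E}_{a\sim \pi}[A^{\pi'}_u(s,a)] = \underbrace{\mathbb{E}_{a\sim \pi}[A^{\pi'}_u(s,a)] - \mathbb{E}_{a\sim \pi'}[A^{\pi'}_u(s,a)]}_{(\mathrm{I})} + \underbrace{\mathbb{E}_{a\sim \pi'}[A^{\pi'}_u(s,a)]}_{(\mathrm{II})},
\end{equation*}
and handle (II) by step~1. For (I), I would invoke the $\alpha$-coupling: let $(a,\bar a)\sim (\pi(\cdot|s),\pi'(\cdot|s))$ be a coupling with $\Pr(a\neq \bar a)\le \alpha$. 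Then $(\mathrm{I})=\mathbb{E}_{(a,\bar a)}[A^{\pi'}_u(s,a)-A^{\pi'}_u(s,\bar a)]$, and this expectation is supported on the event $\{a\neq \bar a\}$, since the integrand vanishes when $a=\bar a$. Bounding $|A^{\pi'}_u(s,a)-A^{\pi'}_u(s,\bar a)|\le 2\max_{s,a}|A^{\pi'}_u(s,a)|$ pointwise and multiplying by $\Pr(a\neq \bar a)\le \alpha$ gives $|(\mathrm{I})|\le 2\alpha \max_{s,a}|A^{\pi'}_u(s,a)|$. Combining via the triangle inequality with the bound from step~1 yields the desired inequality.

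The main obstacle, though a mild one, is step~1: one has to be careful to unwind the definition of $\tilde{c}^{\pi'}_u$ so that the expectation under $\pi'$ matches the weighted expectation under $\pi'$ with weight $\mu^{\pi'}_u$ that appears in Corollary \ref{appendix:cor:td_relation_quantile_bound}; the deterministic dynamics $s_{t+1}=h(s_t,a_t)$ and the proportionality $\tilde{\pi}'_u(a|s)=\pi'(a|s)\cdot \mu^{\pi'}_u(s,a,h(s,a))$ make this an exact change of measure. Once step~1 is in hand, the $\alpha$-coupling half of the argument is a direct transcription of the standard TRPO estimate and introduces no new difficulties.
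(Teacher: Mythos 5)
Your proposal is correct and follows essentially the same route as the paper's proof: the paper likewise first shows $\bigl\vert \Ebb_{a\sim\pi'}[A^{\pi'}_u(s,a)]\bigr\vert \leq \frac{\epsilon R}{1-\epsilon}$ by rewriting the advantage as the $\mu^{\pi'}_u$-weighted TD expectation (its eq. \eqref{appendix:eq:lem3_1}, via \eqref{appendix:eq:td_quantile_policy_dependent_additional_cost} and Theorem \ref{appendix:thm:td_relation_quantile_bound}), then applies the triangle inequality and the $\alpha$-coupling split over $\{a=a'\}$ versus $\{a\neq a'\}$ exactly as you describe. The only cosmetic difference is that you invoke Corollary \ref{appendix:cor:td_relation_quantile_bound} where the paper cites Theorem \ref{appendix:thm:td_relation_quantile_bound} directly; these carry the same content.
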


\begin{proof} (Similar to the proof of Lemma 2 in \cite{schulman2015trust}) 
    First we note that the following holds by \eqref{appendix:eq:td_quantile_policy_dependent_additional_cost} and Theorem \ref{appendix:thm:td_relation_quantile_bound}:
    \begingroup
    \allowdisplaybreaks
    \begin{align}
        \left\vert \Ebb_{a \sim \pi'} \left[ A^{\pi'}_u(s, a) \right] \right\vert &=\left\vert \Ebb_{a \sim \pi'} \left[  c(s, a) + \tilde{c}^{\pi'}_u(s, a) + \gamma q^{\pi'}_u(s') - q^{\pi'}_u(s) \right] \right\vert \\
        &= \left\vert \Ebb_{a \sim \pi'} \left[ \frac{ p_{X^{\pi'}(s')}\left( \frac{q^{\pi'}_u(s) - c(s, a)}{\gamma} \right)}{\gamma p_{X^{\pi'}(s)}\left( q^{\pi'}_u(s) \right)} \left\{ c(s, a) + \gamma q^{\pi'}_u(s') - q^{\pi'}_u(s) \right\} \right] \right\vert \\
        &\leq \frac{\epsilon R}{(1-\epsilon)} \label{appendix:eq:lem3_1}
    \end{align}
    \endgroup
    Therefore,
    \begingroup
    \allowdisplaybreaks
    \begin{align}
        \left\vert \Ebb_{\pi}\left[ A^{\pi'}_u(s,a) \right] \right\vert &\overset{(a)}{\leq} \left\vert \Ebb_{a \sim \pi} \left[ A^{\pi'}_u(s, a) \right] - \Ebb_{a' \sim \pi'} \left[ A^{\pi'}_u(s, a') \right] \right\vert + \left\vert \Ebb_{\pi'}\left[ A^{\pi'}_u(s,a) \right] \right\vert \\
        &\overset{(b)}{\leq} \left\vert \Ebb_{a \sim \pi} \left[ A^{\pi'}_u(s, a) \right] - \Ebb_{a' \sim \pi'} \left[ A^{\pi'}_u(s, a') \right] \right\vert + \frac{\epsilon R}{(1-\epsilon)} \\
        &= \left\vert \Ebb_{(a, a') \sim (\pi, \pi')} \left[ A^{\pi'}_u(s, a) -  A^{\pi'}_u(s, a') \right] \right\vert + \frac{\epsilon R}{(1-\epsilon)} \\
        &= \biggl\vert \Pr\left(a = a'\right) \Ebb_{(a, a') \sim (\pi, \pi') |_{a = a'}} \left[ A^{\pi'}_u(s,a) - A^{\pi'}_u(s,a') \right] \\
        &~~~~~ +  \Pr\left(a \neq a'\right) \Ebb_{(a, a') \sim (\pi, \pi') |_{a \neq a'}} \left[  A^{\pi'}_u(s,a) - A^{\pi'}_u(s,a') \right] \biggr\vert + \frac{\epsilon R}{(1-\epsilon)} \\
        &= \Pr\left(a \neq a'\right) \left\vert \Ebb_{(a, a') \sim (\pi, \pi') |_{a \neq a'}} \left[  A^{\pi'}_u(s,a) - A^{\pi'}_u(s,a') \right] \right\vert + \frac{\epsilon R}{(1-\epsilon)} \\
        &\overset{(c)}{\leq} 2 \alpha \max_{s,a} \left\vert A^{\pi'}_u(s,a) \right\vert +  \frac{\epsilon R}{(1-\epsilon)}
    \end{align}
    \endgroup
    where (a) holds by the triangular inequality, (b) holds by \eqref{appendix:eq:lem3_1}, and (c) holds since $\pi$ and $\pi'$ are $\alpha$-coupled policies.
\end{proof}

\begin{mylem}[Similar to Lemma 3 in \cite{schulman2015trust}] \label{appendix:lem:trpo_lem3}
    Under deterministic dynamics $s_{t+1} = h(s_t,a_t)$ and Assumptions \ref{appendix:assm:boundness} and \ref{appendix:assm:smoothness}, the following holds for $\alpha$-coupled policies $\pi$ and $\pi'$
    \begingroup
    \allowdisplaybreaks
    \begin{align}
        &\left\vert \Ebb_{ \tcr{ s_t \sim \pi } }\left[ \Ebb_{ a \sim \pi}\left[ A^{\pi'}_u(s_t ,a) \right] \right] - \Ebb_{ \tcb{ s_t \sim \pi' } }\left[ \Ebb_{a \sim \pi}\left[ A^{\pi'}_u(s_t, a) \right] \right] \right\vert \\
        &\leq \left( 1 - (1-\alpha)^t \right) \left\{ 4 \alpha \max_{s,a} \left\vert A^{\pi'}_u(s,a) \right\vert +  \frac{2 \epsilon R}{(1-\epsilon)} \right\}
    \end{align}
    \endgroup
\end{mylem}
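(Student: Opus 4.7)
The plan is to follow the standard coupling argument of Schulman et al.\ (TRPO Lemma 3), but substituting our approximate advantage-vanishing bound from Lemma \ref{appendix:lem:trpo_lem2} wherever the exact bound is used. Concretely, I would define $f(s) := \Ebb_{a \sim \pi}\left[ A^{\pi'}_u(s, a) \right]$ so the quantity to bound is $|\Ebb_{s_t \sim \pi}[f(s_t)] - \Ebb_{s_t \sim \pi'}[f(s_t)]|$. By Lemma \ref{appendix:lem:trpo_lem2}, we have the pointwise bound $|f(s)| \leq 2\alpha \max_{s,a}|A^{\pi'}_u(s,a)| + \tfrac{\epsilon R}{1-\epsilon}$ at every state $s$.

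Next, I would construct the standard coupling of trajectories generated by $\pi$ and $\pi'$ from $s_0$. Since the two policies are $\alpha$-coupled, one may, at each visited state, draw the action pair $(a, a')$ from a joint distribution with $\Pr(a \neq a') \leq \alpha$. Because the dynamics are deterministic ($s_{t+1} = h(s_t,a_t)$), whenever the coupled actions have agreed at every step $0,1,\ldots,t-1$, both trajectories occupy the same state $s_t$. Let $n_t$ count the number of disagreements among the first $t$ action draws; then $\Pr(n_t = 0) \geq (1-\alpha)^t$, so $\Pr(n_t \geq 1) \leq 1 - (1-\alpha)^t$.

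Now I would split expectations by conditioning on $n_t$. On $\{n_t = 0\}$ the marginal law of $s_t$ under the $\pi$-process and the $\pi'$-process coincide, so those contributions cancel exactly:
\begin{align}
    \Ebb_{s_t \sim \pi}[f(s_t)] - \Ebb_{s_t \sim \pi'}[f(s_t)]
    = \Pr(n_t \geq 1)\bigl( \Ebb[f(s_t)\mid \pi, n_t \geq 1] - \Ebb[f(s_t)\mid \pi', n_t \geq 1] \bigr).
\end{align}
Applying the triangle inequality to the two conditional expectations and inserting the pointwise bound on $|f|$ gives an extra factor of $2$, yielding
\begin{align}
    \bigl| \Ebb_{s_t \sim \pi}[f(s_t)] - \Ebb_{s_t \sim \pi'}[f(s_t)] \bigr|
    \leq \bigl(1 - (1-\alpha)^t\bigr) \cdot 2 \left( 2\alpha \max_{s,a}\bigl|A^{\pi'}_u(s,a)\bigr| + \tfrac{\epsilon R}{1-\epsilon} \right),
\end{align}
which matches the claimed inequality.

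The argument is essentially a mechanical adaptation of TRPO's proof, so I do not anticipate a genuine obstacle; the only item that requires care is tracking the extra additive $\tfrac{\epsilon R}{1-\epsilon}$ from Lemma \ref{appendix:lem:trpo_lem2} (which replaces the exact $\Ebb_{\pi'}[A^{\pi'}_u(s,\cdot)] = 0$ identity of the standard expectation-advantage setting). This term propagates through as $\tfrac{2\epsilon R}{1-\epsilon}$ inside the braces on the right-hand side, giving the stated result. Deterministic dynamics are used so that agreement of all past actions implies identical current state; without it, the ``$n_t = 0$'' cancellation would fail and one would need to carry an additional transition-stochasticity term.
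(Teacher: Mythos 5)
Your proposal is correct and follows essentially the same route as the paper's proof: the same coupling construction with the disagreement count $n_t$, the same exact cancellation on $\{n_t = 0\}$, and the same use of Lemma \ref{appendix:lem:trpo_lem2} via the triangle inequality to produce the factor of $2$ and the additive $\frac{2\epsilon R}{1-\epsilon}$ term. The paper's argument is identical in substance, merely written out with the two conditional decompositions displayed explicitly before subtracting.
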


\begin{proof} (Similar to the proof of Lemma 3 in \cite{schulman2015trust})
    For $\alpha$-coupled policies $\pi$ and $\pi'$, first we consider trajectories drawn from each policy, i.e., $\tau=(s_0, a_0, s_1, a_1, \ldots) \sim \pi$ and $\tau'=(s_0, a_0', s_1', a_1', \ldots) \sim \pi'$. We consider the timestep $t$ and observe the advantage of $\pi'$ over $\pi$. Let define $n_t$ as the number of times that mismatched actions occurs, $a_i \neq a_i'$ for $i < t$. Then
    \begingroup
    \allowdisplaybreaks
    \begin{align}
        &\Ebb_{ \tcr{ s_t \sim \pi } }\left[ \Ebb_{a \sim \pi(\cdot | s_t)}\left[ A^{\pi'}_u(s_t, a) \right] \right] \\
        &= P(n_t = 0) \cdot \Ebb_{ \tcr{ s_t \sim \pi } | n_t = 0}\left[ \Ebb_{a \sim \pi(\cdot | s_t)}\left[ A^{\pi'}_u(s_t, a) \right] \right] \nonumber \\
        &~~~~~ + P(n_t > 0) \cdot \Ebb_{ \tcr{ s_t \sim \pi } | n_t > 0}\left[ \Ebb_{a \sim \pi(\cdot | s_t)}\left[ A^{\pi'}_u(s_t, a) \right] \right] \label{appendix:eq:advantage_diff_pip} \\
        &\Ebb_{ \tcb{ s_t \sim \pi' } }\left[ \Ebb_{a \sim \pi(\cdot | s_t)}\left[ A^{\pi'}_u(s_t, a) \right] \right] \\
        &= P(n_t = 0) \cdot \Ebb_{ \tcb{ s_t \sim \pi' } | n_t = 0}\left[ \Ebb_{a \sim \pi(\cdot | s_t)}\left[ A^{\pi'}_u(s_t, a) \right] \right] \nonumber \\
        &~~~~~ + P(n_t > 0) \cdot \Ebb_{ \tcb{ s_t \sim \pi' } | n_t > 0}\left[ \Ebb_{a \sim \pi(\cdot | s_t)}\left[ A^{\pi'}_u(s_t, a) \right] \right] \label{appendix:eq:advantage_diff_pi}
    \end{align}
    \endgroup
    For the case $n_t = 0$, 
    \begin{equation}
        \Ebb_{\tcr{ s_t \sim \pi } | n_t = 0}\left[ \Ebb_{a \sim \pi(\cdot | s_t)}\left[ A^{\pi'}_u(s_t, a) \right] \right] = \Ebb_{ \tcb{ s_t \sim \pi' } | n_t = 0}\left[ \Ebb_{a \sim \pi(\cdot | s_t)}\left[ A^{\pi'}_u(s_t, a) \right] \right]
    \end{equation}
    Thus by subtracting \eqref{appendix:eq:advantage_diff_pi} and \eqref{appendix:eq:advantage_diff_pip}, we can obtain
    \begingroup
    \allowdisplaybreaks
    \begin{align}
        &\Ebb_{ \tcr{ s_t \sim \pi } }\left[ \Ebb_{a \sim \pi(\cdot | s_t)}\left[ A^{\pi'}_u(s_t, a) \right] \right] - \Ebb_{ \tcb{ s_t \sim \pi' } }\left[ \Ebb_{a \sim \pi(\cdot | s_t)}\left[ A^{\pi'}_u(s_t, a) \right] \right]  \\
        &= P(n_t > 0) \cdot \left( \Ebb_{ \tcr{ s_t \sim \pi } | n_t > 0}\left[ \Ebb_{a \sim \pi(\cdot | s_t)}\left[ A^{\pi'}_u(s_t, a) \right] \right] - \Ebb_{ \tcb{ s_t \sim \pi'} | n_t > 0}\left[ \Ebb_{a \sim \pi(\cdot | s_t)}\left[ A^{\pi'}_u(s_t, a) \right] \right] \right) \label{appendix:eq:trpo_lem3_1}
    \end{align}
    \endgroup
    From the definition of $\alpha$-coupled policy, we get
    \begin{equation}
        P(n_t = 0) \geq (1-\alpha)^t, \qquad P(n_t > 0) \leq 1 - (1-\alpha)^t \label{appendix:eq:trpo_lem3_2}
    \end{equation}
    Then note that
    \begingroup
    \allowdisplaybreaks
    \begin{align}
        &\left\vert \Ebb_{ \tcr{ s_t \sim \pi } | n_t > 0}\left[ \Ebb_{a \sim \pi(\cdot | s_t)}\left[ A^{\pi'}_u(s_t, a) \right] \right] - \Ebb_{ \tcb{ s_t \sim \pi' } | n_t > 0}\left[ \Ebb_{a \sim \pi(\cdot | s_t)}\left[ A^{\pi'}_u(s_t, a) \right] \right] \right\vert \\
        &\overset{(a)}{\leq} \left\vert \Ebb_{ \tcr{ s_t \sim \pi } | n_t > 0}\left[ \Ebb_{a \sim \pi}\left[ A^{\pi'}_u(s_t ,a) \right] \right] \right\vert + \left\vert \Ebb_{ \tcb{ s_t \sim \pi' } | n_t > 0}\left[ \Ebb_{a \sim \pi}\left[ A^{\pi'}_u(s_t, a) \right] \right] \right\vert \\
        &\leq 2 \max_s \left\vert \Ebb_{a \sim \pi}\left[ A^{\pi'}_u(s ,a) \right]  \right\vert \\
        &\overset{(b)}{\leq} 4 \alpha \max_{s,a} \left\vert A^{\pi'}_u(s,a) \right\vert +  \frac{2 \epsilon R}{(1-\epsilon)} \label{appendix:eq:trpo_lem3_3}
    \end{align}
    \endgroup
    where (a) holds by the triangular inequality, and (b) holds by Lemma \ref{appendix:lem:trpo_lem2}. Therefore using \eqref{appendix:eq:trpo_lem3_1}, \eqref{appendix:eq:trpo_lem3_2}, and \eqref{appendix:eq:trpo_lem3_3}, we can conclude
    \begingroup
    \allowdisplaybreaks
    \begin{align}
        &\Ebb_{ \tcr{ s_t \sim \pi } }\left[ \Ebb_{a \sim \pi(\cdot | s_t)}\left[ A^{\pi'}_u(s_t, a) \right] \right] - \Ebb_{ \tcb{ s_t \sim \pi' } }\left[ \Ebb_{a \sim \pi(\cdot | s_t)}\left[ A^{\pi'}_u(s_t, a) \right] \right] \\
        &\leq \left( 1 - (1-\alpha)^t \right) \left\{ 4 \alpha \max_{s,a} \left\vert A^{\pi'}_u(s,a) \right\vert +  \frac{2 \epsilon R}{(1-\epsilon)} \right\}
    \end{align}
    \endgroup
\end{proof}

Now we define $L^{\pi'}_u(\pi)$ as
\begingroup
\allowdisplaybreaks
\begin{align}
    L^{\pi'}_u(\pi) &:= q^{\pi'}_u(s_0) + \Ebb_{\pi'}\left[ \sum^{\infty}_{t=0} \gamma^t \Ebb_{a \sim \pi}\left[ A^{\pi'}_u(s_t, a) \right] \right] \label{appendix:eq:Lpi_pip} \\
    &= q^{\pi'}_u(s_0) + \Ebb_{\pi'}\left[ \sum^{\infty}_{t=0} \gamma^t \Ebb_{a \sim \pi}\left[ c(s_t, a) + \tilde{c}^{\pi'}_u(s_t, a) + \gamma  q^{\pi'}_u(s_{t+1}) - q^{\pi'}_u(s_t) \right] \right]  
\end{align}
\endgroup
Then note that
\begingroup
\allowdisplaybreaks
\begin{align}
    L^{\pi'}_u(\pi') &= q^{\pi'}_u(s_0) + \Ebb_{\pi'}\left[ \sum^{\infty}_{t=0} \gamma^t \left\{ c(s_t, a_t) + \tilde{c}^{\pi'}_u(s_t, a_t) + \gamma  q^{\pi'}_u(s_{t+1}) - q^{\pi'}_u(s_t) \right\} \right] \\
    &= \Ebb_{\pi'}\left[ \sum^{\infty}_{t=0} \gamma^t \left\{ c(s_t, a_t) + \tilde{c}^{\pi'}_u(s_t, a_t) \right\} \right]
\end{align}
\endgroup
Then from Theorem \ref{appendix:thm:expectation_form_quantile_policy_dependent_cost}, 
\begingroup
\allowdisplaybreaks
\begin{align}
    \left\vert q^{\pi'}_u (s_0) - L^{\pi'}_u(\pi') \right\vert &= \left\vert q^{\pi'}_u (s_0) - \Ebb_{\pi'} \left[ \sum^\infty_{t=0} \gamma^t \left\{ c(s_t, a_t) + \tilde{c}^{\pi'}_u(s_t, a_t) \right\} \right] \right\vert \\ 
    &\leq \frac{\epsilon R}{(1-\epsilon)(1-\gamma)}.
\end{align}
\endgroup
Therefore, we get
\begin{equation}
    q^{\pi'}_u (s_0) \geq L^{\pi'}_u(\pi') - \frac{\epsilon R}{(1-\epsilon)(1-\gamma)} \label{appendix:eq:qpi_Lpi}
\end{equation}

\begin{myprop} \label{appendix:prop:qpip_Lpipip_bound}
    Under deterministic dynamics $s_{t+1} = h(s_t, a_t)$ and Assumptions \ref{appendix:assm:boundness}, \ref{appendix:assm:smoothness}, and \ref{appendix:assm:additional_cost}, the following holds
    \begin{equation}
        q^{\pi}_u (s_0) \leq L^{\pi'}_u(\pi) + C_1 \max_s \KL(\pi'(\cdot |s) ~\Vert~ \pi(\cdot |s)) + C_2 \frac{\epsilon}{1-\epsilon}
    \end{equation}
    where 
    \begin{equation}
        C_1 = \left( \frac{4 \gamma \max_{s,a} \left\vert A^{\pi'}_u(s,a) \right\vert + \gamma R }{(1-\gamma)^2} + \frac{C_u}{1-\gamma} \right), \qquad C_2 =  \frac{R}{(1-\gamma)^2}
    \end{equation}
\end{myprop}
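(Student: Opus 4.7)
\textbf{Proof proposal for Proposition \ref{appendix:prop:qpip_Lpipip_bound}.} The plan is to combine the telescoping identity from Corollary \ref{appendix:cor:telescoping_lemma_quantile} with the state-distribution coupling estimate from Lemma \ref{appendix:lem:trpo_lem3}, and then convert the coupling parameter $\alpha$ into $\max_s \KL(\pi' \Vert \pi)$. First, applying the upper-bound side of Corollary \ref{appendix:cor:telescoping_lemma_quantile} gives
\begin{align}
    q^{\pi}_u(s_0) \leq q^{\pi'}_u(s_0) + \Ebb_{\pi}\!\left[\sum_{t=0}^\infty \gamma^t\bigl(c(s_t,a_t)+\tilde{c}^{\pi'}_u(s_t,a_t)+\gamma q^{\pi'}_u(s_{t+1})-q^{\pi'}_u(s_t)\bigr)\right] + \tfrac{\epsilon R}{(1-\epsilon)(1-\gamma)} + \tfrac{C_u}{1-\gamma}\max_s \KL.
\end{align}
Under deterministic dynamics $s_{t+1}=h(s_t,a_t)$, the inner summand is exactly $A^{\pi'}_u(s_t,a_t)$ as defined in \eqref{appendix:eq:def_advantage_quantile}, so the middle term collapses to $\Ebb_{\pi}[\sum_{t=0}^\infty \gamma^t A^{\pi'}_u(s_t,a_t)]$.

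Next, I would match the middle term with $L^{\pi'}_u(\pi)-q^{\pi'}_u(s_0) = \Ebb_{\pi'}[\sum_t \gamma^t \Ebb_{a\sim\pi}[A^{\pi'}_u(s_t,a)]]$, which differs only in that the state trajectory is drawn from $\pi'$ instead of $\pi$, while the inner action is already taken under $\pi$. Writing $\Ebb_{\pi}[A^{\pi'}_u(s_t,a_t)] = \Ebb_{s_t\sim\pi}[\Ebb_{a\sim\pi}[A^{\pi'}_u(s_t,a)]]$, I would apply Lemma \ref{appendix:lem:trpo_lem3} at each timestep $t$ to obtain
\begin{align}
    \left|\Ebb_{\pi}\!\left[\sum_t \gamma^t A^{\pi'}_u(s_t,a_t)\right] - \Ebb_{\pi'}\!\left[\sum_t \gamma^t \Ebb_{a\sim\pi}[A^{\pi'}_u(s_t,a)]\right]\right| \leq \sum_{t=0}^\infty \gamma^t (1-(1-\alpha)^t)\!\left\{4\alpha\max_{s,a}|A^{\pi'}_u(s,a)| + \tfrac{2\epsilon R}{1-\epsilon}\right\},
\end{align}
where $\pi$ and $\pi'$ are taken to be optimally $\alpha$-coupled. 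A direct computation gives $\sum_{t=0}^\infty \gamma^t(1-(1-\alpha)^t)=\frac{\gamma\alpha}{(1-\gamma)(1-\gamma+\gamma\alpha)}\leq \frac{\gamma\alpha}{(1-\gamma)^2}$, yielding an upper bound of $\frac{4\gamma\alpha^2 \max|A^{\pi'}_u|}{(1-\gamma)^2} + \frac{2\gamma\alpha\epsilon R}{(1-\gamma)^2(1-\epsilon)}$.

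To finish, I would use a coupling in which $\alpha \leq \max_s \KL(\pi' \Vert \pi)$ (so in particular $\alpha^2 \leq \max_s \KL$ whenever $\max_s \KL \leq 1$), which converts the first term into $\frac{4\gamma \max|A^{\pi'}_u|}{(1-\gamma)^2}\max_s\KL$. For the cross term $\frac{2\gamma R}{(1-\gamma)^2}\cdot\alpha\cdot\tfrac{\epsilon}{1-\epsilon}$, I would use $2ab\leq a+b$ valid for $a,b\in[0,1]$ to split it as $\frac{\gamma R}{(1-\gamma)^2}\max_s\KL + \frac{\gamma R}{(1-\gamma)^2}\tfrac{\epsilon}{1-\epsilon}$. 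Finally, combining the $\epsilon$-contributions from Corollary \ref{appendix:cor:telescoping_lemma_quantile} and the cross term via $\frac{\epsilon R}{(1-\epsilon)(1-\gamma)} + \frac{\gamma R}{(1-\gamma)^2}\tfrac{\epsilon}{1-\epsilon} = \frac{R}{(1-\gamma)^2}\tfrac{\epsilon}{1-\epsilon}$ gives $C_2=\tfrac{R}{(1-\gamma)^2}$, and collecting the $\KL$-contributions gives $C_1 = \frac{4\gamma\max|A^{\pi'}_u|+\gamma R}{(1-\gamma)^2} + \frac{C_u}{1-\gamma}$, as claimed.

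The main obstacle I anticipate is the step relating the coupling parameter $\alpha$ to $\max_s \KL(\pi'(\cdot|s)\Vert\pi(\cdot|s))$: the classical Pinsker route only gives $\alpha\leq \sqrt{\tfrac{1}{2}\max_s \KL}$, which would produce an extra square-root and destroy the linear-in-$\KL$ form of $C_1$. Making the bound come out linearly requires either a direct total-variation coupling argument tailored to this setting, or an arithmetic trick such as $2ab\leq a+b$ to absorb the problematic cross term $\alpha\cdot\tfrac{\epsilon}{1-\epsilon}$ cleanly into the two separate constants $C_1$ and $C_2$. Careful bookkeeping of these coefficients is what makes the numerical values in $C_1$ and $C_2$ work out exactly as stated.
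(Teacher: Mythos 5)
Your decomposition is exactly the paper's: apply Corollary \ref{appendix:cor:telescoping_lemma_quantile}, identify the summand with $A^{\pi'}_u(s_t,a_t)$ under deterministic dynamics, compare $\Ebb_{\pi}[\sum_t\gamma^t A^{\pi'}_u(s_t,a_t)]$ with $L^{\pi'}_u(\pi)-q^{\pi'}_u(s_0)$ timestep by timestep via Lemma \ref{appendix:lem:trpo_lem3}, and evaluate the geometric sum $\sum_t\gamma^t(1-(1-\alpha)^t)\leq\frac{\gamma\alpha}{(1-\gamma)^2}$. The intermediate bound $\frac{4\gamma\alpha^2\max_{s,a}|A^{\pi'}_u|}{(1-\gamma)^2}+\frac{2\gamma\alpha\epsilon R}{(1-\gamma)^2(1-\epsilon)}$ and the final recombination of the $\epsilon$-terms into $C_2=\frac{R}{(1-\gamma)^2}$ are also identical to the paper's computation.

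The genuine gap is the step you yourself flag as the main obstacle: there is no coupling with $\alpha\leq\max_s\KL(\pi'(\cdot|s)\,\Vert\,\pi(\cdot|s))$. For small KL divergence the total variation distance behaves like $\sqrt{\KL/2}$, which exceeds $\KL$, so the premise of your final step is false, and your $2ab\leq a+b$ trick makes things worse because it leaves a term \emph{linear} in $\alpha$, which genuinely cannot be bounded linearly by $\KL$. The paper's resolution is to never let a term linear in $\alpha$ survive: it bounds the cross term by the arithmetic--geometric mean inequality, $2\alpha\frac{\epsilon}{1-\epsilon}\leq\alpha^2+\left(\frac{\epsilon}{1-\epsilon}\right)^2$, uses $\left(\frac{\epsilon}{1-\epsilon}\right)^2\leq\frac{\epsilon}{1-\epsilon}$ (valid because Assumption \ref{appendix:assm:smoothness} imposes $0<\epsilon<\frac12$), and only then sets $\alpha=\max_s D_{TV}(\pi'(\cdot|s)\,\Vert\,\pi(\cdot|s))$ so that every remaining $\alpha$-dependence is of the form $\alpha^2\leq\max_s\KL(\pi'(\cdot|s)\,\Vert\,\pi(\cdot|s))$. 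This is why $C_1$ picks up the extra $\frac{\gamma R}{(1-\gamma)^2}$ alongside $\frac{4\gamma\max_{s,a}|A^{\pi'}_u|}{(1-\gamma)^2}$: it is the $\alpha^2$ half of the AM--GM split of the cross term. With that substitution your argument closes and reproduces the stated constants; as written, the last step does not go through.
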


\begin{proof}
    Let define $B = \max_{s,a} \left\vert A^{\pi'}_u(s, a) \right\vert$. 
    Remind that the definition of the advantage for the $u$-quantile \eqref{appendix:eq:def_advantage_quantile} $A^{\pi'}_u(s,a) := c(s,a) + \tilde{c}^{\pi'}_u(s,a) + \gamma \Ebb_{s'} \left[ q^{\pi'}_u(s') \right] - q^{\pi'}_u(s)$,
    Corollary \ref{appendix:cor:telescoping_lemma_quantile}
    \begingroup
    \allowdisplaybreaks
    \begin{align}
        &\left\vert  q^{\pi}_u(s_0) -  \left\{ q^{\pi'}_u(s_0) + \Ebb_{\pi} \left[ \sum^\infty_{t=0} \gamma^t \underbrace{\left( c(s_t, a_t) +  \tilde{c}^{\pi'}_u(s_t, a_t) + \gamma q^{\pi'}_u(s_{t+1}) - q^{\pi'}_u(s_t)  \right)}_{= A^{\pi'}_u(s_t, a_t)} \right] \right\}  \right\vert \nonumber \\
        &\leq \frac{\epsilon R}{(1-\epsilon)(1-\gamma)} + \frac{C_u}{1-\gamma} \max_s \KL(\pi'(\cdot |s) ~\Vert~ \pi(\cdot |s)) \label{appendix:eq:cor2_repeat}
    \end{align}
    \endgroup
    and the definition of $L^{\pi'}_u(\pi)$ in \eqref{appendix:eq:Lpi_pip}
    \begin{align}
        L^{\pi'}_u(\pi) &:= q^{\pi'}_u(s_0) + \Ebb_{\pi'}\left[ \sum^{\infty}_{t=0} \gamma^t \Ebb_{a \sim \pi}\left[ A^{\pi'}_u(s_t, a) \right] \right]
    \end{align}
    Then we can obtain
    \begingroup
    \allowdisplaybreaks
    \begin{align}
        &\left\vert q^{\pi}_u(s_0) - L^{\pi'}_u(\pi) \right\vert \\
        &\overset{(a)}{\leq} \left\vert  q^{\pi}_u(s_0) -  \left\{ q^{\pi'}_u(s_0) + \Ebb_{\pi} \left[ \sum^\infty_{t=0} \gamma^t A^{\pi'}_u(s_t, a_t) \right] \right\} \right\vert \\
        &~~~~~ + \left\vert   \left\{ q^{\pi'}_u(s_0) + \Ebb_{\pi} \left[ \sum^\infty_{t=0} \gamma^t A^{\pi'}_u(s_t, a_t) \right] \right\}  - L^{\pi'}_u(\pi) \right\vert \\
        &\overset{(b)}{\leq} \frac{\epsilon R}{(1-\epsilon)(1-\gamma)} + \frac{C_u}{1-\gamma} \max_s \KL(\pi'(\cdot |s) ~\Vert~ \pi(\cdot |s)) \\
        &~~~~~ + \left\vert   \left\{ q^{\pi'}_u(s_0) + \Ebb_{\pi} \left[ \sum^\infty_{t=0} \gamma^t A^{\pi'}_u(s_t, a_t) \right] \right\}  - L^{\pi'}_u(\pi) \right\vert \\
        &= \left\vert   \left\{ q^{\pi'}_u(s_0) + \Ebb_{\pi} \left[ \sum^\infty_{t=0} \gamma^t A^{\pi'}_u(s_t, a_t) \right] \right\}  - \left\{  q^{\pi'}_u(s_0) + \Ebb_{\pi'}\left[ \sum^{\infty}_{t=0} \gamma^t \Ebb_{a \sim \pi}\left[ A^{\pi'}_u(s_t, a) \right] \right] \right\} \right\vert \\
        &~~~~~ + \frac{\epsilon R}{(1-\epsilon)(1-\gamma)} + \frac{C_u}{1-\gamma} \max_s \KL(\pi'(\cdot |s) ~\Vert~ \pi(\cdot |s)) \\
        &= \underbrace{\left\vert  \Ebb_{\pi} \left[ \sum^\infty_{t=0} \gamma^t \Ebb_{a \sim \pi}\left[ A^{\pi'}_u(s_t, a) \right] \right] - \Ebb_{\pi'}\left[ \sum^{\infty}_{t=0} \gamma^t \Ebb_{a \sim \pi}\left[ A^{\pi'}_u(s_t, a) \right] \right]  \right\vert}_{(c)} \\
        &~~~~~ + \frac{\epsilon R}{(1-\epsilon)(1-\gamma)} + \frac{C_u}{1-\gamma} \max_s \KL(\pi'(\cdot |s) ~\Vert~ \pi(\cdot |s)) \label{appendix:eq:prop1_1}
    \end{align}
    \endgroup
    where (a) holds by the triangular inequality, (b) holds from \eqref{appendix:eq:cor2_repeat}. The term (c) can be written as
    \begingroup
    \allowdisplaybreaks
    \begin{align}
        &\left\vert  \Ebb_{\pi} \left[ \sum^\infty_{t=0} \gamma^t \Ebb_{a \sim \pi}\left[ A^{\pi'}_u(s_t, a) \right] \right] - \Ebb_{\pi'}\left[ \sum^{\infty}_{t=0} \gamma^t \Ebb_{a \sim \pi}\left[ A^{\pi'}_u(s_t, a) \right] \right]  \right\vert \\
        &\overset{(a)}{\leq} \sum^{\infty}_{t=0} \gamma^t \biggl\vert \Ebb_{ \tcr{ s_t \sim \pi } }\left[ \Ebb_{a \sim \pi}\left[ A^{\pi'}_u(s, a) \right] \right] - \Ebb_{ \tcb{ s_t \sim \pi' } }\left[ \Ebb_{ a \sim \pi}\left[ A^{\pi'}_u(s, a) \right] \right] \biggr\vert \\
        &\overset{(b)}{\leq} \sum^{\infty}_{t=0} \gamma^t \left( 1 - (1-\alpha)^t \right) \left\{ 4 \alpha \max_{s,a} \left\vert A^{\pi'}_u(s,a) \right\vert +  \frac{2 \epsilon R}{(1-\epsilon)} \right\} \\
        &= \left( 4 \alpha \max_{s,a} \left\vert A^{\pi'}_u(s,a) \right\vert +  \frac{2 \epsilon R}{(1-\epsilon)} \right) \left( \frac{1}{1-\gamma} - \frac{1}{1 - \gamma (1-\alpha)} \right) \\
        &= \left( 4 \alpha \max_{s,a} \left\vert A^{\pi'}_u(s,a) \right\vert +  \frac{2 \epsilon R}{(1-\epsilon)} \right) \frac{\alpha \gamma}{(1-\gamma) (1 - \gamma (1 - \alpha))} \\
        &\overset{(c)}{\leq}  \left( 4 \alpha \max_{s,a} \left\vert A^{\pi'}_u(s,a) \right\vert +  \frac{2 \epsilon R}{(1-\epsilon)} \right) \frac{\alpha \gamma}{(1-\gamma)^2} \label{appendix:eq:prop1_2}
    \end{align}
    \endgroup
    where (a) holds by the triangular inequality, (b) holds by Lemma \ref{appendix:lem:trpo_lem3}, and (c) holds by $\alpha < 1$ ($\alpha$ is for $\alpha$-coupled policy). Therefore by putting \eqref{appendix:eq:prop1_2} into term (c) in \eqref{appendix:eq:prop1_1}, we can obtain
    \begingroup
    \allowdisplaybreaks
    \begin{align}
        &\left\vert q^{\pi}_u(s_0) - L^{\pi'}_u(\pi) \right\vert \\
        &=  \tcb{ \underbrace{\left\vert  \Ebb_{\pi} \left[ \sum^\infty_{t=0} \gamma^t \Ebb_{a \sim \pi}\left[ A^{\pi'}_u(s_t, a) \right] \right] - \Ebb_{\pi'}\left[ \sum^{\infty}_{t=0} \gamma^t \Ebb_{a \sim \pi}\left[ A^{\pi'}_u(s_t, a) \right] \right]  \right\vert}_{(c)} } \\
        &~~~~~ + \frac{\epsilon R}{(1-\epsilon)(1-\gamma)} + \frac{C_u}{1-\gamma} \max_s \KL(\pi'(\cdot |s) ~\Vert~ \pi(\cdot |s)) \\
        &\leq  \tcb{ \left( 4 \alpha \max_{s,a} \left\vert A^{\pi'}_u(s,a) \right\vert +  \frac{2 \epsilon R}{(1-\epsilon)} \right)  \frac{\alpha \gamma}{(1-\gamma)^2} } + \frac{\epsilon R}{(1-\epsilon)(1-\gamma)} + \frac{C_u}{1-\gamma} \max_s \KL(\pi'(\cdot |s) ~\Vert~ \pi(\cdot |s)) \\
        &= \left( \frac{4 \gamma \max_{s,a} \left\vert A^{\pi'}_u(s,a) \right\vert }{(1-\gamma)^2} \right) \alpha^2 + \tcr{ \left( 2 \alpha \frac{\epsilon}{1-\epsilon} \right) } \cdot \frac{\gamma R}{ (1-\gamma)^2} \\
        &~~~~~ + \left( \frac{ (1-\gamma) R}{(1-\epsilon) (1-\gamma)^2} \right) \epsilon + \frac{C_u}{1-\gamma} \max_s \KL(\pi'(\cdot |s) ~\Vert~ \pi(\cdot |s)) \\
        &\overset{(a)}{\leq} \left( \frac{4 \gamma \max_{s,a} \left\vert A^{\pi'}_u(s,a) \right\vert }{(1-\gamma)^2} \right) \alpha^2 + \tcr{ \left\{ \alpha^2 + \left( \frac{\epsilon}{1-\epsilon}\right)^2 \right\} } \cdot \frac{\gamma R}{ (1-\gamma)^2} \\
        &~~~~~ + \left( \frac{ (1-\gamma) R}{(1-\epsilon) (1-\gamma)^2} \right) \epsilon + \frac{C_u}{1-\gamma} \max_s \KL(\pi'(\cdot |s) ~\Vert~ \pi(\cdot |s)) \\
        &= \left( \frac{4 \gamma \max_{s,a} \left\vert A^{\pi'}_u(s,a) \right\vert + \gamma R }{(1-\gamma)^2} \right) \alpha^2 + \frac{C_u}{1-\gamma} \max_s \KL(\pi'(\cdot |s) ~\Vert~ \pi(\cdot |s)) \\
        &~~~~~ + \left( \frac{ (1-\gamma) R}{(1-\gamma)^2} \right) \left( \frac{\epsilon}{1-\epsilon}\right) + \frac{\gamma R}{ (1-\gamma)^2} \left( \frac{\epsilon}{1-\epsilon}\right)^2 \\
        &\overset{(b)}{\leq} \left( \frac{4 \gamma \max_{s,a} \left\vert A^{\pi'}_u(s,a) \right\vert + \gamma R }{(1-\gamma)^2} \right) \alpha^2 + \frac{C_u}{1-\gamma} \max_s \KL(\pi'(\cdot |s) ~\Vert~ \pi(\cdot |s)) \\
        &~~~~~ + \left( \frac{ (1-\gamma) R}{(1-\gamma)^2} \right) \left( \frac{\epsilon}{1-\epsilon}\right) + \frac{\gamma R}{ (1-\gamma)^2} \left( \frac{\epsilon}{1-\epsilon}\right) \\
        &= \frac{R}{(1-\gamma)^2} \left( \frac{\epsilon}{1-\epsilon} \right) + \left( \frac{4 \gamma \max_{s,a} \left\vert A^{\pi'}_u(s,a) \right\vert + \gamma R }{(1-\gamma)^2} \right) \alpha^2 + \frac{C_u}{1-\gamma} \max_s \KL(\pi'(\cdot |s) ~\Vert~ \pi(\cdot |s)) \label{appendix:eq:prop1_3}
    \end{align}
    \endgroup
    where (a) holds by the inequality of arithmetic and geometric means, and (b) holds from the definition of $0 < \epsilon < \frac{1}{2}$ in Assumption \ref{appendix:assm:smoothness}. Like \cite{schulman2015trust}, if we take $\alpha$ as the maximum of the total variation of two policies $\pi$ and $\pi'$, i.e., $\alpha = \max_s D_{TV}(\pi'(\cdot | s) || \pi(\cdot | s))$, then these policies are $\alpha$-coupled. Since $D_{TV}(\pi'(\cdot | s) || \pi(\cdot | s))^2 \leq \KL(\pi'(\cdot | s) || \pi(\cdot | s))$, eq. \eqref{appendix:eq:prop1_3} becomes
    \begin{align}
        \left\vert q^{\pi}_u(s_0) - L^{\pi'}_u(\pi) \right\vert &\leq C_1 \max_s \KL(\pi'(\cdot |s) ~\Vert~ \pi(\cdot |s)) + C_2 \frac{\epsilon}{1-\epsilon}
    \end{align}
    where
    \begin{equation}
        C_1 = \left( \frac{4 \gamma \max_{s,a} \left\vert A^{\pi'}_u(s,a) \right\vert + \gamma R }{(1-\gamma)^2} + \frac{C_u}{1-\gamma} \right), \qquad C_2 =  \frac{R}{(1-\gamma)^2}
    \end{equation}
\end{proof}

Together with Proposition \ref{appendix:prop:qpip_Lpipip_bound} above, and Theorem 1 in \cite{schulman2015trust}, we can obtain the Theorem \ref{appendix:thm:policy_improvement_condition} for policy improvement condition. 

\begin{mythm} \label{appendix:thm:policy_improvement_condition}
    Let $\pi_{new} := \pi_{\theta_{new}}$ be the solution of the problem of maximizing 
    \begin{align}
        L^{\pi_{old}}(\pi_\theta) - \tilde{C}_1 \max_s \KL(\pi_{old}(\cdot |s) ~\Vert~ \pi_{\theta}(\cdot |s)),
    \end{align}
    where 
    \begin{align}
        L^{\pi_{old}}(\pi_\theta) &= L^{\pi_{old}}_r(\pi_\theta) - \lambda L^{\pi_{old}}_{1-\epsilon_0}(\pi_\theta)\\
        &= \left( V^{\pi_{old}}(s_0) - \lambda q^{\pi_{old}}_{1-\epsilon_0}(s_0) \right) + \Ebb_{\pi_{old}}\left[ \sum^\infty_{t=0} \gamma^t  \Ebb_{a \sim \pi_{\theta}}\left[  A^{\pi_{old}}_r(s_t, a) - \lambda A^{\pi_{old}}_{1-\epsilon_0}(s_t, a) \right] \right] ,
    \end{align}
    and some constant $\tilde{C}_1 > 0$. Then, under deterministic dynamics $s_{t+1} = h(s_t, a_t)$ and Assumptions \ref{appendix:assm:boundness}, \ref{appendix:assm:smoothness}, and \ref{appendix:assm:additional_cost}, the following inequality holds:
    \begin{align}
        &L_{quant}(\pi_{new}, \lambda) - L_{quant}(\pi_{old}, \lambda) \\
        &\geq L^{\pi_{old}}(\pi_{new}) - L^{\pi_{old}}(\pi_{old}) - \tilde{C}_1 \KL_{max}(\pi_{old} || \pi_{new}) - \underbrace{\tilde{C}_2 \frac{\epsilon}{1-\epsilon}}_{\text{approximation loss}} 
    \end{align}
    for a given Lagrange multiplier $\lambda > 0$, some constant $\tilde{C}_2$ and small $\epsilon > 0$.
\end{mythm}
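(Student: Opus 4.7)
The plan is to decompose the target difference along the reward and quantile components of the Lagrangian and handle each via an existing surrogate-bound result. Writing
\begin{align*}
L_{quant}(\pi_{new},\lambda) - L_{quant}(\pi_{old},\lambda)
 = \bigl[V^{\pi_{new}}(s_0) - V^{\pi_{old}}(s_0)\bigr]
 - \lambda\bigl[q^{\pi_{new}}_{1-\epsilon_0}(s_0) - q^{\pi_{old}}_{1-\epsilon_0}(s_0)\bigr],
\end{align*}
I will bound the two bracketed quantities from below, by inserting the surrogate $L^{\pi_{old}}_r$ and $L^{\pi_{old}}_{1-\epsilon_0}$ and paying a KL-type and (for the quantile only) an $\epsilon/(1-\epsilon)$-type penalty.

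First, for the reward term I would invoke the classical TRPO surrogate bound (Theorem~1 of Schulman et al.~2015), which gives a constant $C'_1$ with $|V^\pi(s_0) - L^{\pi_{old}}_r(\pi)| \leq C'_1 \KL_{max}(\pi_{old}\Vert\pi)$, and in particular $V^{\pi_{old}}(s_0) = L^{\pi_{old}}_r(\pi_{old})$ exactly. So
\begin{align*}
V^{\pi_{new}}(s_0) - V^{\pi_{old}}(s_0) \;\geq\; L^{\pi_{old}}_r(\pi_{new}) - L^{\pi_{old}}_r(\pi_{old}) - C'_1\,\KL_{max}(\pi_{old}\Vert\pi_{new}).
\end{align*}
Second, for the quantile term I would use Proposition~\ref{appendix:prop:qpip_Lpipip_bound} with $\pi=\pi_{new}$ and $\pi'=\pi_{old}$ to get an upper bound
\begin{align*}
q^{\pi_{new}}_{1-\epsilon_0}(s_0) \;\leq\; L^{\pi_{old}}_{1-\epsilon_0}(\pi_{new}) + C_1\,\KL_{max}(\pi_{old}\Vert\pi_{new}) + C_2\,\tfrac{\epsilon}{1-\epsilon},
\end{align*}
and the companion relation (eq.~\eqref{appendix:eq:qpi_Lpi}) $q^{\pi_{old}}_{1-\epsilon_0}(s_0) \geq L^{\pi_{old}}_{1-\epsilon_0}(\pi_{old}) - \tfrac{\epsilon R}{(1-\epsilon)(1-\gamma)}$, yielding a lower bound on $-[q^{\pi_{new}}_{1-\epsilon_0}(s_0) - q^{\pi_{old}}_{1-\epsilon_0}(s_0)]$.

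Combining the two bounds and absorbing constants by setting $\tilde{C}_1 := C'_1 + \lambda C_1$ and $\tilde{C}_2 := \lambda\bigl(C_2 + \tfrac{R}{1-\gamma}\bigr)$ (all manifestly nonnegative for $\lambda\geq 0$), the inequality follows because $L^{\pi_{old}}(\pi_\theta) = L^{\pi_{old}}_r(\pi_\theta) - \lambda L^{\pi_{old}}_{1-\epsilon_0}(\pi_\theta)$ by definition. The main obstacle, rather than any deep step, is the careful bookkeeping of the signs and the two directions of inequality: Proposition~\ref{appendix:prop:qpip_Lpipip_bound} is a two-sided bound and one must pick the upper-direction half for $q^{\pi_{new}}$ and the lower-direction half for $q^{\pi_{old}}$ so that $-\lambda$ times their difference is bounded from below, and then show that the extra $\tfrac{\epsilon R}{(1-\epsilon)(1-\gamma)}$ picked up from $\pi_{old}$ can be folded into a single $\tilde{C}_2 \cdot \tfrac{\epsilon}{1-\epsilon}$ term. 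All the KL penalties similarly merge into a single $\tilde{C}_1 \,\KL_{max}$ term, giving exactly the stated inequality.
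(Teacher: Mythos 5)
Your proposal is correct and follows essentially the same route as the paper's proof: it invokes TRPO's Theorem~1 for the reward part (with $L^{\pi_{old}}_r(\pi_{old}) = V^{\pi_{old}}(s_0)$ exactly), Proposition~\ref{appendix:prop:qpip_Lpipip_bound} for the upper bound on $q^{\pi_{new}}_{1-\epsilon_0}(s_0)$, and eq.~\eqref{appendix:eq:qpi_Lpi} for the lower bound on $q^{\pi_{old}}_{1-\epsilon_0}(s_0)$, arriving at the same constants $\tilde{C}_1 = \lambda C_1 + C_3$ and $\tilde{C}_2 = \lambda(C_2 + R/(1-\gamma))$. The only differences are cosmetic (you cancel $d_{th}$ at the outset rather than carrying $\lambda d_{th}$ through two inequalities, and you state the TRPO bound as two-sided where only the one-sided direction you actually use is available), neither of which affects correctness.
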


Remind that Theorem 1 of \cite{schulman2015trust} with our notation:
\begin{mythm}[Theorem 1 of \cite{schulman2015trust}] \label{appendix:thm:trpo_thm1}
    \begin{align}
        V^{\pi}(s_0) &\geq \underbrace{V^{\pi'}(s_0) + \Ebb_{\pi'}\left[ \sum^\infty_{t=0} \gamma^t  \Ebb_{a \sim \pi}\left[ A^{\pi'}_r(s_t, a) \right] \right]}_{=: L^{\pi'}_r(\pi)} - C_3 \max_s \KL(\pi'(\cdot |s) ~\Vert~ \pi(\cdot |s)) \\
        &= L^{\pi'}_r \left( \pi \right) - C_3 \max_s \KL(\pi'(\cdot |s) ~\Vert~ \pi(\cdot |s))    
    \end{align}
    where 
    \begin{align}
        C_3 &= \frac{4 \gamma \max_{s, a} \left\vert A^{\pi'}_r(s,a) \right\vert}{(1-\gamma)^2} \\
        A^{\pi'}_r(s,a) &:= r(s,a) + \gamma \Ebb_{s'}\left[ V^{\pi'}(s') \right] - V^{\pi'}(s)
    \end{align}
\end{mythm}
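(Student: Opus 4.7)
The plan is to decompose the Lagrangian difference $L_{quant}(\pi_{new},\lambda) - L_{quant}(\pi_{old},\lambda)$ into a reward part and a quantile part, and to lower-bound each part separately using tools already established in the appendix. Writing $L_{quant}(\pi,\lambda) = V^\pi(s_0) - \lambda q^\pi_{1-\epsilon_0}(s_0) + \lambda d_{th}$, the constant $\lambda d_{th}$ cancels in the difference, so it suffices to control $[V^{\pi_{new}}(s_0) - V^{\pi_{old}}(s_0)] - \lambda[q^{\pi_{new}}_{1-\epsilon_0}(s_0) - q^{\pi_{old}}_{1-\epsilon_0}(s_0)]$.

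For the reward part I would invoke the TRPO surrogate bound (Theorem \ref{appendix:thm:trpo_thm1}), which gives $V^{\pi_{new}}(s_0) \geq L^{\pi_{old}}_r(\pi_{new}) - C_3 \max_s \KL(\pi_{old}(\cdot|s) \,\Vert\, \pi_{new}(\cdot|s))$. Since $\Ebb_{a \sim \pi_{old}}[A^{\pi_{old}}_r(s,a)] = 0$ by definition of the advantage, we also have $L^{\pi_{old}}_r(\pi_{old}) = V^{\pi_{old}}(s_0)$ exactly, so
\begin{equation}
V^{\pi_{new}}(s_0) - V^{\pi_{old}}(s_0) \geq L^{\pi_{old}}_r(\pi_{new}) - L^{\pi_{old}}_r(\pi_{old}) - C_3 \KL_{max}(\pi_{old} \,\Vert\, \pi_{new}).
\end{equation}

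For the quantile part, because it enters with the opposite sign, I need an \emph{upper} bound on $q^{\pi_{new}}_{1-\epsilon_0}(s_0)$ and a \emph{lower} bound on $q^{\pi_{old}}_{1-\epsilon_0}(s_0)$. The upper bound is exactly Proposition \ref{appendix:prop:qpip_Lpipip_bound} applied with $\pi'=\pi_{old}$ and $\pi=\pi_{new}$, giving $q^{\pi_{new}}_{1-\epsilon_0}(s_0) \leq L^{\pi_{old}}_{1-\epsilon_0}(\pi_{new}) + C_1 \KL_{max}(\pi_{old} \,\Vert\, \pi_{new}) + C_2 \frac{\epsilon}{1-\epsilon}$. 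The lower bound at the base policy is supplied by eq. \eqref{appendix:eq:qpi_Lpi}, namely $q^{\pi_{old}}_{1-\epsilon_0}(s_0) \geq L^{\pi_{old}}_{1-\epsilon_0}(\pi_{old}) - \frac{\epsilon R}{(1-\epsilon)(1-\gamma)}$. Subtracting and multiplying through by $-\lambda$ (with $\lambda \geq 0$) yields
\begin{equation}
-\lambda\bigl[q^{\pi_{new}}_{1-\epsilon_0}(s_0) - q^{\pi_{old}}_{1-\epsilon_0}(s_0)\bigr] \geq -\lambda\bigl[L^{\pi_{old}}_{1-\epsilon_0}(\pi_{new}) - L^{\pi_{old}}_{1-\epsilon_0}(\pi_{old})\bigr] - \lambda C_1 \KL_{max} - \lambda\Bigl(C_2 + \tfrac{R}{1-\gamma}\Bigr)\tfrac{\epsilon}{1-\epsilon}.
\end{equation}

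Adding the two inequalities and recognizing that $L^{\pi_{old}}_r(\pi) - \lambda L^{\pi_{old}}_{1-\epsilon_0}(\pi) = L^{\pi_{old}}(\pi)$ by definition, the claim follows with $\tilde{C}_1 = C_3 + \lambda C_1$ and $\tilde{C}_2 = \lambda(C_2 + R/(1-\gamma))$. The main conceptual work has already been done in Proposition \ref{appendix:prop:qpip_Lpipip_bound}, where the quantile surrogate was shown to approximate the true quantile up to a KL term and an $\epsilon/(1-\epsilon)$ term; the present theorem is essentially an arithmetic combination of that result with the classical TRPO bound, so the only subtlety is keeping the signs consistent (upper bound on $q^{\pi_{new}}$, lower bound on $q^{\pi_{old}}$) and collecting the approximation error terms into a single $\tilde{C}_2 \frac{\epsilon}{1-\epsilon}$ constant.
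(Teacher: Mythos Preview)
Your proposal does not address the stated theorem at all. The statement you were asked to prove is the TRPO surrogate bound (Theorem~\ref{appendix:thm:trpo_thm1}), i.e.\ the inequality $V^{\pi}(s_0) \geq L^{\pi'}_r(\pi) - C_3 \max_s \KL(\pi'(\cdot|s)\,\Vert\,\pi(\cdot|s))$. What you have written is instead a proof of the \emph{policy improvement condition} (Theorem~\ref{appendix:thm:policy_improvement_condition}), which bounds the Lagrangian difference $L_{quant}(\pi_{new},\lambda) - L_{quant}(\pi_{old},\lambda)$. You even explicitly invoke Theorem~\ref{appendix:thm:trpo_thm1} as an ingredient (``For the reward part I would invoke the TRPO surrogate bound (Theorem~\ref{appendix:thm:trpo_thm1})''), so your argument is circular with respect to the statement you were asked to establish.

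For completeness: the paper itself does not prove Theorem~\ref{appendix:thm:trpo_thm1} either; it simply quotes the result and writes ``We omit the proof of Theorem~\ref{appendix:thm:trpo_thm1}. Please see \cite{schulman2015trust} for the proof.'' A genuine proof would follow the coupling argument of Schulman et al.: use the telescoping identity $V^{\pi}(s_0) - V^{\pi'}(s_0) = \Ebb_{\pi}[\sum_t \gamma^t A^{\pi'}_r(s_t,a_t)]$, bound the discrepancy between sampling states from $\pi$ versus $\pi'$ via an $\alpha$-coupling argument (analogous to Lemmas~\ref{appendix:lem:trpo_lem2} and~\ref{appendix:lem:trpo_lem3} but for the reward advantage, where $\Ebb_{a\sim\pi'}[A^{\pi'}_r(s,a)]=0$ exactly), and then convert the total-variation coupling parameter to a KL bound via Pinsker. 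If, on the other hand, your proposal was intended as a proof of Theorem~\ref{appendix:thm:policy_improvement_condition}, then it is correct and essentially identical to the paper's own derivation there.
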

We omit the proof of Theorem \ref{appendix:thm:trpo_thm1}. Please see \cite{schulman2015trust} for the proof. Note that Theorem \ref{appendix:thm:trpo_thm1} holds for any two policies $\pi$ and $\pi'$ as we can see in the appendix of the original paper \cite{schulman2015trust}.

Finally now we prove Theorem \ref{appendix:thm:policy_improvement_condition}.
\begin{proof}[Proof of Theorem \ref{appendix:thm:policy_improvement_condition}]
    From Proposition \ref{appendix:prop:qpip_Lpipip_bound}, we have
    \begin{equation}
        q^{\pi}_u (s_0) \leq \underbrace{q^{\pi'}_u(s_0) + \Ebb_{\pi'}\left[ \sum^{\infty}_{t=0} \gamma^t \Ebb_{a \sim \pi}\left[ A^{\pi'}_u(s_t, a) \right] \right]}_{L^{\pi'}_u(\pi)} + C_1 \max_s \KL(\pi'(\cdot |s) ~\Vert~ \pi(\cdot |s)) + C_2 \frac{\epsilon}{1-\epsilon} \label{appendix:eq:qpip_Lpip_pi}
    \end{equation}
    for 
    \begingroup
    \allowdisplaybreaks
    \begin{align}
        A^{\pi'}_u(s,a) &:= c(s,a) + \tilde{c}^{\pi'}_u(s,a) + \gamma \Ebb_{s'} \left[ q^{\pi'}_u(s') \right] - q^{\pi'}_u(s) \\
        C_1 &= \left( \frac{4 \gamma \max_{s,a} \left\vert A^{\pi'}_u(s,a) \right\vert + \gamma R }{(1-\gamma)^2} + \frac{C_u}{1-\gamma} \right) \\
        C_2 &= \frac{R}{(1-\gamma)^2},
    \end{align}
    \endgroup
    and from Theorem 1 in \cite{schulman2015trust} (or Theorem \ref{appendix:thm:trpo_thm1} in this appendix), we have
    \begingroup
    \allowdisplaybreaks
    \begin{align}
        V^{\pi}(s_0) &\geq \underbrace{V^{\pi'}(s_0) + \Ebb_{\pi'}\left[ \sum^\infty_{t=0} \gamma^t  \Ebb_{a \sim \pi}\left[ A^{\pi'}_r(s_t, a) \right] \right]}_{=: L^{\pi'}_r(\pi)} - C_3 \max_s \KL(\pi'(\cdot |s) ~\Vert~ \pi(\cdot |s)) \label{appendix:eq:Vpip_Lpip_pi}
    \end{align}
    \endgroup
    where 
    \begingroup
    \allowdisplaybreaks
    \begin{align}
        A^{\pi'}_r(s,a) &:= r(s,a) + \gamma \Ebb_{s'}\left[ V^{\pi'}(s') \right] - V^{\pi'}(s) \\
        C_3 &= \frac{4 \gamma \max_{s, a} \left\vert A^{\pi'}_r(s,a) \right\vert}{(1-\gamma)^2} .
    \end{align}
    \endgroup
    For a given $\lambda > 0$, by subtracting $\lambda \times $ \eqref{appendix:eq:qpip_Lpip_pi} from \eqref{appendix:eq:Vpip_Lpip_pi}, then we have
    \begingroup
    \allowdisplaybreaks
    \begin{align}
        &V^{\pi}(s_0) - \lambda q^{\pi}_u (s_0) \nonumber \\
        &\geq \tcb{ L^{\pi'}_r(\pi) } - \lambda \tcr{ L^{\pi'}_u(\pi) } - ( \lambda C_1 + C_3) \max_s \KL(\pi'(\cdot |s) ~\Vert~ \pi(\cdot |s)) - \lambda C_2 \frac{\epsilon}{1-\epsilon} \\
        &= \tcb{ V^{\pi'}(s_0) + \Ebb_{\pi'}\left[ \sum^\infty_{t=0} \gamma^t  \Ebb_{a \sim \pi}\left[ A^{\pi'}_r(s_t, a) \right] \right] } - \lambda \tcr{ \left\{ q^{\pi'}_u(s_0) + \Ebb_{\pi'}\left[ \sum^{\infty}_{t=0} \gamma^t \Ebb_{a \sim \pi}\left[ A^{\pi'}_u(s_t, a) \right] \right] \right\} } \nonumber \\
        &~~~~~ -  (\lambda C_1 + C_3) \max_s \KL(\pi'(\cdot |s) ~\Vert~ \pi(\cdot |s)) - \lambda C_2 \frac{\epsilon}{1-\epsilon} \\
        &= \underbrace{\left( \tcb{ V^{\pi'}(s_0) } - \lambda \tcr{ q^{\pi'}_u(s_0) } \right) + \Ebb_{\pi'}\left[ \sum^\infty_{t=0} \gamma^t  \Ebb_{a \sim \pi}\left[  \tcb{ A^{\pi'}_r(s_t, a) } - \lambda \tcr{ A^{\pi'}_u(s_t, a) } \right] \right]}_{=: L^{\pi'}(\pi)} \\
        &~~~~~ -  (\lambda C_1 + C_3) \max_s \KL(\pi'(\cdot |s) ~\Vert~ \pi(\cdot |s)) - \lambda C_2 \frac{\epsilon}{1-\epsilon} \\
        &= L^{\pi'}(\pi) - (\lambda C_1 + C_3) \max_s \KL(\pi'(\cdot |s) ~\Vert~ \pi(\cdot |s)) - \lambda C_2 \frac{\epsilon}{1-\epsilon}
    \end{align}
    \endgroup
    Therefore now we have 
    \begingroup
    \allowdisplaybreaks
    \begin{align}
        L_{quant}(\pi, \lambda) &= V^{\pi}(s_0) - \lambda \left( q^{\pi}_u (s_0) - d_{th} \right) \\
        &\geq L^{\pi'}(\pi) + \lambda \cdot d_{th} - (\lambda C_1 + C_3) \max_s \KL(\pi'(\cdot |s) ~\Vert~ \pi(\cdot |s)) - \lambda C_2 \frac{\epsilon}{1-\epsilon} \label{appendix:eq:thm4_1} 
    \end{align}
    \endgroup
    Note that
    \begingroup
    \allowdisplaybreaks
    \begin{align}
        L^{\pi'}_r(\pi') &= V^{\pi'}(s_0) + \Ebb_{\pi'}\left[ \sum^\infty_{t=0} \gamma^t  \Ebb_{a' \sim \pi'}\left[ A^{\pi'}_r(s_t, a') \right] \right] \\
        &= V^{\pi'}(s_0) \\
        L^{\pi'}_u(\pi') &= q^{\pi'}_u(s_0) + \Ebb_{\pi'}\left[ \sum^{\infty}_{t=0} \gamma^t \Ebb_{a' \sim \pi'}\left[ A^{\pi'}_u(s_t, a') \right] \right]\\
        &\overset{(a)}{\leq} q^{\pi'}_u(s_0) + \frac{\epsilon R}{(1-\epsilon)(1-\gamma)}
    \end{align}
    \endgroup
    where (a) holds by \eqref{appendix:eq:qpi_Lpi}. Therefore,
    \begingroup
    \allowdisplaybreaks
    \begin{align}
        L^{\pi'}(\pi') &= L^{\pi'}_r(\pi') - \lambda L^{\pi'}_u(\pi') \\
        &\geq V^{\pi'}(s_0) - \lambda \left( q^{\pi'}_u(s_0) + \frac{\epsilon R}{(1-\epsilon)(1-\gamma)} \right) \\
        &= V^{\pi'}(s_0) - \lambda \left( q^{\pi'}_u(s_0) - d_{th}  \right) - \lambda \left(d_{th} + \frac{\epsilon R}{(1-\epsilon)(1-\gamma)} \right) \\
        &= L_{quant}(\pi', \lambda) - \lambda \cdot d_{th} - \lambda \frac{\epsilon R}{(1-\epsilon)(1-\gamma)}. 
    \end{align}
    \endgroup
    By rearranging this, we get
    \begin{equation}
        - L_{quant}(\pi', \lambda) \geq - L^{\pi'}(\pi') - \lambda \cdot d_{th} - \lambda \frac{\epsilon R}{(1-\epsilon)(1-\gamma)}. \label{appendix:eq:thm4_2}
    \end{equation}
    Therefore by adding \eqref{appendix:eq:thm4_2} and \eqref{appendix:eq:thm4_1}, we can conclude
    \begingroup
    \allowdisplaybreaks
    \begin{align}
        L_{quant}(\pi, \lambda) - L_{quant}(\pi', \lambda) &\geq L^{\pi'}(\pi) - L^{\pi'}(\pi') - (\lambda C_1 + C_3) \max_s \KL(\pi'(\cdot |s) ~\Vert~ \pi(\cdot |s)) \\
        &~~~~~ - \lambda C_2 \frac{\epsilon}{1-\epsilon} - \lambda \frac{\epsilon R}{(1-\epsilon)(1-\gamma)} \\
        &= L^{\pi'}(\pi) - L^{\pi'}(\pi') - \tilde{C}_1 \max_s \KL(\pi'(\cdot |s) ~\Vert~ \pi(\cdot |s)) - \tilde{C}_2 \frac{\epsilon}{1-\epsilon}
    \end{align}
    \endgroup
    where
    \begingroup
    \allowdisplaybreaks
    \begin{align}
        \tilde{C}_1 &= \lambda C_1 + C_3 \\
        &= \lambda \left( \frac{4 \gamma \max_{s,a} \left\vert A^{\pi'}_u(s,a) \right\vert + \gamma R }{(1-\gamma)^2} + \frac{C_u}{1-\gamma} \right) + \frac{4 \gamma \max_{s, a} \left\vert A^{\pi'}_r(s,a) \right\vert}{(1-\gamma)^2} \\
        &=  \frac{4 \gamma \left( \max_{s, a} \left\vert A^{\pi'}_r(s,a) \right\vert + \lambda \max_{s,a} \left\vert A^{\pi'}_u(s,a) \right\vert \right) }{(1-\gamma)^2} + \lambda \frac{\gamma R}{(1-\gamma)^2} + \lambda \frac{C_u}{1-\gamma}\\
        \tilde{C}_2 &= \lambda C_2 + \lambda \frac{R}{1-\gamma} \\
        &= \lambda \left( \frac{R}{(1-\gamma)^2} + \frac{R}{1-\gamma} \right) \\
        &= \lambda \frac{(2-\gamma) R}{(1-\gamma)^2}
    \end{align}
    \endgroup
\end{proof}

\newpage
\section{Detailed Explanation of The Environments}\label{appendix:environments}

\begin{figure}[b]
    \centering
    \begin{subfigure}[b]{0.45\columnwidth}
        \centering
        \includegraphics[width=\textwidth]{figures/SimpleButtonEnv.png}
        \caption{SimpleButtonEnv}
        \label{appendix:fig:SimpleButtonEnv}
    \end{subfigure}
    \begin{subfigure}[b]{0.45\columnwidth}
        \centering
        \includegraphics[width=\textwidth]{figures/DynamicEnv.png}
        \caption{DynamicEnv}
        \label{appendix:fig:DynamicEnv}
    \end{subfigure}
    \begin{subfigure}[b]{0.45\columnwidth}
        \centering
        \includegraphics[width=\textwidth]{figures/GremlinEnv.png}
        \caption{GremlinEnv}
        \label{appendix:fig:GremlinEnv}
    \end{subfigure}
    \begin{subfigure}[b]{0.45\columnwidth}
        \centering
        \includegraphics[width=\textwidth]{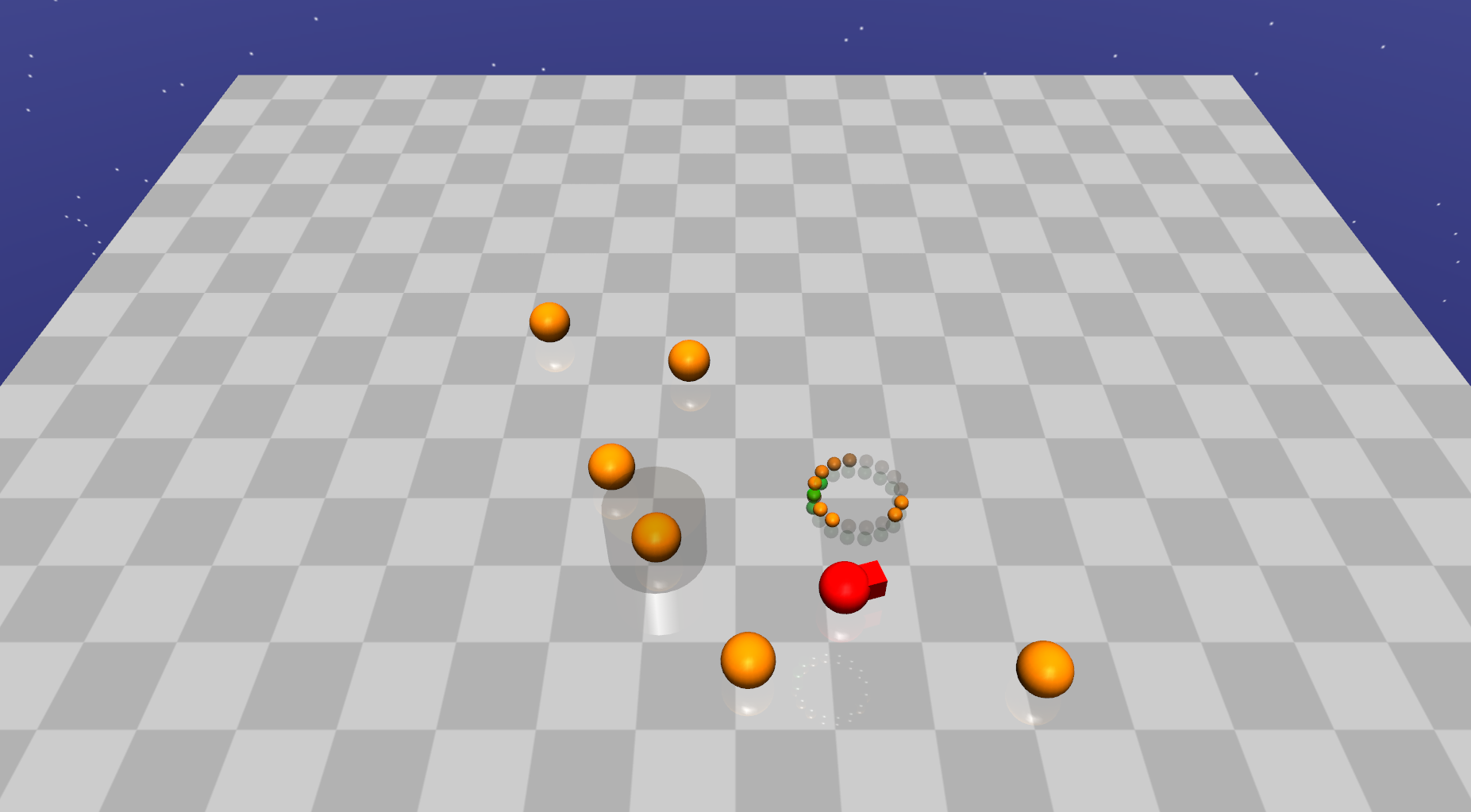}
        \caption{DynamicButtonEnv}
        \label{appendix:fig:DynamicButtonEnv}
    \end{subfigure}
    \caption{The considered environments}
    \label{appendix:fig:environments}
\end{figure}

The considered environments are SimpleButtonEnv, DynamicEnv \cite{yang2021wcsac}, GremlinEnv, and DynamicButtonEnv, which are based on Safety Gym \cite{ray2019benchmarking}, MuJoCo \cite{todorov2012mujoco}, and OpenAI Gym \cite{openaigym}. The experiments are performed on a server with Intel(R) Xeon(R) Gold 6240R CPU @2.40GHz, and each experiment takes 8 $\sim$ 10 hours. The environments are illustrated in Fig. \ref{appendix:fig:environments}. The goal of these environments is for a robot (red sphere) to reach a goal (the orange sphere wrapped by a grey translucent pillar for SimpleButtonEnv and DynamicButtonEnv, and the green pillar for DynamicEnv and GremlinEnv), while avoiding hazards (blue circles) or the non-goal button (the orange sphere). Once the robot reaches the current goal, the environments generate the next goal deterministically (SimpleButtonEnv) or randomly (DynamicEnv, GremlinEnv, DynamicButtonEnv). When the robot performs an action at time step $t$, it receives a reward $\left\{ \left\Vert p_{t+1} - p_{\text{goal}} \right\Vert_2 - \left\Vert p_{t} - p_{\text{goal}} \right\Vert_2  \right\} + 1_{\text{goal reached}}$, where $p_{t}$ is the position ($x$, $y$) of the robot at time step $t$ and $p_{\text{goal}}$ is the current goal position at time step $t$. It also receives a cost $+1$ if the robot touches non-goal objects (a hazard or the non-goal button), and $0$ otherwise. Hence, for the robot, it receives a higher return when the robot touches more goals in a maximum timesteps $T=1000$, and causes a higher sum of costs when the robot touches the other objects more often.

{\bfseries SimpleButtonEnv:} This environment consists of a robot (the red sphere), three hazards (blue pillars), a goal button (the orange sphere wrapped by a grey translucent pillar), and a non-goal button (the orange sphere). When it starts a new episode, it locates the robot randomly in in a restricted region $[x_{min}, x_{max}, y_{min}, y_{max}] = [-1.5, 1.5, -1.5, 1.5]$ and the other objects in a fixed position. When the robot reaches the current goal, it sets the next goal as the non-goal button. Thus, the objective of this environment is to touch two buttons many times iteratively in a fixed maximum timesteps.  

{\bfseries DynamicEnv:} This environment consists of a robot (the red sphere), three hazards (blue pillars), and a goal (the green pillar). When it starts a new episode, it locates these objects randomly in a restricted region $[x_{min}, x_{max}, y_{min}, y_{max}] = [-1.5, 1.5, -1.5, 1.5]$. When the robot reaches the current goal, the next goal is generated at a random position. 

{\bfseries GremlinEnv:} This environment consists of a robot (the red sphere), five hazards (blue pillars), three gremlins (purple moving cubes), and a goal (the green pillar). This is similar to DynamicEnv except the gremlins and higher complexity of the task. Each gremlin goes around in a circle, and when the agent touches a gremiln, it receives a cost.  When it starts a new episode, it locates these objects randomly in a restricted region $[x_{min}, x_{max}, y_{min}, y_{max}] = [-2, 2, -2, 2]$. When the robot reaches the current goal,  the next goal is generated at a random position.  

{\bfseries DynamicButtonEnv:} This environment consists of a robot (the red sphere), and goal button (the orange sphere wrapped by a grey translucent pillar), and five non-goal buttons (the orange sphere). When it starts a new episode, it locates these objects randomly in a restricted region $[x_{min}, x_{max}, y_{min}, y_{max}] = [-1.5, 1.5, -1.5, 1.5]$. When the robot reaches the current goal, it sets the next goal randomly among non-goal buttons. This environment is similar to DynamicEnv but the hazards are the non-goal buttons.

{\bfseries Observation Space:} The observation in these environments is  sensor values (accelerometer, velocimeter, gyro, and magnetometer) plus lidar values which measure the distance between the robot and the other objects. There are $16$ lidar sensors for each object (a goal, hazards, buttons, gremlins) and these are located around the robot. Each lidar sensor for an object measures the distance between the robot and the object located in its corresponding direction. Gathering all these sensor values, the environment gives these values to the agent as an observation at the current time. The dimensions of the observation spaces are $44$ (DynamicEnv, DynamicButtonEnv) and $60$ (SimpleButtonEnv and GremlinEnv).

\newpage
\section{More Results}\label{appendix:more_results}

\subsection{QCPO with Various Target Outage Probability $\epsilon_0$}

\begin{figure}[ht!]
    \centering
    \begin{subfigure}[b]{0.35\textwidth}
        \centering
        \includegraphics[width=\textwidth]{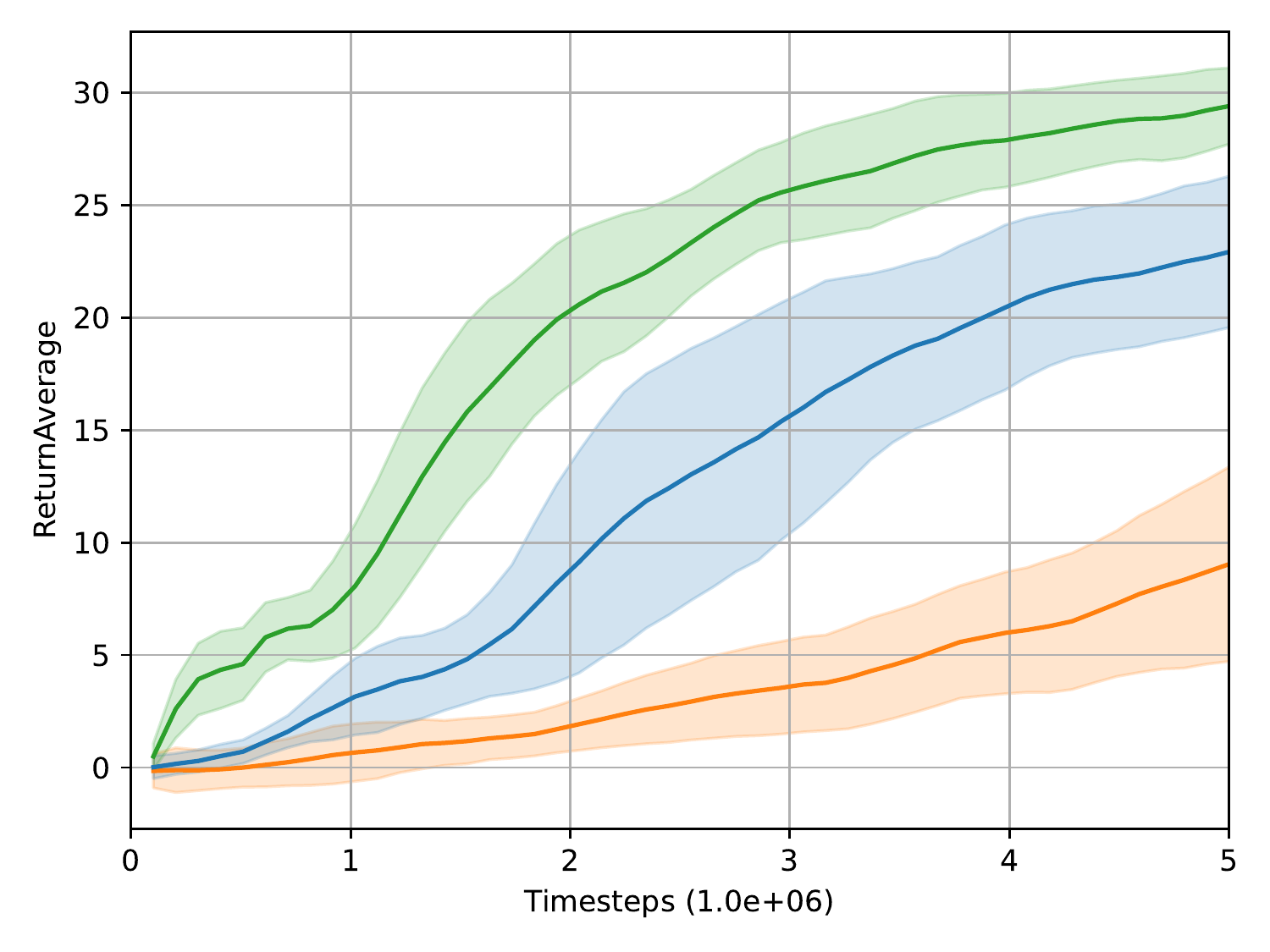}
        \caption{Average Return}
        \label{appendix:fig:SimpleButtonEnv_Return_05_02_01}
    \end{subfigure}
    \begin{subfigure}[b]{0.35\textwidth}
        \centering
        \includegraphics[width=\textwidth]{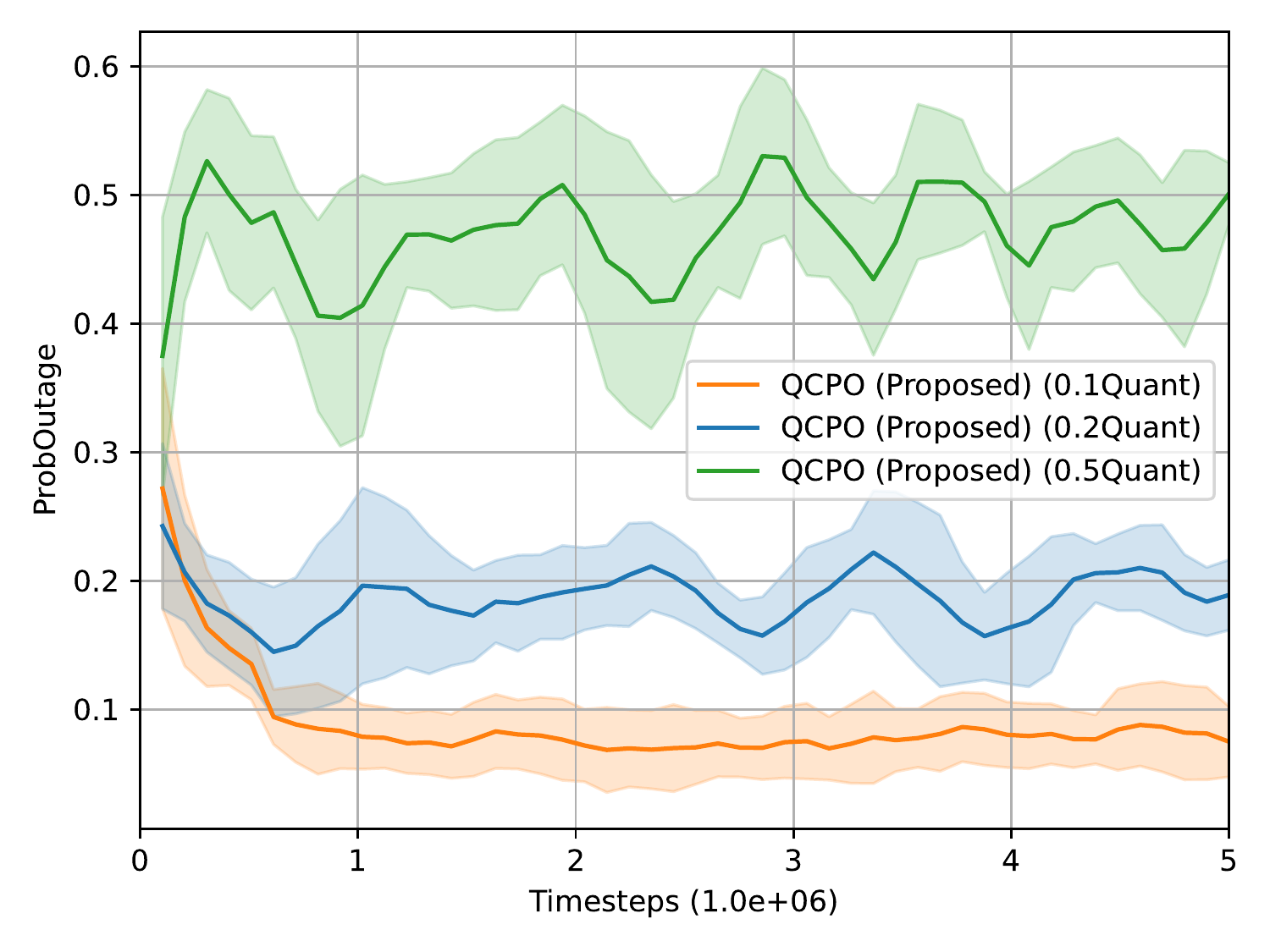}
        \caption{Outage Probability}
        \label{appendix:fig:SimpleButtonEnv_ProbOutage_05_02_01}
    \end{subfigure}
    
    \begin{subfigure}[b]{0.35\textwidth}
        \centering
        \includegraphics[width=\textwidth]{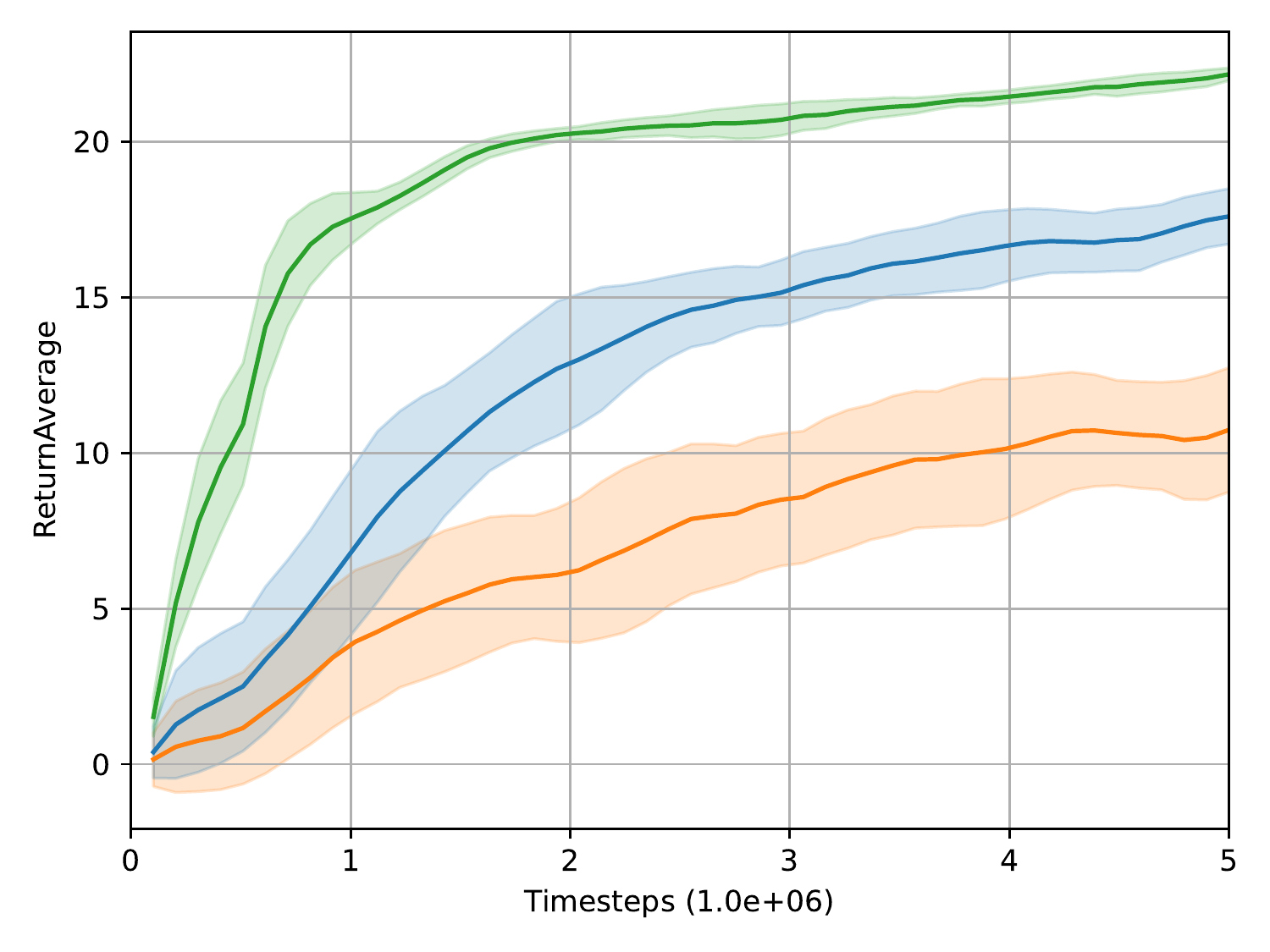}
        \caption{Average Return}
        \label{appendix:fig:DynamicEnv_Return_05_02_01}
    \end{subfigure}
    \begin{subfigure}[b]{0.35\textwidth}
        \centering
        \includegraphics[width=\textwidth]{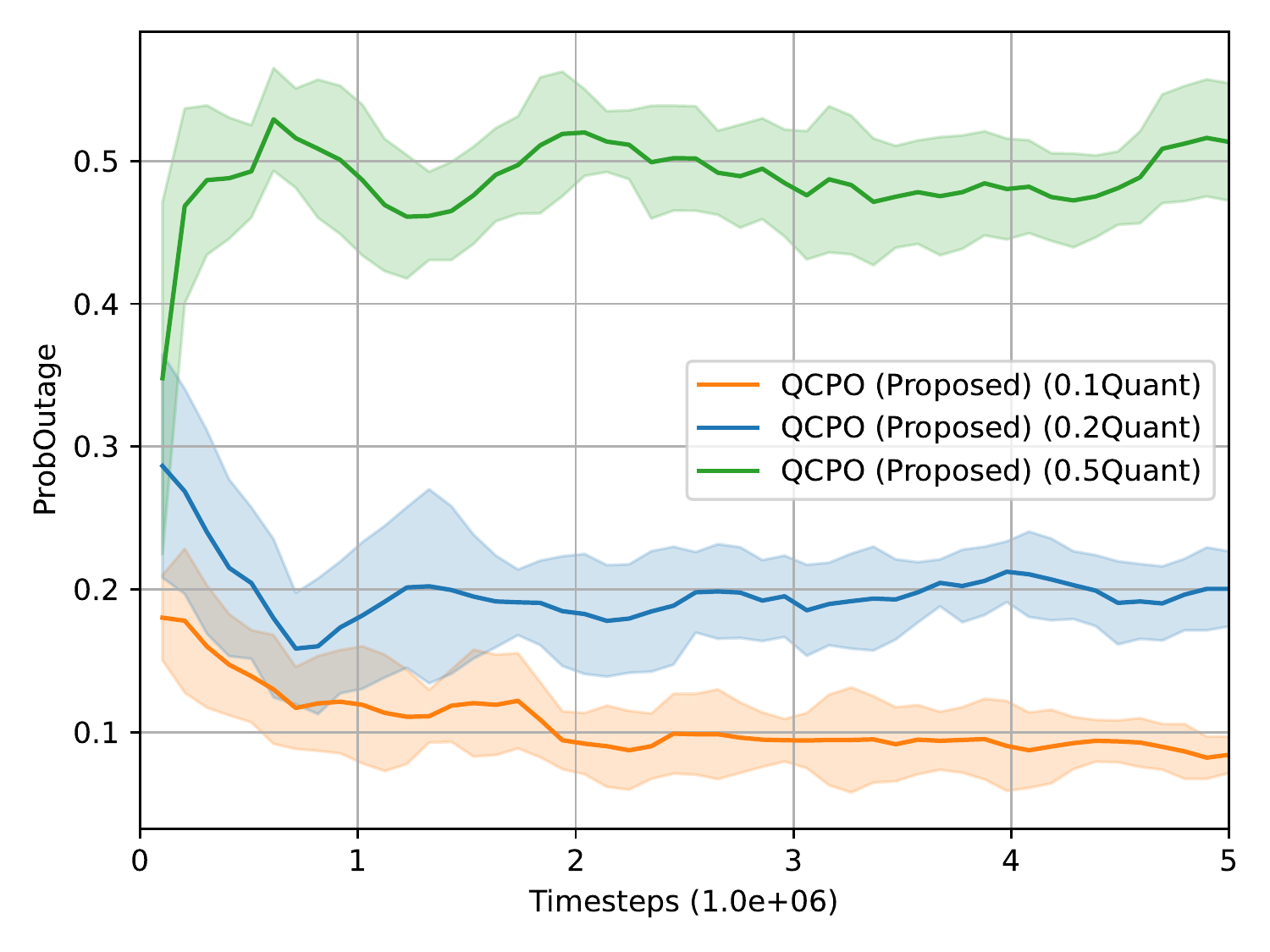}
        \caption{Outage Probability}
        \label{appendix:fig:DynamicEnv_ProbOutage_05_02_01}
    \end{subfigure}
    \caption{Results of QCPO with $\epsilon_0 = 0.5$ (green), $0.2$ (blue) and $0.1$ (orange) on SimpleButtonEnv (1st row), DynamicEnv (2nd row): (left)  average return of the most current 100 episodes and (right) outage probability of the most current 100 episodes.}
    \label{appendix:fig:result_qcpo_epsilon_05_02_01}
\end{figure}

\begin{figure}[ht!]
    \centering
    \begin{subfigure}[b]{0.35\textwidth}
        \centering
        \includegraphics[width=\textwidth]{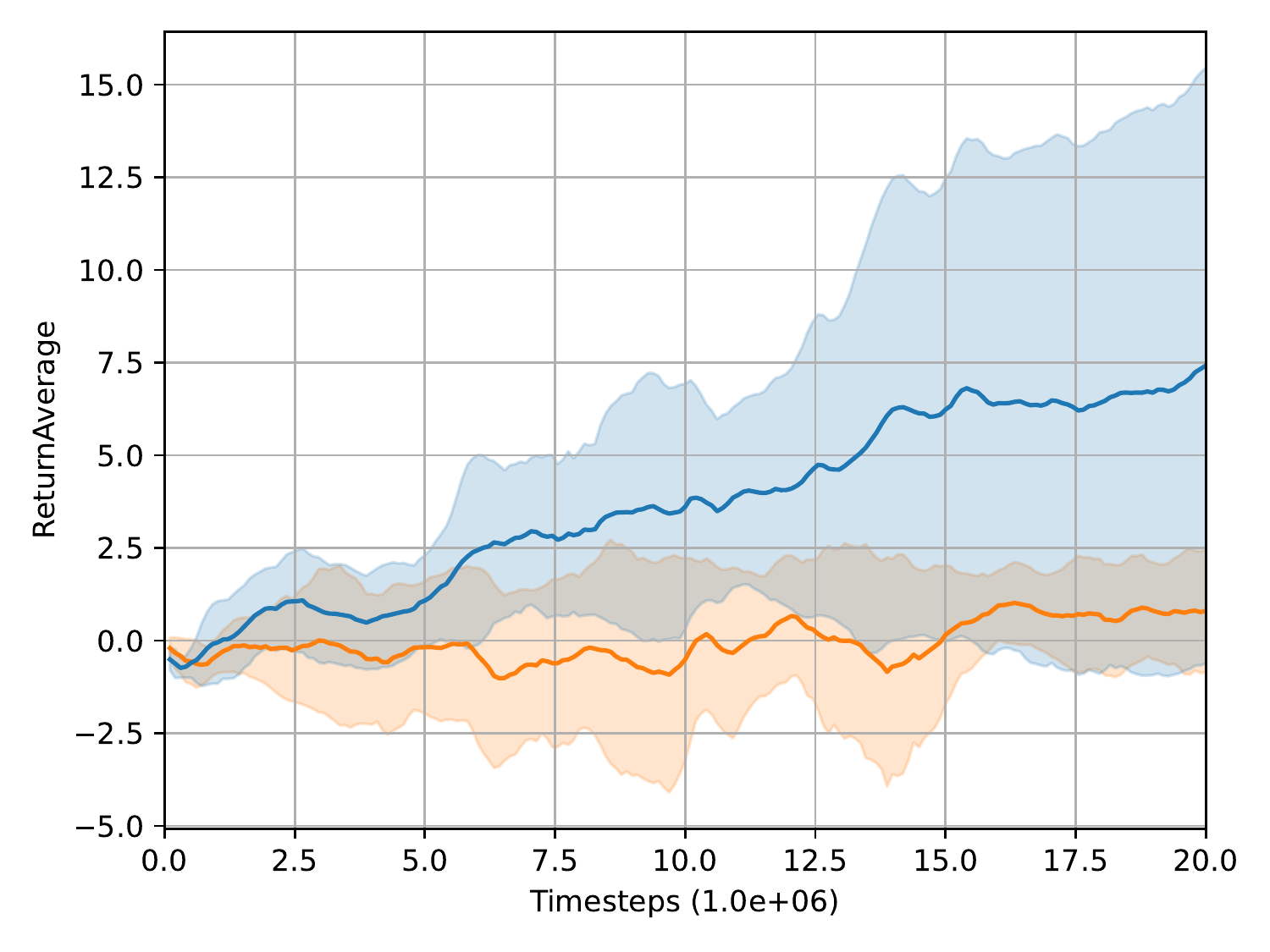}
        \caption{Average Return}
        \label{appendix:fig:SimpleButtonEnv_Return_005_002}
    \end{subfigure}
    \begin{subfigure}[b]{0.35\textwidth}
        \centering
        \includegraphics[width=\textwidth]{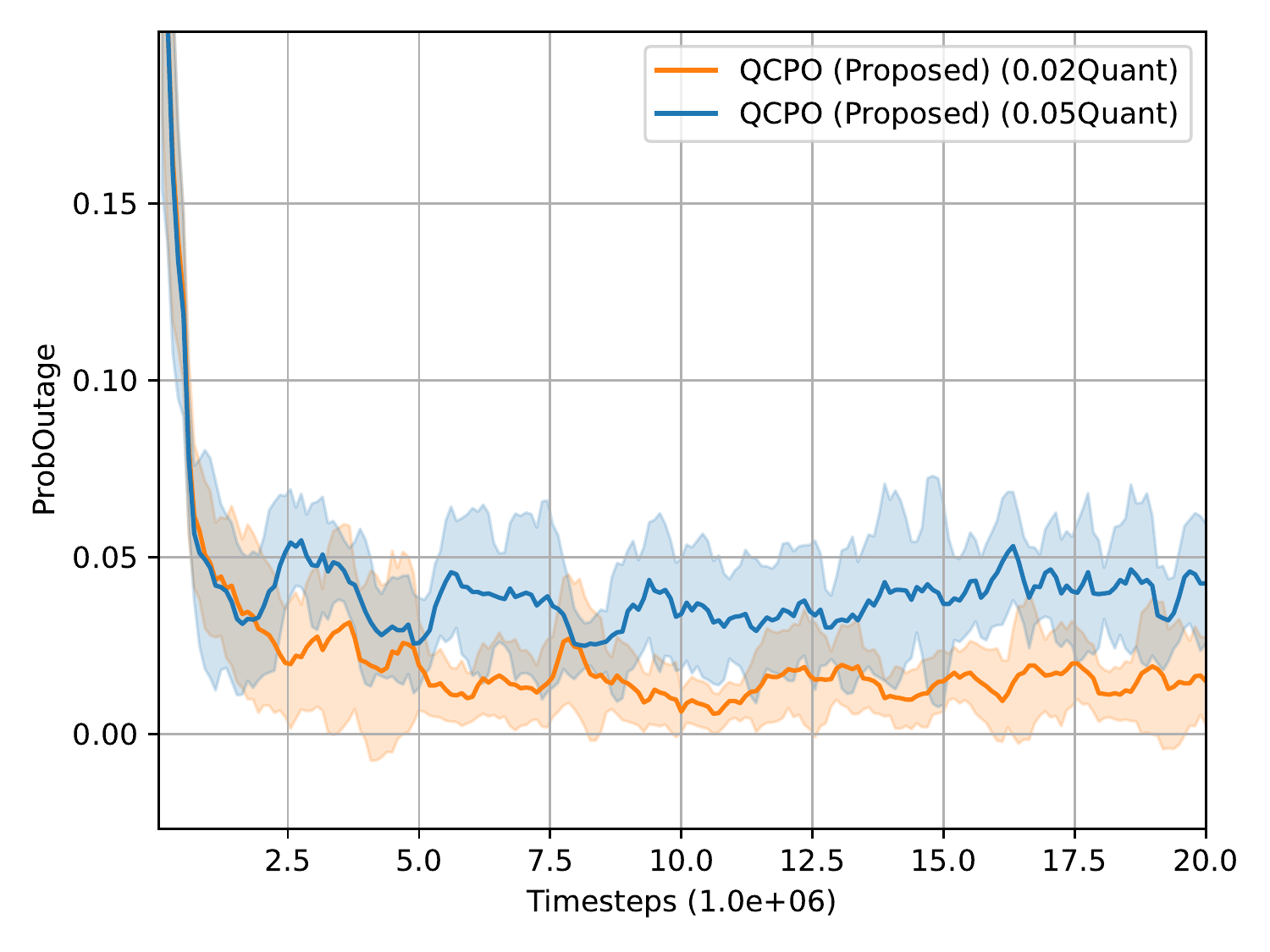}
        \caption{Outage Probability}
        \label{appendix:fig:SimpleButtonEnv_ProbOutage_005_002}
    \end{subfigure}
    
    \begin{subfigure}[b]{0.35\textwidth}
        \centering
        \includegraphics[width=\textwidth]{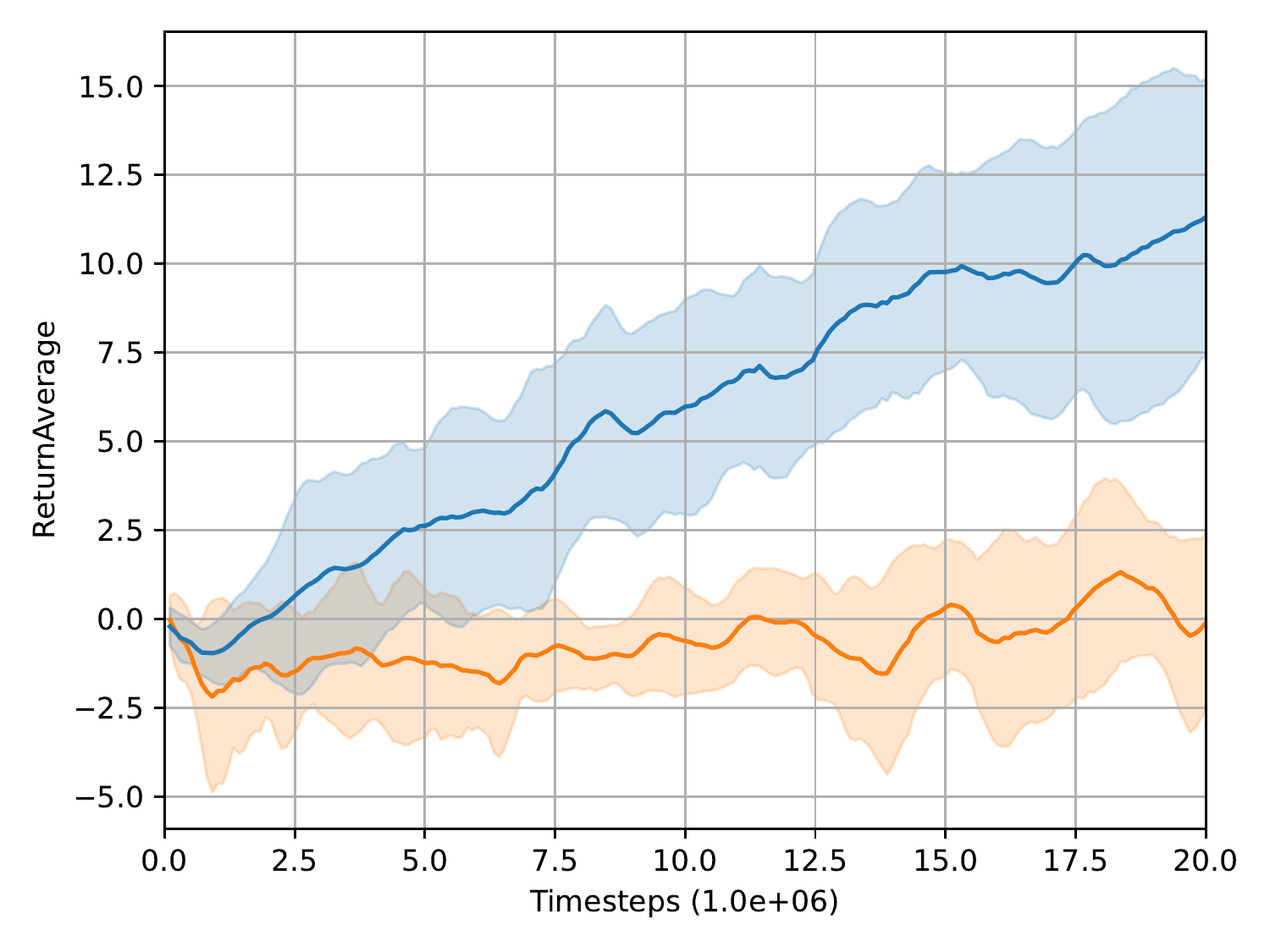}
        \caption{Average Return}
        \label{appendix:fig:DynamicEnv_Return_005_002}
    \end{subfigure}
    \begin{subfigure}[b]{0.35\textwidth}
        \centering
        \includegraphics[width=\textwidth]{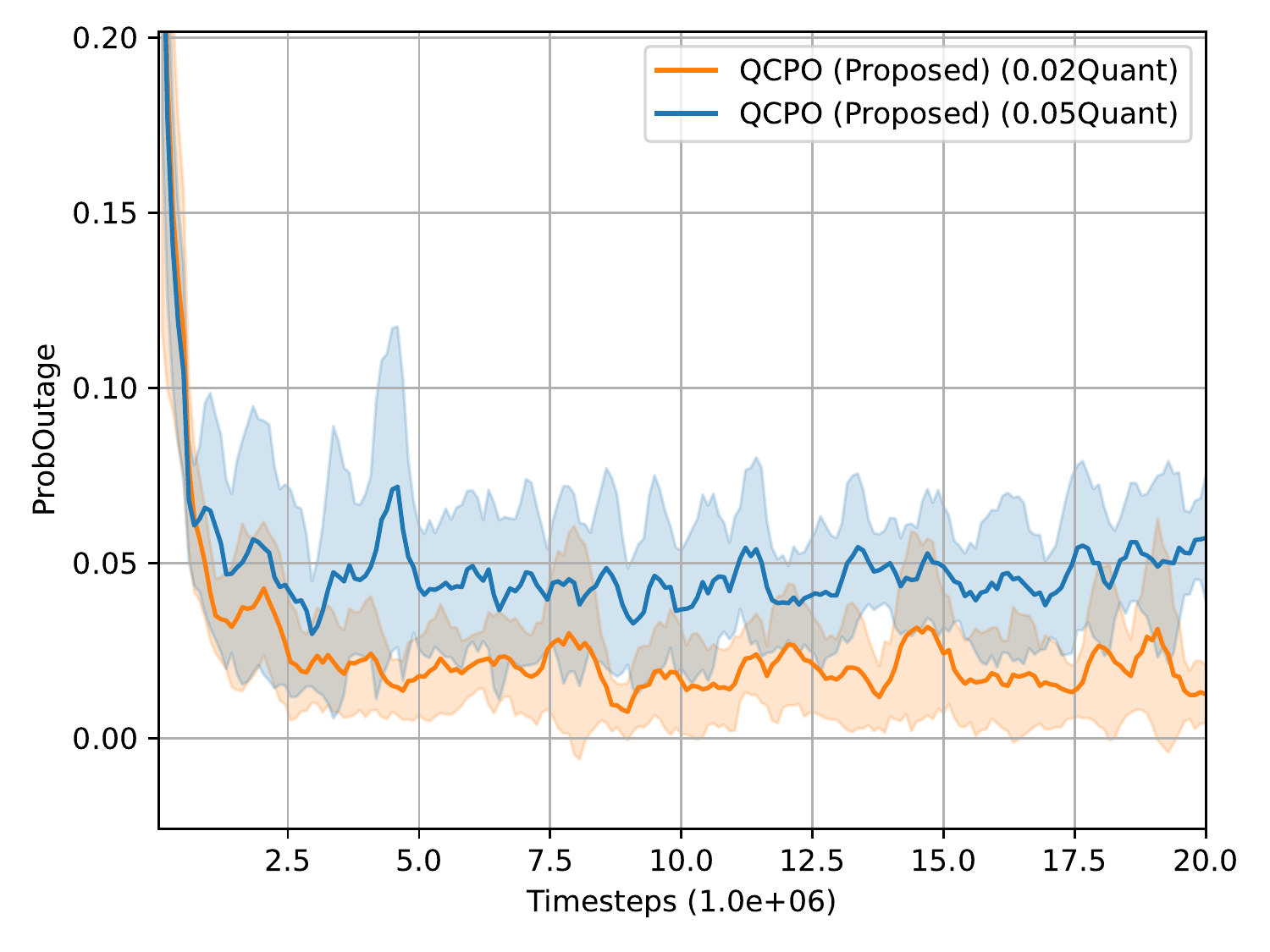}
        \caption{Outage Probability}
        \label{appendix:fig:DynamicEnv_ProbOutage_005_002}
    \end{subfigure}
    \caption{ Results of QCPO with $\epsilon_0 = 0.05$ (blue) and $0.02$ (orange) on SimpleButtonEnv (1st row) and DynamicEnv (2nd row): (left)  average return of the most current 100 episodes and (right) outage probability of the most current 100 episodes. }
    \label{appendix:fig:result_qcpo_epsilon_005_002}
\end{figure}

In this subsection, we provide results of QCPO with various target outage probabilities $\epsilon_0 = 0.5, 0.2, 0.1, 0.05$ and $0.02$. Fig. \ref{appendix:fig:result_qcpo_epsilon_05_02_01} shows the average return and the outage probability of QCPO with $\epsilon_0 = 0.5, 0.2$ and $0.1$. It is seen that QCPO satisfies the outage probability constraint after some initial time and then tries to increase the return while satisfying the outage probability constraint.
Fig. \ref{appendix:fig:result_qcpo_epsilon_005_002}  shows the average return and the outage probability of QCPO with  $\epsilon_0 = 0.05$ and $0.02$. In Fig. \ref{appendix:fig:result_qcpo_epsilon_005_002}, it is again seen that QCPO satisfies the outage probability constraint after some initial time and then tries to increase the return while satisfying the outage probability constraint.
However, it seems that more initial time steps are required than in the case of $\epsilon_0 = 0.5, 0.2$ and $0.1$ to satisfy the target outage probability.

\subsection{WCSAC with Weibull distribution approximation}

In this subsection, we provide results of QCPO, WCSAC\cite{yang2021wcsac}, and WCSAC with Weibull distribution approximation. 
In Fig. \ref{appendix:fig:WCSAC_Weibull_DynamicEnv_02_01}, it is seen that WCSAC with Weibull distribution approximation satisfies the outage probability constraint, while the original WCSAC with Gaussian distribution approximation does not. These results can imply that Weibull distribution approximation can estimate the true underlying distribution of the cumulative sum cost better than Gaussian distribution, and this is due to the limited capability of Gaussian distribution to capture the decay rate of the tail probability.

\begin{figure}[ht!]
    \centering
    \begin{subfigure}[b]{0.35\textwidth}
        \centering
        \includegraphics[width=\textwidth]{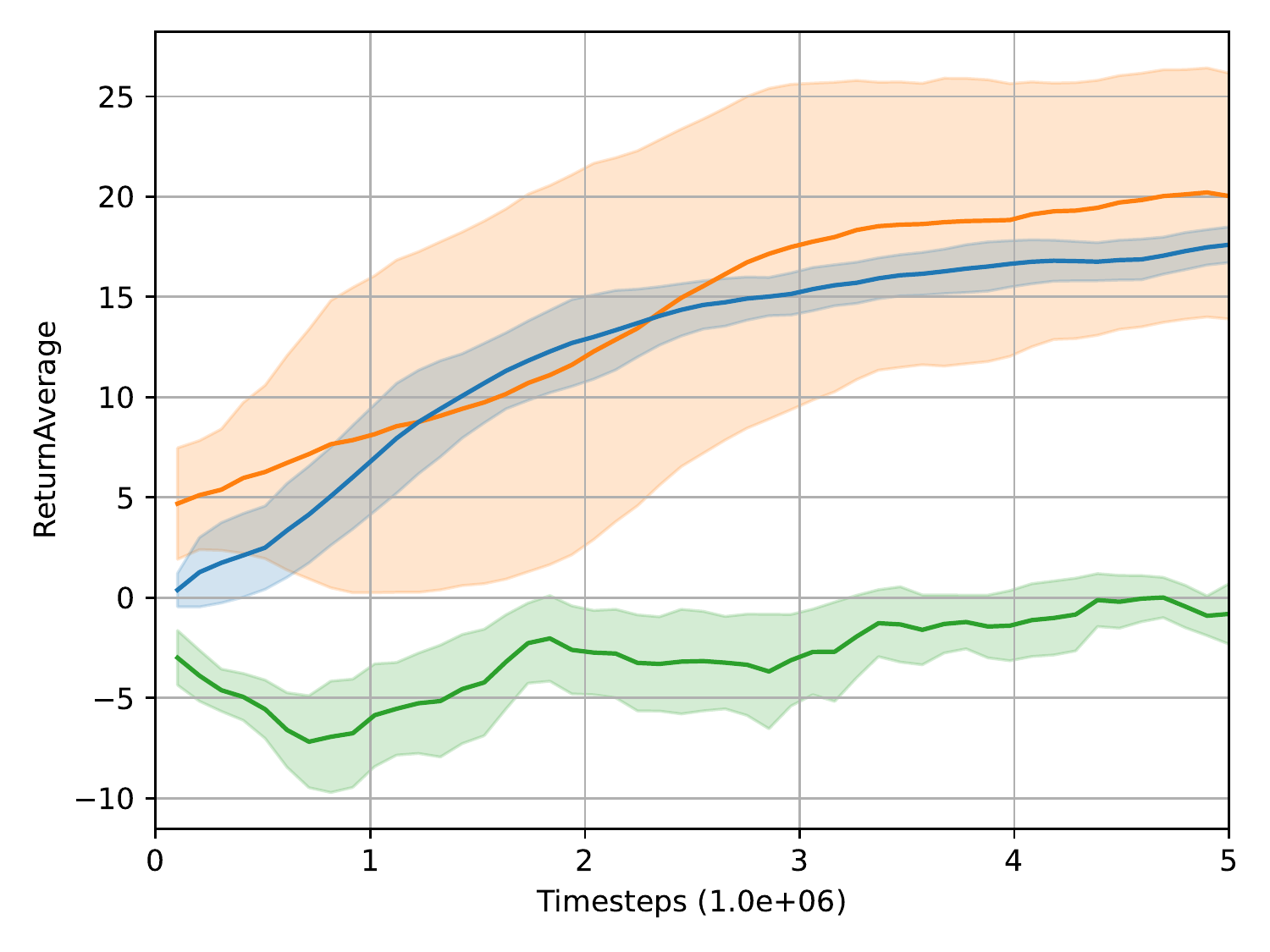}
        \caption{Average Return}
        \label{appendix:fig:WCSAC_Weibull_DynamicEnv_Return_02}
    \end{subfigure}
    \begin{subfigure}[b]{0.35\textwidth}
        \centering
        \includegraphics[width=\textwidth]{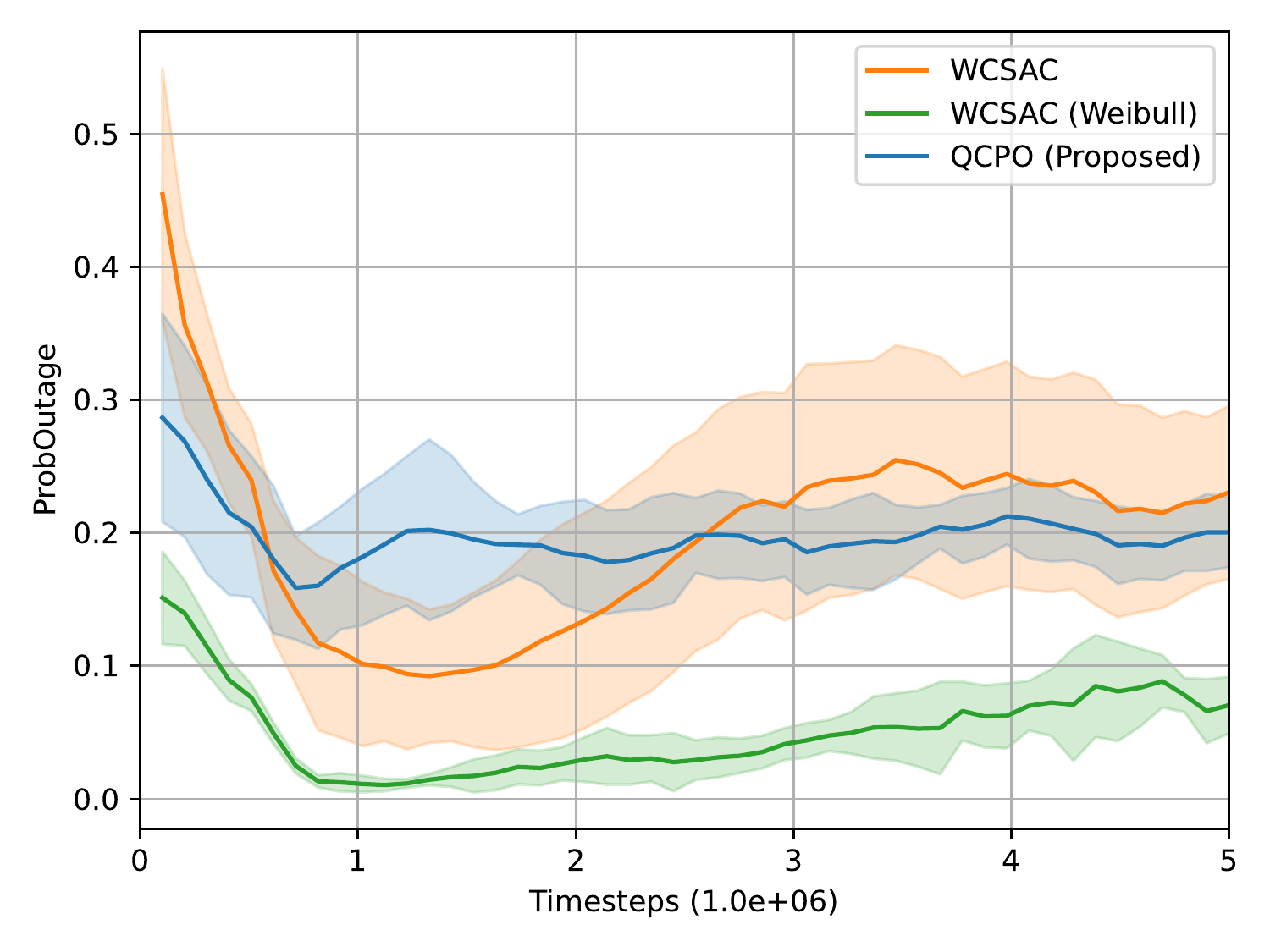}
        \caption{Outage Probability}
        \label{appendix:fig:WCSAC_Weibull_DynamicEnv_ProbOutage_02}
    \end{subfigure}
    
    \begin{subfigure}[b]{0.35\textwidth}
        \centering
        \includegraphics[width=\textwidth]{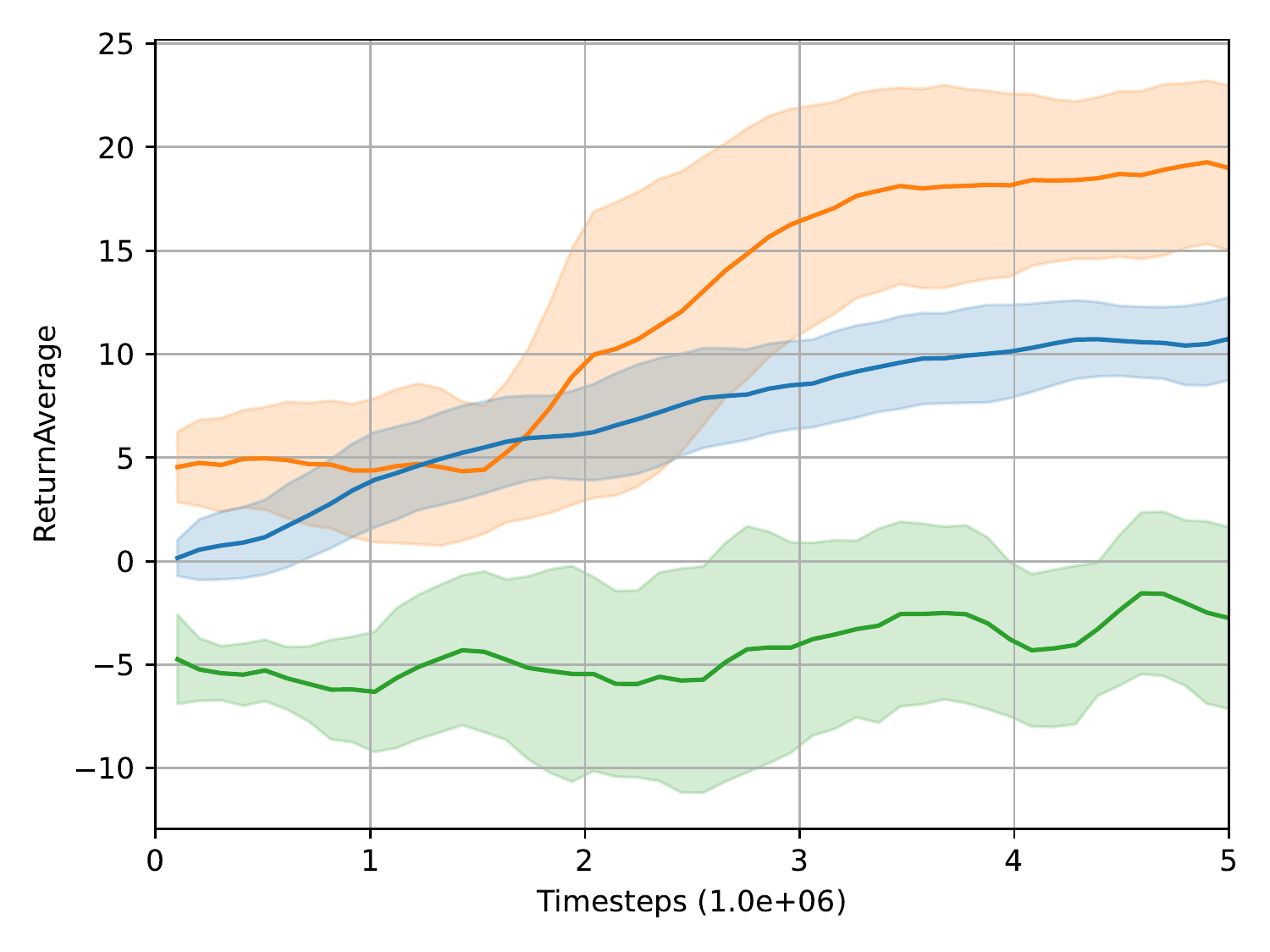}
        \caption{Average Return}
        \label{appendix:fig:WCSAC_Weibull_DynamicEnv_Return_01}
    \end{subfigure}
    \begin{subfigure}[b]{0.35\textwidth}
        \centering
        \includegraphics[width=\textwidth]{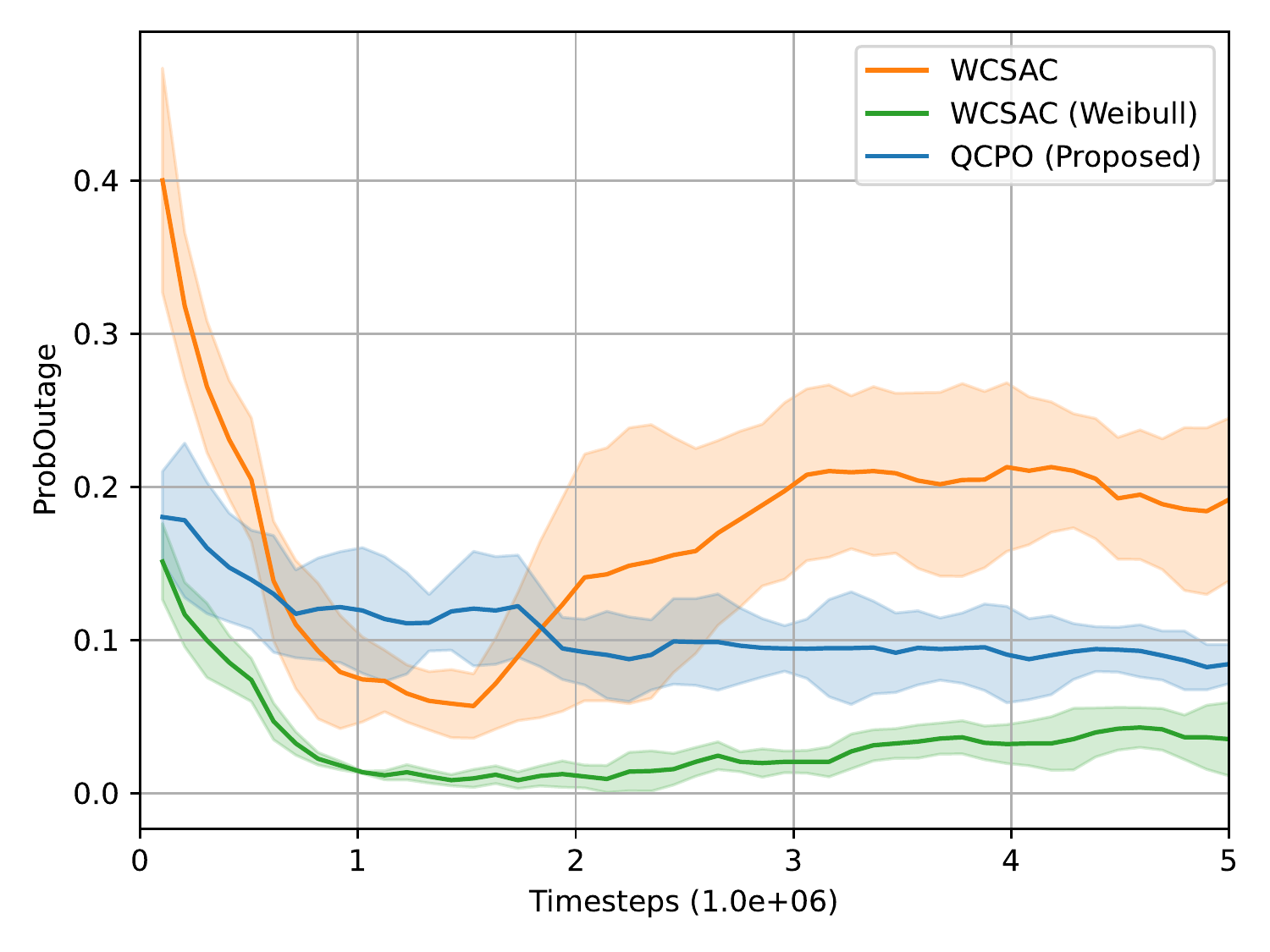}
        \caption{Outage Probability}
        \label{appendix:fig:WCSAC_Weibull_DynamicEnv_ProbOutage_01}
    \end{subfigure}
    \caption{Results of QCPO (blue), WCSAC\cite{yang2021wcsac} (orange) and WCSAC with Weibull distribution approximation (green) on DynamicEnv: (1st row) $\epsilon_0 = 0.2$, (2nd row) $\epsilon_0 = 0.1$, (left) average return of the most current 100 episodes and (right) outage probability of the most current 100 episodes.}
    \label{appendix:fig:WCSAC_Weibull_DynamicEnv_02_01}
\end{figure}

\subsection{Performance Comparison} \label{appendix:performance_comparison}

\begin{figure}[h]
    \centering
    \begin{subfigure}[b]{0.3\textwidth}
        \centering
        \includegraphics[width=\textwidth]{figures/SimpleButtonEnv_Return.pdf}
        \caption{Average Return}
        \label{appendix:fig:SimpleButtonEnv_Return}
    \end{subfigure}
    \begin{subfigure}[b]{0.3\textwidth}
        \centering
        \includegraphics[width=\textwidth]{figures/SimpleButtonEnv_ProbOutage.pdf}
        \caption{Outage Probability}
        \label{appendix:fig:SimpleButtonEnv_ProbOutage}
    \end{subfigure}
    \begin{subfigure}[b]{0.3\textwidth}
        \centering
        \includegraphics[width=\textwidth]{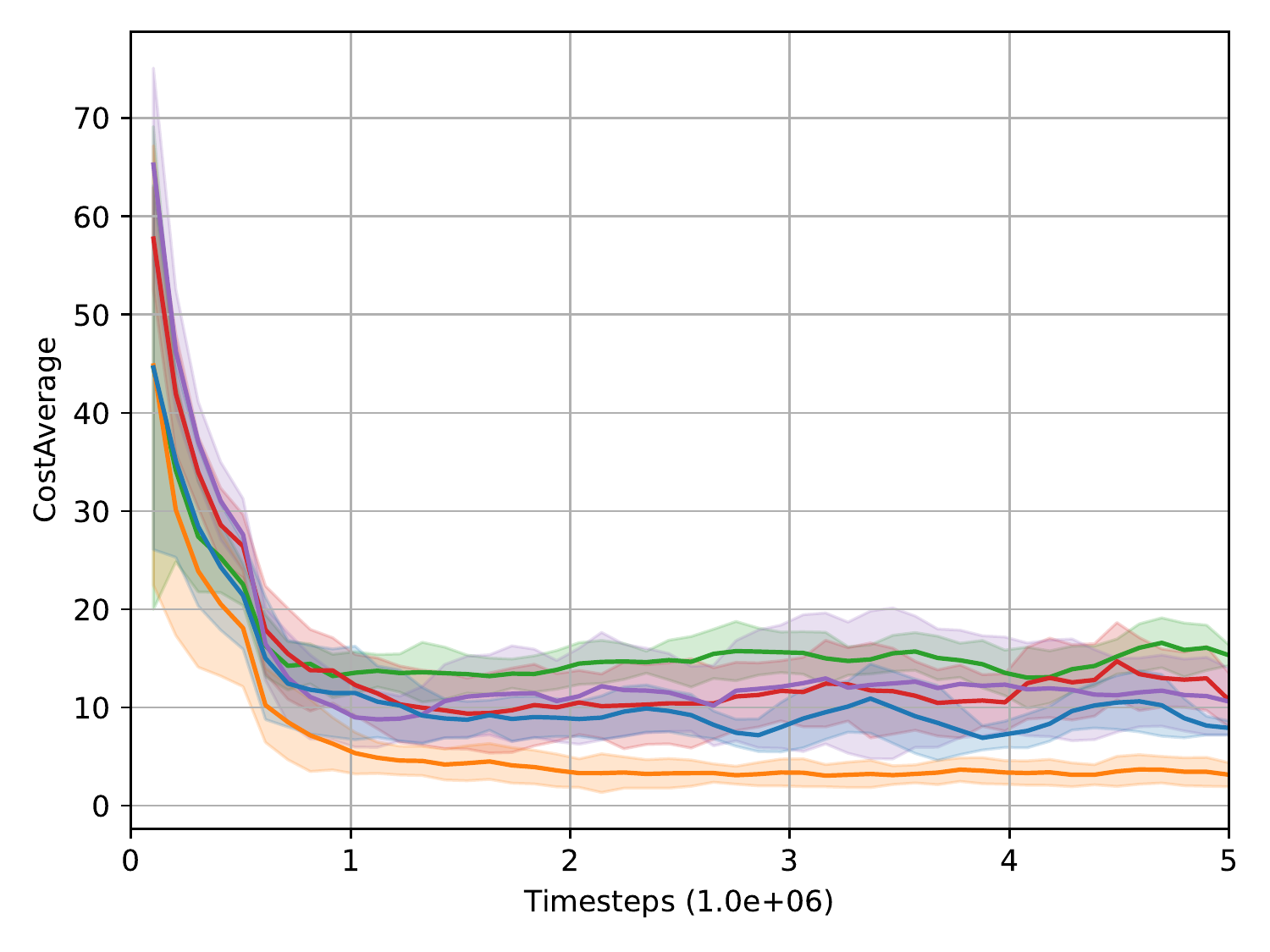}
        \caption{Average Sum Cost}
        \label{appendix:fig:SimpleButtonEnv_Cost}
    \end{subfigure}
    \begin{subfigure}[b]{0.3\textwidth}
        \centering
        \includegraphics[width=\textwidth]{figures/DynamicEnv_Return.pdf}
        \caption{Average Return}
        \label{appendix:fig:DynamicEnv_Return}
    \end{subfigure}
    \begin{subfigure}[b]{0.3\textwidth}
        \centering
        \includegraphics[width=\textwidth]{figures/DynamicEnv_ProbOutage.pdf}
        \caption{Outage Probability}
        \label{appendix:fig:DynamicEnv_ProbOutage}
    \end{subfigure}
    \begin{subfigure}[b]{0.3\textwidth}
        \centering
        \includegraphics[width=\textwidth]{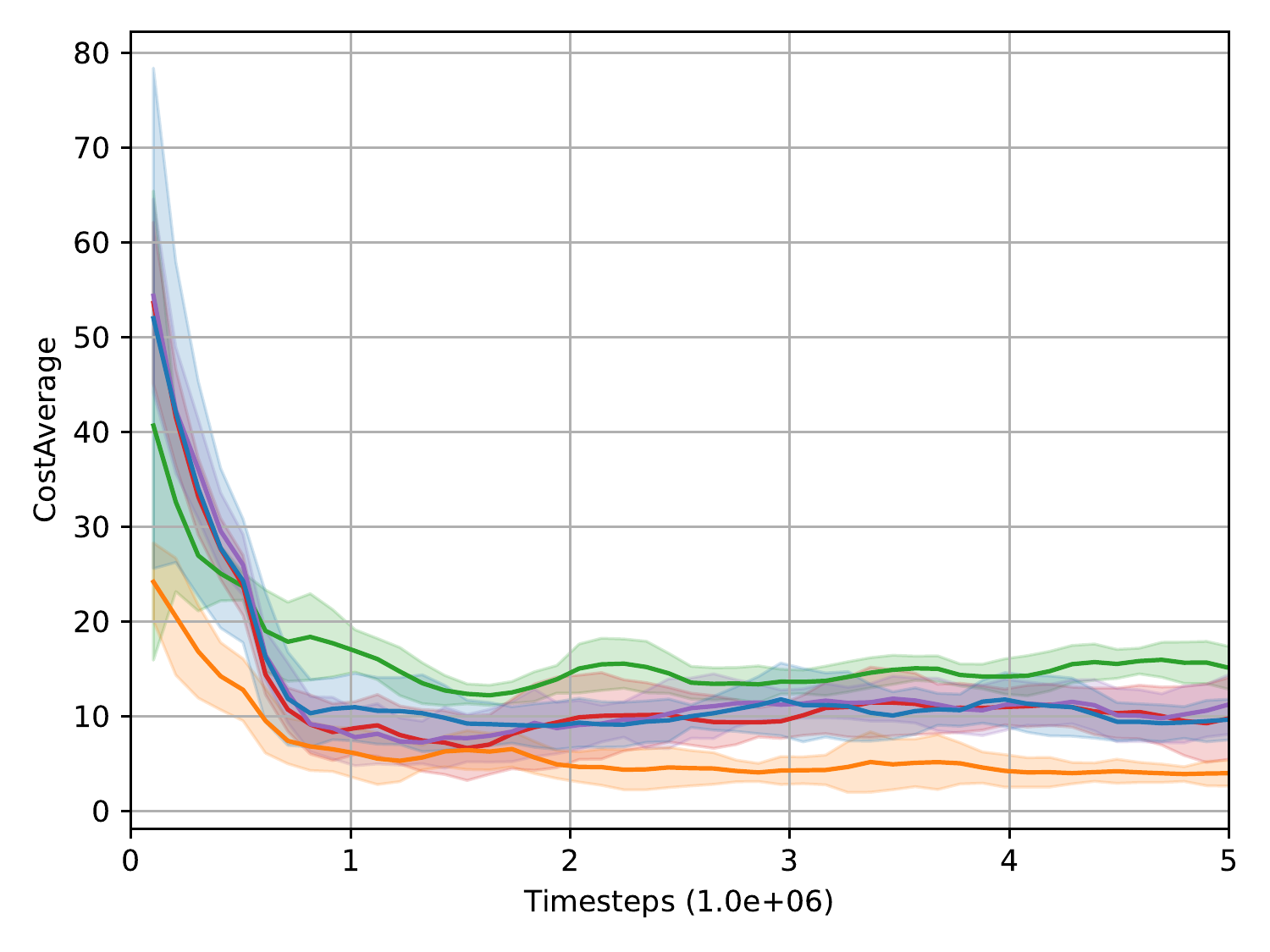}
        \caption{Average Sum Cost}
        \label{appendix:fig:DynamicEnv_Cost}
    \end{subfigure}
    \begin{subfigure}[b]{0.3\textwidth}
        \centering
        \includegraphics[width=\textwidth]{figures/GremlinEnv_Return.pdf}
        \caption{Average Return}
        \label{appendix:fig:GremlinEnv_Return}
    \end{subfigure}
    \begin{subfigure}[b]{0.3\textwidth}
        \centering
        \includegraphics[width=\textwidth]{figures/GremlinEnv_ProbOutage.pdf}
        \caption{Outage Probability}
        \label{appendix:fig:GremlinEnv_ProbOutage}
    \end{subfigure}
    \begin{subfigure}[b]{0.3\textwidth}
        \centering
        \includegraphics[width=\textwidth]{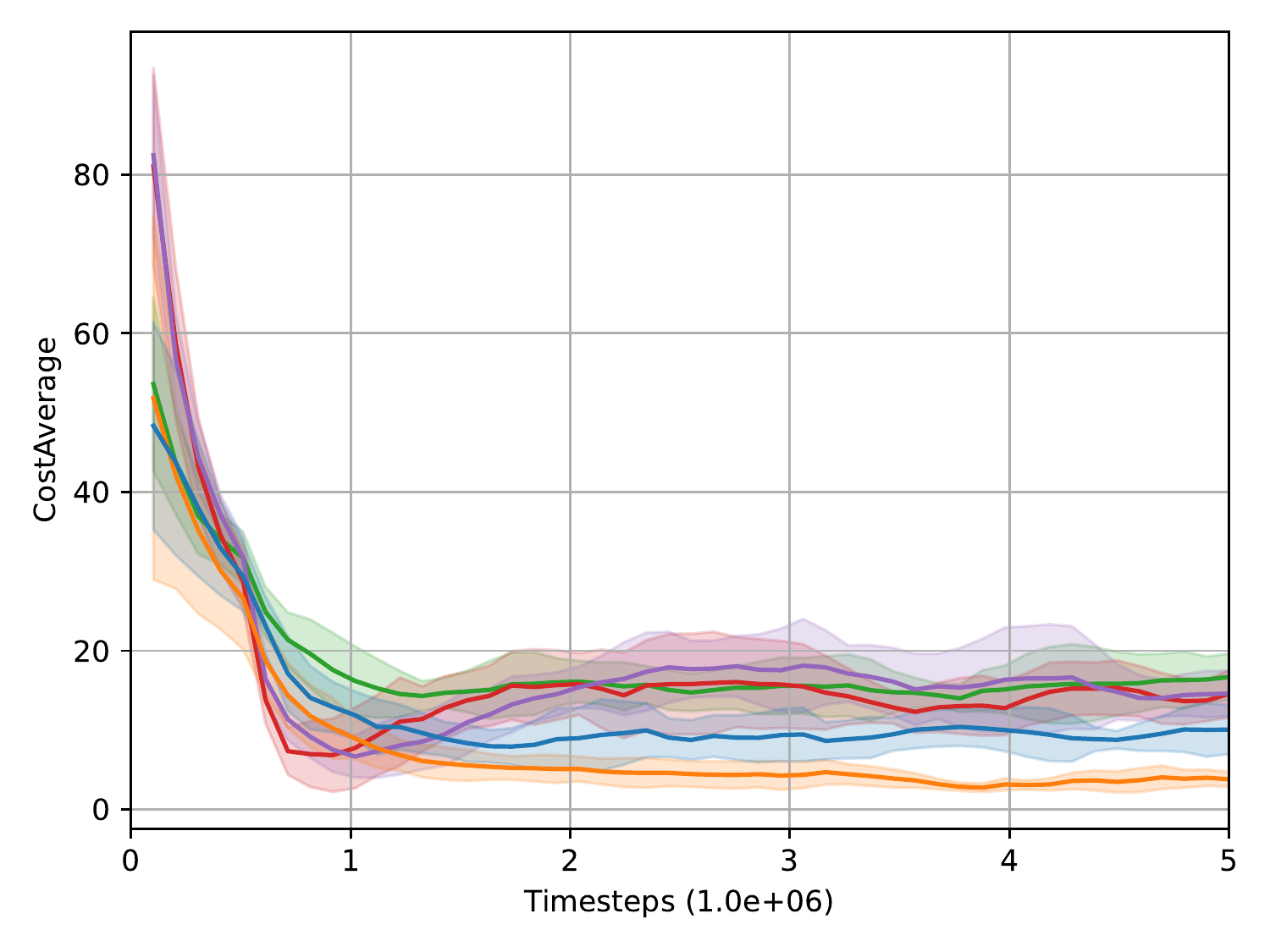}
        \caption{Average Sum Cost}
        \label{appendix:fig:GremlinEnv_Cost}
    \end{subfigure}
    \begin{subfigure}[b]{0.3\textwidth}
        \centering
        \includegraphics[width=\textwidth]{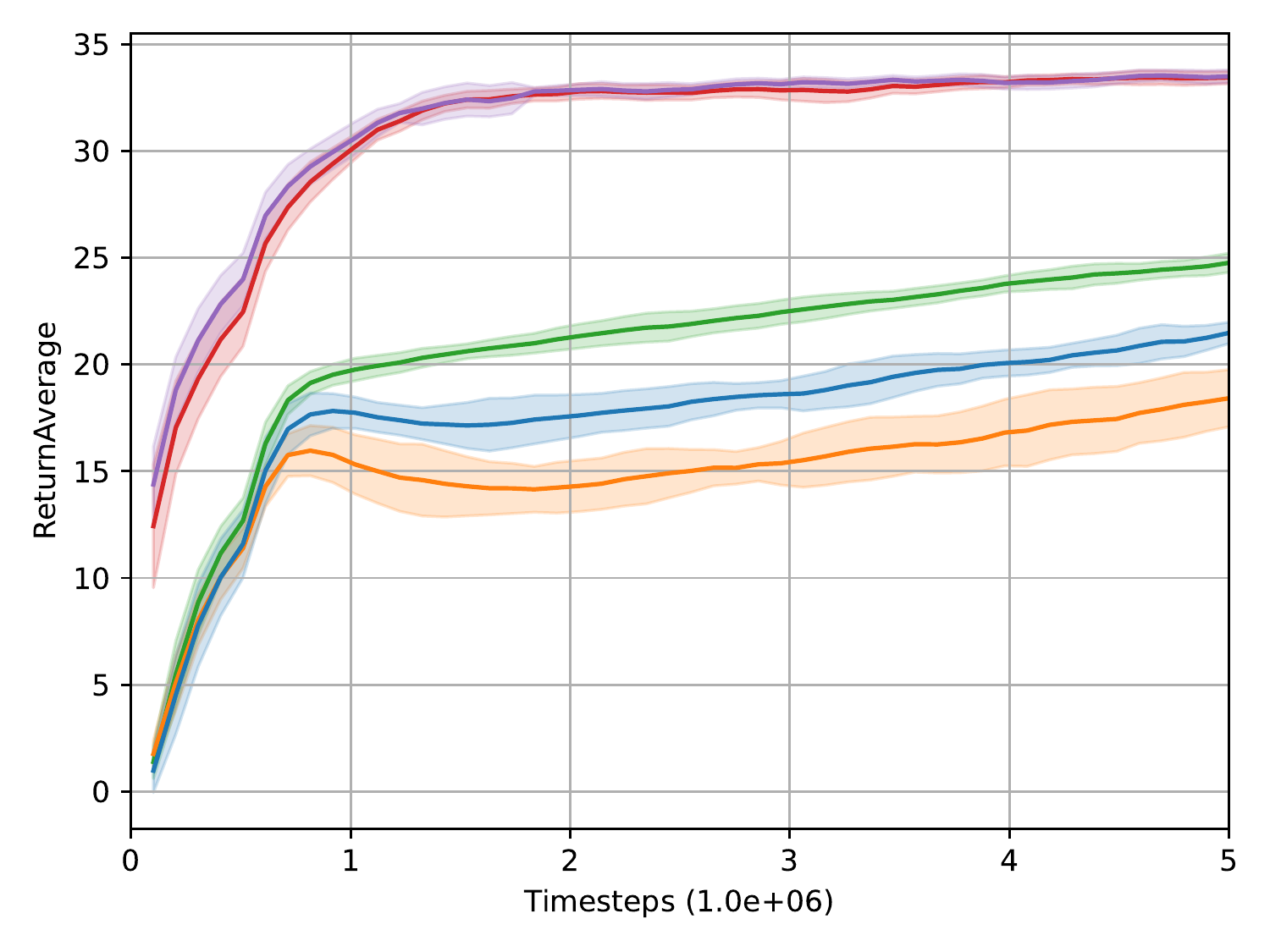}
        \caption{Average Return}
        \label{appendix:fig:DynamicButtonEnv_Return}
    \end{subfigure}
    \begin{subfigure}[b]{0.3\textwidth}
        \centering
        \includegraphics[width=\textwidth]{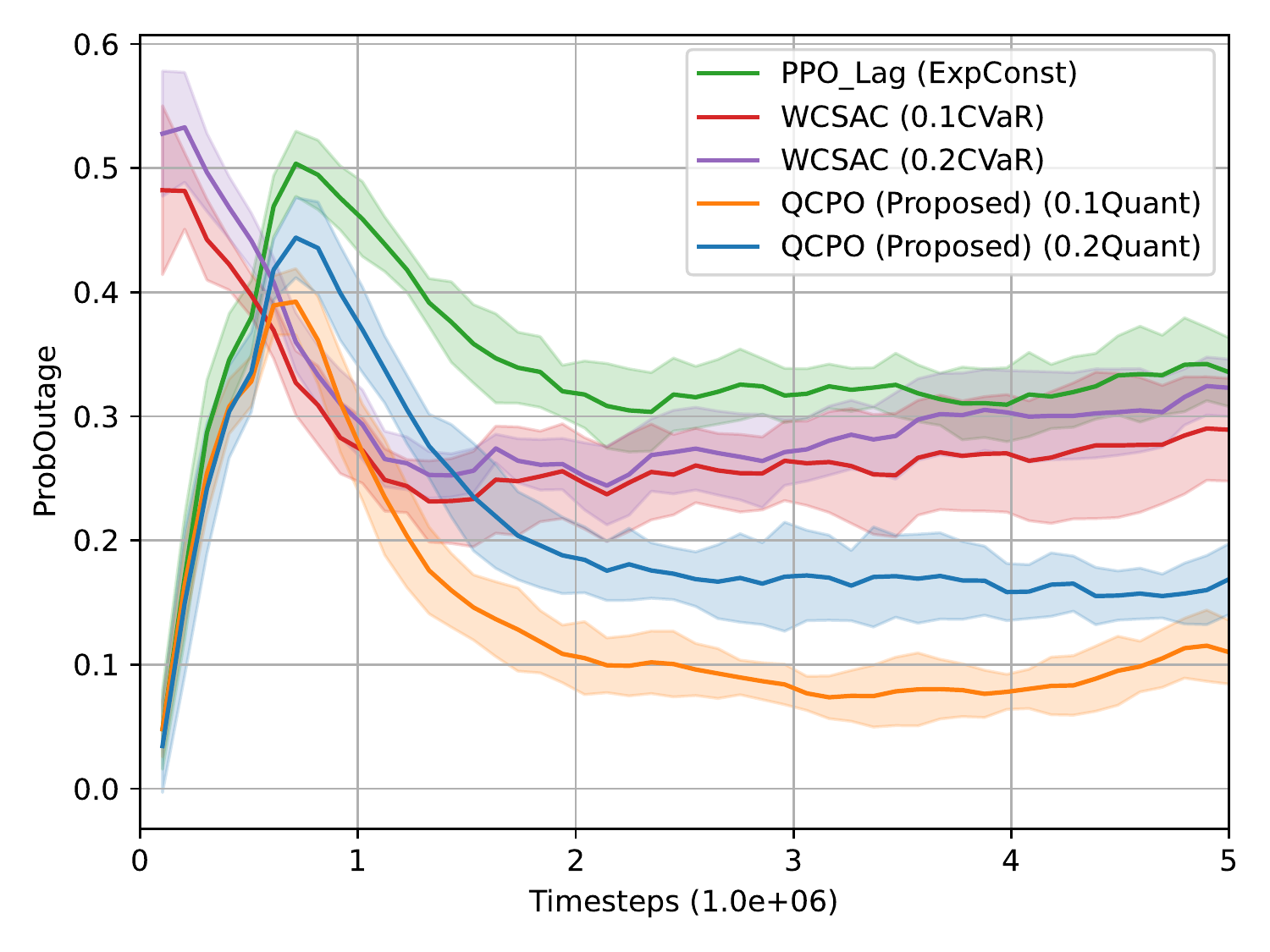}
        \caption{Outage Probability}
        \label{appendix:fig:DynamicButtonEnv_ProbOutage}
    \end{subfigure}
    \begin{subfigure}[b]{0.3\textwidth}
        \centering
        \includegraphics[width=\textwidth]{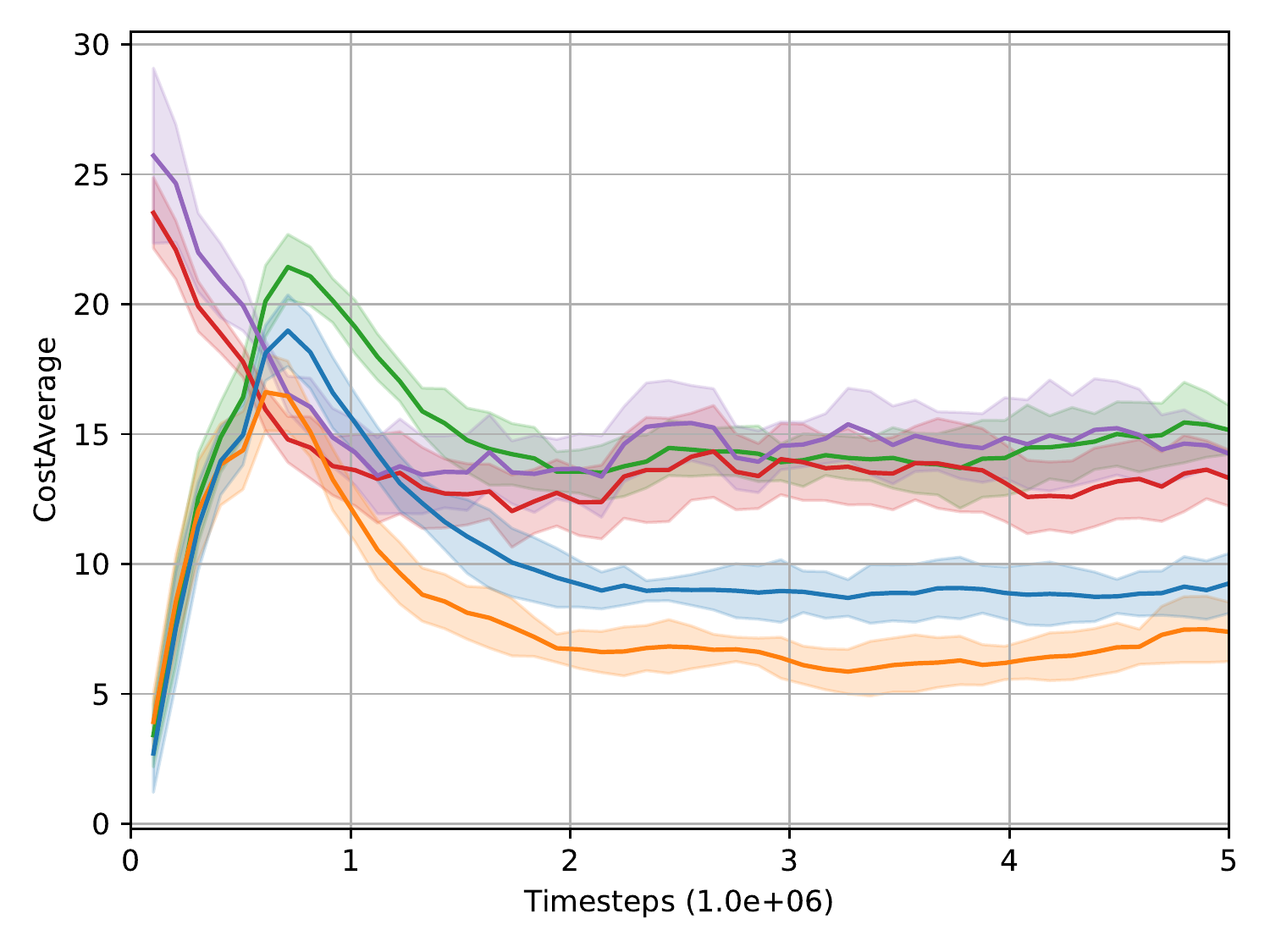}
        \caption{Average Sum Cost}
        \label{appendix:fig:DynamicButtonEnv_Cost}
    \end{subfigure}
    \caption{Results on SimpleButtonEnv (1st row), DynamicEnv (2nd row), GremlinEnv (3rd row), and DynamicButtonEnv (4th row): (1st column)  average return of the most current 100 episodes, (2nd column) outage probability of the most current 100 episodes, and (3rd column) average sum of costs of the most current 100 episodes.}
    \label{appendix:fig:result}
\end{figure}

Fig. \ref{appendix:fig:result} shows the results of the considered algorithms on SimpleButtonEnv, DynamicEnv, GremlinEnv, and DynamicButtonEnv explained in Appendix \ref{appendix:environments}. All experiments were done with 10 different random seeds, and the real line and the shaded area represent the average and average $\pm$ standard deviation, respectively. PPO with the Lagrangian multiplier method for (ExpCP) (green) keeps the average of the sum cost around the threshold $d_{th} = 15$ well (see Fig. \ref{appendix:fig:SimpleButtonEnv_Cost}, \ref{appendix:fig:DynamicEnv_Cost}, \ref{appendix:fig:GremlinEnv_Cost}, and \ref{appendix:fig:DynamicButtonEnv_Cost}), and its outage probability is around $0.35$ on SimpleButtonEnv and DynamicEnv (Fig. \ref{appendix:fig:SimpleButtonEnv_ProbOutage} and \ref{appendix:fig:DynamicEnv_ProbOutage}), and $0.3$ on GremlinEnv and DynamicButtonEnv (Fig.  \ref{appendix:fig:GremlinEnv_ProbOutage} and \ref{appendix:fig:DynamicButtonEnv_ProbOutage}). Note that the CVaR approach (WCSAC) should satisfy a sufficient condition for satisfying the outage probability constraint in (ProbCP). It is seen  that WCSAC ($\epsilon_0 = 0.2$ (purple), $\epsilon_0 = 0.1$ (red)) achieves a lower or similar outage probability to the threshold $\epsilon_0$ in Fig. \ref{appendix:fig:SimpleButtonEnv_ProbOutage}, but the algorithm does not satisfy the outage probability constraint exactly in Fig. \ref{appendix:fig:DynamicEnv_ProbOutage}, \ref{appendix:fig:GremlinEnv_ProbOutage}, and \ref{appendix:fig:DynamicButtonEnv_ProbOutage}. This means that the Gaussian distribution approximation of the distribution of $X^\pi(s)$ has limited capability to capture the decay rate of the tail probability. On the other hand, the proposed QCPO ($\epsilon_0 = 0.2$ (blue), $\epsilon_0 = 0.1$ (orange)) maintains the outage probability around the desired target outage probability very well, as shown in Fig. \ref{appendix:fig:SimpleButtonEnv_ProbOutage}, \ref{appendix:fig:DynamicEnv_ProbOutage}, \ref{appendix:fig:GremlinEnv_ProbOutage}, and \ref{appendix:fig:DynamicButtonEnv_ProbOutage}.

Now consider the average return of these algorithms. In constrained RL, in general, if an algorithm is allowed to have a higher sum of costs, then it has a higher return. Thus, as seen in Fig. \ref{appendix:fig:SimpleButtonEnv_ProbOutage}, \ref{appendix:fig:DynamicEnv_ProbOutage}, \ref{appendix:fig:GremlinEnv_ProbOutage}, and \ref{appendix:fig:DynamicButtonEnv_ProbOutage}, PPO\_Lag induces the highest outage probability, so it has the highest average return, as shown in Fig. \ref{appendix:fig:SimpleButtonEnv_Return}, \ref{appendix:fig:DynamicEnv_Return}, and \ref{appendix:fig:GremlinEnv_Return}. (For DynamicButtonEnv, WCSAC outperforms PPO\_Lag, and this is because that SAC, the base algorithm of WCSAC, is a better algorithm than PPO, the base algorithm of PPO\_Lag, on most unconstrained environments.) The direct comparison between WCSAC and QCPO is less meaningful in DynamicEnv, GremlinEnv, and DynamicButtonEnv since WCSAC does not satisfy the outage probability constraint, but it is fair in SimpleButtonEnv because both algorithms satisfy the outage probability constraint. As seen in Fig. \ref{appendix:fig:SimpleButtonEnv_Return}, QCPO achieves a higher average return than WCSAC for the same target probability constraint $\epsilon_0 = 0.1, 0.2$. This is because QCPO  satisfies the target outage probability exactly, i.e., uses the given cost budget fully for a higher return.

\newpage

\section{Implementation Details}\label{appendix:implementation_details}
The implementation of the proposed algorithm\footnote{\url{https://github.com/wyjung0625/QCPO}, (MIT License)}  is based on the implementation of \cite{stooke2020responsive}\footnote{\url{https://github.com/astooke/rlpyt/tree/master/rlpyt/projects/safe}, (MIT License)} 

\subsection{Network structures} \label{appendix:network_structures}

Since the proposed algorithm is based on PPO \cite{schulman2017proximal}, the network structure is  similar to the network structure of PPO. The networks for the value function, the quantile function, the policy, and the Weibull distribution parameters have a common shared network to extract a feature of its observation. The common network has two MLP layers of size $512$ with the tanh activation function, and an LSTM layer of size $512$ with tanh activation function. The current observation changes to its feature  through the two MLP layers, then concatenates this feature of the current observation, the previous action, the previous reward, and the previous cost to input the LSTM network. Thus, the common network outputs a feature of all previous information in the current trajectory. The output of the LSTM layer is then used as the input of the uncommon parts of the functions. The value function for reward has a linear MLP layer of size 1, and the quantile function for cost has a MLP layer of size $n_q$ (number of quantile estimates) with exponential activation $\exp(x)$. Thus, the feature computed by the common feature network goes through these MLP networks to compute its value $V^\pi(s)$ and its quantile $q^\pi_u(s)$ for $u \in \{ u_1, u_2, \ldots, u_{n_q} \}$. The policy network has a linear MLP layer of size 1, which outputs the mean parameter of Gaussian distribution, and a variable which indicates state-independent log standard deviation for Gaussian distribution. For the Weibull distribution parameters, there are two networks, one for $\alpha(s)$ and the other for $\beta(s)$, having a MLP layer of size 1. For $\alpha(s)$, the network has 4 * sigmoid activation function, and for $\beta(s)$, the network has the exponential activation function. 

\subsection{Loss Functions} \label{appendix:loss_functions}
The parameters are updated by minimizing their own loss functions. The loss function of the value parameter $\phi$ is 
\begin{equation}
    L(\phi) = \frac{1}{2} \hat{\Ebb}_{s \sim \rho^\pi}\left[ \left\Vert V_\phi(s) - R  \right\Vert^2 \right],
\end{equation}
where $R$ is a sampled return at $s$, and $\hat{\Ebb}$ is the sample mean for $s$ drawn from $\rho^\pi$. This loss function is the same as that of PPO. For the quantile function, the loss function is composed of two losses. The first one is the value parameter loss for cost,  defined as
\begin{equation}
    L_{value}(\psi) = \frac{1}{2} \hat{\Ebb}_{s \sim \rho^\pi}\left[ \left\Vert \frac{1}{n_q} \sum^{n_q}_{i=1} q_{\psi, u_i}(s) - C  \right\Vert^2 \right],
\end{equation}
where $C$ is the sampled cumulative sum cost at $s$. Note that value function for cost is computed as $C^\pi(s) := \Ebb_{\pi}\left[ \sum^\infty_{t=0} \gamma^t c(s_t, a_t) \right] = \int^1_0 q^\pi_u(s) ~du \approx \frac{1}{n_q} \sum^{n_q}_{i=1} q_{\psi, u_i}(s)$. The second loss for the quantile function is the quantile loss $l_{Huber, u_i}(x)$ (for definition, please see Appendix \ref{appendix:distributional_rl}) with the Huber loss $L_\kappa(x)$,  defined as
\begin{align*}
    L_{quant}(\psi) &= \frac{1}{n_q^2} \sum^{n_q}_{i,j=1} \hat{\Ebb}_{(s, a, s') \sim \pi} \left[ l_{Huber, u_i}(\delta_{ij}(s, a, s')) \right]\\
    \delta_{ij}(s, a, s') &= c(s, a) + \gamma q_{\psi_{old}, u_j}(s') - q_{\psi, u_i}(s),
\end{align*}
where $(s, a, s') \sim \pi$ means that $s \sim \rho^\pi(\cdot)$, $a \sim \pi(\cdot | s)$, and $s' \sim M(\cdot | s, a)$, $\psi_{old}$ is a copied parameter of $\psi$ which does not update when $\psi$ updates. Thus, the loss function for the quantile function parameter $\psi$ is given by
\begin{equation}
    L(\psi) = L_{value}(\psi) + L_{quant}(\psi).
\end{equation}
The parameters $\xi$ and $\zeta$ for estimating the Weibull distribution parameters $\alpha_\xi(s)$ and $\beta_\zeta(s)$ at state $s$ are updated by minimizing the following loss function:
\begin{equation}
    L(\xi, \zeta) = \hat{\Ebb}_{s \sim \rho^\pi} \left[ \frac{1}{k} \sum^{n_q}_{i=n_q-k+1} \frac{1}{2} \left\Vert \log{\beta_\zeta(s)} + \frac{\log{c_{u_i}}}{\alpha_\xi(s)} - \log{q_{\psi, u_i}(s)} \right\Vert^2  \right], \label{appendix:eq:weibull_tail_loss}
\end{equation}
where $c_u = -\log{(1-u)}$. Note that the $u$-quantile of Weibull distribution with parameters $\alpha$ and $\beta$ is $\beta \cdot (c_u)^{1/\alpha}$. Thus, \eqref{appendix:eq:weibull_tail_loss} is the mean square error of log-scale of the $u$-quantile for $u \in \{ u_{n_q-k+1}, \ldots, u_{n_q} \}$.

\subsection{Policy Loss Function} \label{appendix:policy_loss}

As aforementioned in Section 4, the basic policy loss function of QCPO for a given Lagrange multiplier $\lambda$ is  
\begin{align}
    L^{\pi_{old}}(\pi_\theta) - \tilde{C}_1 \max_s \KL(\pi_{old}(\cdot |s) ~\Vert~ \pi_{\theta}(\cdot |s)) 
\end{align}
where 
\begingroup
\allowdisplaybreaks
\begin{align}
    L^{\pi_{old}}(\pi_\theta)  &= \left( V^{\pi_{old}}(s_0) - \lambda q^{\pi_{old}}_{1-\epsilon_0}(s_0) \right) + \Ebb_{\pi_{old}}\left[ \sum^\infty_{t=0} \gamma^t  \Ebb_{a \sim \pi_{\theta}}\left[  A^{\pi_{old}}_r(s_t, a) - \lambda A^{\pi_{old}}_{1-\epsilon_0}(s_t, a) \right] \right] \nonumber \\
    &= \left( V^{\pi_{old}}(s_0) - \lambda q^{\pi_{old}}_{1-\epsilon_0}(s_0) \right) + \Ebb_{s \sim \rho^{\pi_{old}}, a \sim \pi_\theta}\left[  A^{\pi_{old}}_r(s_t, a) - \lambda A^{\pi_{old}}_{1-\epsilon_0}(s_t, a) \right] \\
    A^{\pi_{old}}_r (s, a) &= r(s,a) + \gamma \Ebb_{s' \sim M(\cdot | s, a)}\left[  V^{\pi_{old}}(s') \right] - V^{\pi_{old}}(s) \\
    A^{\pi_{old}}_{1-\epsilon_0}(s, a) &= c(s,a) + \tilde{c}^{\pi_{old}}_{1-\epsilon_0}(s, a) + \gamma \Ebb_{s' \sim M(\cdot | s, a)} \left[  q^{\pi_{old}}_{1-\epsilon_0}(s') \right] - q^{\pi_{old}}_{1-\epsilon_0}(s), \label{appendix:eq:advantage_quantile}
\end{align}
\endgroup

Note that
\begingroup
\allowdisplaybreaks
\begin{align}
    \Ebb_{\pi_{old}}\left[ A^{\pi_{old}}_{1-\epsilon_0}(s, a) \right] &= \Ebb_{\pi_{old}}\left[ c(s,a) + \tilde{c}^{\pi_{old}}_{1-\epsilon_0}(s, a) + \gamma q^{\pi_{old}}_{1-\epsilon_0}(s') - q^{\pi_{old}}_{1-\epsilon_0}(s) \right] \\
    &= \Ebb_{\pi_{old}}\left[ \frac{p_{X^{\pi_{old}}(s')}\left( \frac{q^{\pi_{old}}_{1-\epsilon_0}(s) - c(s, a)}{\gamma} \right)}{\gamma \cdot p_{X^{\pi_{old}}(s)}\left( q^{\pi_{old}}_{1-\epsilon_0}(s) \right)} \left\{ c(s,a) + \gamma q^{\pi_{old}}_{1-\epsilon_0}(s') - q^{\pi_{old}}_{1-\epsilon_0}(s) \right\} \right]
\end{align}
\endgroup
since
\begingroup
\allowdisplaybreaks
\begin{align}
    \tilde{c}^{\pi_{old}}_u(s, a) &= \left( \frac{p_{X^{\pi_{old}}(s')}\left( \frac{q^{\pi_{old}}_{1-\epsilon_0}(s) - c(s, a)}{\gamma} \right)}{\gamma \cdot p_{X^{\pi_{old}}(s)}\left( q^{\pi_{old}}_{1-\epsilon_0}(s) \right)} - 1 \right) \left\{ c(s,a) + \gamma q^\pi_{1-\epsilon_0}(s') \right\} \\
    1 &= \Ebb_{\pi_{old}} \left[ \frac{p_{X^{\pi_{old}}(s')}\left( \frac{q^{\pi_{old}}_{1-\epsilon_0}(s) - c(s, a)}{\gamma} \right)}{\gamma \cdot p_{X^{\pi_{old}}(s)}\left( q^{\pi_{old}}_{1-\epsilon_0}(s) \right)} \right] 
\end{align}
\endgroup
Therefore we use $\frac{p_{X^{\pi_{old}}(s')}\left( \frac{q^{\pi_{old}}_{1-\epsilon_0}(s) - c(s, a)}{\gamma} \right)}{\gamma \cdot p_{X^{\pi_{old}}(s)}\left( q^{\pi_{old}}_{1-\epsilon_0}(s) \right)} \left\{ c(s,a) + \gamma q^{\pi_{old}}_{1-\epsilon_0}(s') - q^{\pi_{old}}_{1-\epsilon_0}(s) \right\}$ as the advantage for the $(1-\epsilon_0)$-quantile.
\begin{equation}
    A^{\pi_{old}}_{1-\epsilon_0}(s, a) = \frac{p_{X^{\pi_{old}}(s')}\left( \frac{q^{\pi_{old}}_{1-\epsilon_0}(s) - c(s, a)}{\gamma} \right)}{\gamma \cdot p_{X^{\pi_{old}}(s)}\left( q^{\pi_{old}}_{1-\epsilon_0}(s) \right)} \left\{ c(s,a) + \gamma q^{\pi_{old}}_{1-\epsilon_0}(s') - q^{\pi_{old}}_{1-\epsilon_0}(s) \right\}
\end{equation}

Finally QCPO is based on PPO\cite{schulman2017proximal}, the actual policy loss function is as follows:
\begin{equation}
    L(\theta) = - \hat{\Ebb}_{\pi_{\theta_{old}}} \left[ \min \left\{ \hat{A}_1(s, a, s'), \hat{A}_2(s,a,s') \right\} \right]
\end{equation}
where
\begingroup
\allowdisplaybreaks
\begin{align}
    \hat{A}_1(s,a, s') &= \text{clip}\left( \frac{\pi_\theta(a|s)}{\pi_{\theta_{old}}(a|s)}, 1 - r_{clip}, 1 + r_{clip} \right) \times \hat{A} (s, a, s') \label{appendix:eq:clip_importance_ratio_advantage}\\
    \hat{A}_2(s,a, s') &= \frac{\pi_\theta(a|s)}{\pi_{\theta_{old}}(a|s)} \hat{A} (s, a, s') \\
    \hat{A}(s,a, s') &:= \hat{A}_r(s, a, s') - \lambda \hat{A}_{q, 1 - \epsilon}(s, a, s') \label{appendix:eq:advantage_total}\\
    \hat{A}_r(s,a, s') &:= r(s,a) + \gamma V_{\phi_{old}}(s') - V_{\phi_{old}}(s) \\
    \hat{A}_{1-\epsilon_0}(s, a, s') &= \frac{p_{\alpha_{old}(s'), \beta_{old}(s')}\left( \frac{q_{\psi_{old}, 1-\epsilon_0}(s) - c(s, a)}{\gamma} \right)}{\gamma \cdot p_{\alpha_{old}(s'), \beta_{old}(s')}\left( q_{\psi_{old}, 1-\epsilon_0}(s) \right)} \left\{ c(s,a) + \gamma q_{\psi_{old}, 1-\epsilon_0}(s') - q_{\psi_{old}, 1-\epsilon_0}(s) \right\} \label{appendix:eq:advantage_quantile_approx1}
\end{align}
\endgroup
Here,
\begin{align}
    p_{\alpha_{old}(s'), \beta_{old}(s')}(x) &= \frac{\alpha_{\xi_{old}}(s')}{\beta_{\zeta_{old}}(s')} \left( \frac{x}{\beta_{\zeta_{old}}(s')} \right)^{\alpha_{\xi_{old}}(s') - 1} \exp{\left( - \left( \frac{x}{\beta_{\zeta_{old}}(s')} \right)^{\alpha_{\xi_{old}}(s')} \right)}
\end{align}
is the probability density function (PDF) of the approximated weibull distribution with parameter $\alpha_{\xi_{old}}(s')$ and $\beta_{\zeta_{old}}(s')$.

However, the variance of the ratio $\frac{p_{\alpha_{old}(s'), \beta_{old}(s')}\left( \frac{q_{\psi_{old}, 1-\epsilon_0}(s) - c(s, a)}{\gamma} \right)}{\gamma \cdot p_{\alpha_{old}(s'), \beta_{old}(s')}\left( q_{\psi_{old}, 1-\epsilon_0}(s) \right)}$ is large with actual samples. Hence, for  implementation, using the Taylor series $\log x = (x-1)+\frac{1}{2}(x-1)^2+\cdots$ around $x=1$, we smooth the weight as
\begin{align}
    &\frac{p_{\alpha_{old}(s'), \beta_{old}(s')}\left( \frac{q_{\psi_{old}, 1-\epsilon_0}(s) - c(s, a)}{\gamma} \right)}{\gamma \cdot p_{\alpha_{old}(s'), \beta_{old}(s')}\left( q_{\psi_{old}, 1-\epsilon_0}(s) \right)} \nonumber \\
    &\approx \left( 1 + \text{clip}\left( \log{\frac{p_{\alpha_{old}(s'), \beta_{old}(s')}\left( \frac{q_{\psi_{old}, 1-\epsilon_0}(s) - c(s, a)}{\gamma} \right)}{\gamma \cdot p_{\alpha_{old}(s'), \beta_{old}(s')}\left( q_{\psi_{old}, 1-\epsilon_0}(s) \right)}}, -c_{\text{clip}}, c_{\text{clip}} \right) \right)
    \label{appendix:eq:probability_ratio_approx}
\end{align}
and apply \eqref{appendix:eq:probability_ratio_approx} into \eqref{appendix:eq:advantage_quantile_approx1} so the actual advantage estimate we used is
\begin{align}
    &\hat{A}_{1-\epsilon_0}(s, a, s') \nonumber \\
    &= \left( 1 + \text{clip}\left( \log{\frac{p_{\alpha_{old}(s'), \beta_{old}(s')}\left( \frac{q_{\psi_{old}, 1-\epsilon_0}(s) - c(s, a)}{\gamma} \right)}{\gamma \cdot p_{\alpha_{old}(s'), \beta_{old}(s')}\left( q_{\psi_{old}, 1-\epsilon_0}(s) \right)}}, -c_{\text{clip}}, c_{\text{clip}} \right) \right) \nonumber \\
    &~~~~~ \times \left\{ c(s,a) + \gamma q_{\psi_{old}, 1-\epsilon_0}(s') - q_{\psi_{old}, 1-\epsilon_0}(s) \right\} \label{appendix:eq:advantage_quantile_approx2}
\end{align}

\subsection{Lagrange Multiplier for Quantile Constraint} \label{appendix:lagrange_multiplier}
We also need the Lagrange multiplier $\lambda$ in \eqref{appendix:eq:advantage_total} for the policy loss function to satisfy the quantile constraint. The Lagrange multiplier is updated to minimize the Lagrange form of \eqref{appendix:problem:quantile} $L_{quant}(\pi, \lambda) := V^\pi(s_0) - \lambda \left( q^\pi_{1 - \epsilon}(s_0) - d_{th} \right)$ to satisfy the quantile constraint $q^\pi_{1-\epsilon}(s_0) \leq d_{th}$. Thus, the update rule of the Lagrange multiplier $\lambda$ is $\lambda \leftarrow \max \{ \lambda + \eta (q^\pi_{1-\epsilon}(s_0) - d_{th}), 0\}$, where $\eta$ is a learning rate. To constrain the outage probability of the sum of costs in a trajectory, we collect 100 trajectories, and compute the $(1-\epsilon)$-quantile of them to replace $q^\pi_{1-\epsilon}(s_0)$ in the Lagrange update rule.

\subsection{Hyper-parameters} \label{appendix:hyperparameters}

For the quantile network, we used $n_q = 25$, and $u_i = \frac{2i-1}{2 n_q} = \frac{2i-1}{50}$, $i = 1, \ldots, n_q (= 25)$ for $\{ u_1, u_2, \ldots, u_{n_q} \}$. For training the Weibull network, we used the rightmost $k$-quantiles among $n_q$, and the $k$ is $8$ ($\approx$ 30\% of $n_q$ quantiles). The discount factor $\gamma$ is $0.99$, and all learning rates for Adam optimizers for all parameters are $10^{-4}$. The $\eta$ for updating the Lagrange multiplier is $0.1$. The $r_{clip}$ in \eqref{appendix:eq:clip_importance_ratio_advantage} for updating the policy parameter is $0.1$, and the $c_{clip}$ in \eqref{appendix:eq:advantage_quantile_approx2} for computing $\tilde{c}^{\pi'}_u(s,a)$ is $0.5$. Since PPO is an on-policy algorithm, it first collects $12000$ samples by interaction with its environment. Then, it reshapes these samples by $120$ sub-trajectories of length $100$, and uses all sub-trajectories to update its parameters (this is because we use LSTM for the feature extraction network). This update is performed $8$ times for the same collected sub-trajectories, then we remove them and collect new $12000$ samples by interaction with the  environment. This procedure is performed until the maximum training timesteps $5 \times 10^6$.

\end{document}